\DeclareMathOperator*{\argmin}{arg\,min}
\newtheorem{theorem}{Theorem}
\newtheorem{lemma}[theorem]{Lemma}
\newtheorem{definition}[theorem]{Definition}
\newtheorem{proposition}[theorem]{Proposition}
\newtheorem{corollary}[theorem]{Corollary}
\newtheorem{remark}[theorem]{Remark}
\newtheorem{fact}[theorem]{Fact}
\newcommand{\algonameCensUCB}{\textsc{PBM-UCB}}
\newcommand{\algonamePIE}{\textsc{PBM-PIE}}
\newcommand{\algonameTS}{\textsc{PBM-TS}}
\newcommand{\algonameRBA}{\textsc{RBA-KL-UCB}}
\title{Multiple-Play Bandits in the Position-Based Model}
\date{}
\author{Paul Lagrée\\
	Université Paris Sud, Université Paris Saclay \\ 
  \texttt{paul.lagree@u-psud.fr} \\
  Claire Vernade, Olivier Cappé \\
  Université Paris Saclay, Télécom ParisTech, CNRS\\
  \texttt{claire.vernade@telecom-paristech.fr} \\
  \texttt{cappe@enst.fr} \\
}
\begin{document}

\maketitle

\begin{abstract}
  Sequentially learning to place items in multi-position displays or lists is a task 
  that can be cast into the multiple-play
  semi-bandit setting. However, a major concern in this context is when the 
  system cannot decide whether the user feedback for each item is actually 
  exploitable. Indeed, much of the content may have been simply ignored by the
  user. The present work proposes to exploit available information 
  regarding the
  display position bias under the so-called Position-based click model
  (PBM). We first discuss how this model differs from the Cascade model and its
  variants considered in several recent works on multiple-play bandits. We then
  provide a novel regret lower bound for this model as well as computationally
  efficient algorithms that display good empirical and theoretical performance.
\end{abstract}

\section{Introduction}
\label{sec:introduction}

During their browsing experience, users are constantly provided -- without 
having asked for it -- with clickable content spread over web pages. 
While users interact on a website, they send clicks to the system for a very 
limited selection of the clickable content. Hence, they let every unclicked 
item with an 
equivocal answer: the system does not know whether the content was really 
deemed irrelevant or simply ignored. 
In contrast, in traditional multi-armed bandit (MAB) models,  the learner makes actions and observes at each round 
the reward corresponding to the chosen action. In the so-called multiple play semi-bandit 
setting, when users are presented with $L$ items, they are assumed to provide 
feedback for each of those items. 

Several variants of this basic setting have been considered in the bandit
literature. The necessity for the user to provide feedback for each item has
been called into question in the context of the so-called \emph{Cascade Model}
\cite{craswell2008experimental,kveton2015cascading,combes2015learning} and its
extensions such as the \emph{Dependent Click Model} (DCM) 
\cite{kveton2016dcm}.  Both models are particularly suited for search contexts,
where the user is assumed to be looking for something relative to a query.
Consequently, the learner expects explicit feedback: in the Cascade Model each
valid observation sequence must be either all zeros or terminated by a one, such
that no ambiguity is left on the evaluation of the presented items, while
multiple clicks are allowed in the DCM.

In the Cascade Model, the positions of the items are not taken into account in
the reward process because the learner is assumed to obtain a click as long as
the interesting item belongs to the list. Indeed, there are even clear
indications that the optimal strategy in a learning context consists in showing the most relevant
items at the end of the list in order to maximize the amount of observed
feedback \cite{kveton2015cascading} -- which is counter-intuitive in
recommendation tasks.

To overcome these limitations, \cite{combes2015learning} introduces weights --
to be defined by the learner -- that are attributed to positions in the list,
with a click on position $l \in \{1,\dots,L\}$ providing a reward $w_l$, where
the sequence $(w_l)_l$ is decreasing to enforce the ranking behavior. However,
no rule is given for setting the weights $(w_l)_l$ that control the order of importance
of the positions. The authors propose an algorithm based on KL-UCB
\cite{garivier11} and prove a lower bound on the regret as well as an
asymptotically optimal upper bound.

Another way to address the limitations of the Cascade Model is to consider the
DCM as in \cite{kveton2016dcm}. Here, examination probabilities $v_l$ are
introduced for each position $l$: conditionally on the event that the user
effectively scanned the list up to position $l$, he/she can choose to leave with
probability $v_l$ and in that case, the learner is aware of his/her departure.
This framework naturally induces the necessity to rank the items in the
optimal order.

All previous models assume that a portion of the recommendation list is
explicitly examined by the user and hence that the learning algorithm
eventually has access to rewards corresponding to the unbiased user's
evaluation of each item. In contrast, we propose to analyze multiple-play
bandits in the Position-based model (PBM) \cite{chuklin2015click}. In the PBM,
each position in the list is also endowed with a binary 
\emph{Examination variable}
\cite{craswell2008experimental,richardson2007predicting} which is equal to one
only when the user paid attention to the corresponding item. But this variable,
that is independent of the user's evaluation of the item, is not observable. It
allows to model situations where the user is not explicitly looking for
specific content, as in typical recommendation scenarios.

Compared to variants of the Cascade model, the PBM is challenging due to the
censoring induced by the examination variables: the learning algorithm observes
actual clicks but non-clicks are always ambiguous. Thus, combining observations
made at different positions becomes a non-trivial statistical task. Some
preliminary ideas on how to address this issue appear in the supplementary
material of \cite{komiyama2015optimal}. In this work, we provide a complete
statistical study of stochastic multiple-play bandits with semi-bandit feedback
in the PBM.

We introduce the model and notations in Section~\ref{sec:model} and provide the
lower bound on the regret in Section~\ref{sec:lower_bound}. In
Section~\ref{sec:algo}, we present two optimistic algorithms as well as
a theoretical analysis of their regret. In the last section dedicated to
experiments, those policies are compared to several benchmarks on both synthetic
and realistic data.


\section{Setting and Parameter Estimation}
\label{sec:model}
%

We consider the binary stochastic bandit model with $K$ Bernoulli-distributed
arms. The model parameters are the arm expectations
$\theta=(\theta_1,\theta_2,\ldots,\theta_K)$, which lie in $\Theta = (0,1)^K$.
We will denote by $\mathcal{B}(\theta)$ the Bernoulli
distribution with parameter $\theta$ and by
$d(p,q):=p\log(p/q)+(1-p)\log((1-p)/(1-q))$ the Kullback-Leibler divergence
from $\mathcal{B}(p)$ to $\mathcal{B}(q)$. At each round $t$, the learner
selects a list of $L$ arms --~referred to as an \emph{action}~-- chosen among
the $K$ arms which are indexed by $k \in \{1,\dots,K\}$. The set of actions is
denoted by $\mathcal{A}$ and thus contains $K!/(K-L)!$ ordered lists;
the action selected at time $t$ will be denoted
$A(t)=(A_1(t),\dots,A_L(t))$.

The PBM is characterized by examination parameters
$(\kappa_l)_{1\leq l\leq L}$, where $\kappa_l$ is the probability that the user
effectively observes the item in position $l$ \cite{chuklin2015click}. At round
$t$, the selection $A(t)$ is shown to the user and the learner observes the
complete feedback -- as in semi-bandit models -- but the observation at
position $l$, $Z_l(t)$, is \emph{censored} being the product of two independent Bernoulli
variables $Y_l(t)$ and $X_l(t)$, where $Y_l(t)\sim \mathcal{B}(\kappa_l)$ is
non null when the user considered the item in position $l$ -- which is unknown
to the learner -- and $X_l(t)\sim \mathcal{B}(\theta_{A_l(t)})$ represents the
actual user feedback to the item shown in position $l$. The learner receives a
reward $r_{A(t)} = \sum_{l=1}^L Z_l(t)$, where
$Z(t)=(X_1(t)Y_1(t),\ldots,X_L(t)Y_L(t))$ denotes the vector of censored
observations at step $t$.

In the following, we will assume, without loss of generality, that
$\theta_1>\dots>\theta_K$ and $\kappa_1 >\dots>\kappa_L>0$, in order to simplify
the notations. The fact that the sequences $(\theta_l)_l$ and $(\kappa_l)_l$ are
decreasing implies
that the optimal list is $a^* = (1, \ldots, L)$. Denoting by
$R(T) = \sum_{t=1}^T r_{a^*} - r_{A(t)}$ the regret incurred by the learner
up to time $T$, one has
\begin{equation}
\mathds{E}[R(T)] = \sum_{t=1}^{T}\sum_{l=1}^{L}\kappa_l
(\theta_{a^*_l} - \mathds{E}[\theta_{A_l(t)}])
= \sum_{a\in \mathcal{A}}
\left(\mu^*-\mu_a \right) \mathds{E}[N_a(T)]= \sum_{a\in \mathcal{A}}
\Delta_a \mathds{E}[N_a(T)], 
\label{eq:exp_regret}
\end{equation}
where $\mu_a = \sum_{l=1}^L \kappa_l \theta_{a_l}$ is the expected reward of
action $a$, $\mu^* =\mu_{a^*}$ is the best possible reward in average,
$\Delta_a=\mu^* -\mu_{a}$ the expected gap to optimality, and, $N_a(T) =
\sum_{t=1}^T \mathds{1} \{A(t)=a\}$ is the number of times action $a$ has been 
chosen
up to time $T$.

In the following, we assume that the examination parameters $(\kappa_l)_{1\leq
l \leq L}$ are known to the learner. These can be estimated from
historical data \cite{chuklin2015click}, using, for instance, the EM algorithm
\cite{dempster1977maximum} (see also Section~\ref{sec:experiments}). In most
scenarios, it is realistic to assume that
the content (e.g., ads in on-line advertising) is changing much more frequently
than the layout (web page design for instance) making it possible to have a
good knowledge of the click-through biases associated with the display
positions.

The main statistical challenge associated with the PBM is that
one needs to obtain estimates and confidence bounds for the components
$\theta_k$ of $\theta$ from the available
$\mathcal{B}(\kappa_l\theta_k)$-distributed draws corresponding to occurrences
of arm $k$ at various positions $l=1,\dots,L$ in the list. To this aim, we
define the following statistics: $S_{k,l}(t)=\sum_{s=1}^{t-1} Z_l(s) \mathds{1}
\{A_l(s)=k\}$, $S_{k}(t)=\sum_{l=1}^L S_{k,l}(t)$, $N_{k,l}(t)=\sum_{s=1}^{t-1}
\mathds{1} \{A_l(s)=k\}$, $N_k(t)=\sum_{l=1}^L N_{k,l}(t)$. We further require 
bias-corrected versions of the counts $\tilde{N}_{k,l}(t)=\sum_{s=1}^{t-1}
\kappa_l \mathds{1} \{A_l(s)=k\}$ and $\tilde{N}_{k}(t)=\sum_{l=1}^L
\tilde{N}_{k,l}(t)$. 

A time $t$, and \emph{conditionally on the past actions $A(1)$ up to
$A(t-1)$}, the Fisher information for
$\theta_k$ is given by $I(\theta_k) = \sum_{l=1}^L N_{k,l}(t) \kappa_l /
(\theta_k (1-\kappa_l \theta_k))$ (see Appendix~\ref{sec:appendix-estim}). We cannot however estimate $\theta_k$ 
using the maximum likelihood estimator since it has no closed form expression.
Interestingly though, the simple pooled linear estimator
\begin{equation}
    \hat{\theta}_k(t) = S_{k}(t)/\tilde{N}_{k}(t),
    \label{eq:theta_estimator}
\end{equation}
considered in the supplementary material to \cite{komiyama2015optimal}, is
unbiased and has a (conditional) variance of
$\upsilon(\theta_k) = (\sum_{l=1}^L N_{k,l}(t)\kappa_l
\theta_k(1-\kappa_l\theta_k))/(\sum_{l=1}^L N_{k,l}(t) \kappa_l)^2$, which is
close to optimal given the Cramér-Rao lower bound. Indeed, 
$\upsilon(\theta_k)I(\theta_k)$ is recognized as a ratio of a
weighted arithmetic mean to the corresponding weighted harmonic mean, which is known to be larger than
one,
but is upper
bounded by $1/(1-\theta_k)$, irrespectively of the values of the
$\kappa_l$'s. Hence, if, for instance, we can assume that all $\theta_k$'s are
smaller than one half, the loss with respect to the best unbiased estimator is
no more than a factor of two for the variance.
Note that
despite its simplicity, $\hat{\theta}_k(t)$ cannot be written as a simple sum
of conditionally independent increments divided by the number of terms and will
thus require specific concentration results.

It can be checked that when $\theta_k$ gets very close to one,
$\hat{\theta}_k(t)$ is no longer close to optimal. This observation also has a
Bayesian counterpart that will be discussed in
Section~\ref{sec:experiments}. Nevertheless, it is always preferable to the
``position-debiased'' estimator $(\sum_{l=1}^L S_{k,l}(t)/\kappa_l)/N_{k,l}(t)$
which gets very unreliable as soon as one of the $\kappa_l$'s gets
very small.


%

\section{Lower Bound on the Regret}
\label{sec:lower_bound}
%


In this section, we consider the fundamental asymptotic limits of learning 
performance for online algorithms under the PBM. These cannot be deduced from 
earlier general results, such as those of
\cite{graves1997asymptotically,combes2015combinatorial}, due to the censoring
in the feedback associated to each action. We 
detail a simple and general proof scheme -- using the results of 
\cite{kaufmann2015complexity} -- that applies to the PBM, as well as to more 
general models.

Lower bounds on the regret rely on changes of 
measure: the question is how much can we mistake the true parameters of the 
problem for others, when observing successive arms? With this in mind,  we will 
subscript all expectations and probabilities by the parameter value and 
indicate explicitly that the quantities $\mu_a, a^*, \mu^*, \Delta_a$, 
introduced in Section~\ref{sec:model}, also depend on the parameter. For ease 
of notation, we will still assume that $\theta$ is such that $a^*(\theta) = (1, 
\dots, L$).

\subsection{Existing results for multiple-play bandit problems}
\label{subsec:lbexisting}

Lower bounds on the regret will be proved for 
\emph{uniformly efficient} algorithms, in the sense of 
\cite{lai1985asymptotically}:

\begin{definition}\label{def:uniformly_eff}
	An algorithm is said to be \emph{uniformly efficient} if for any 
    bandit model parameterized by $\theta$ and for all $\alpha\in(0,1]$, its
	expected regret after $T$ rounds is such that 
	$\mathds{E}_\theta R(T) = o(T^\alpha)$.
\end{definition}

For the multiple-play MAB, \cite{anantharam1987asymptotically} obtained the
following bound

\begin{align}
    \liminf_{T\rightarrow \infty}\frac{\mathds{E}_\theta R(T)}{\log(T)} \geq 
    \sum_{k = L+1}^K\frac{\theta_L-\theta_k}{d(\theta_k,\theta_L)}.
    \label{eq:lb_original}
\end{align}

For the ``learning to rank'' problem where rewards follow the weighted Cascade 
Model with decreasing weights $\left(w_l\right)_{l=1,\ldots,L}$,
\cite{combes2015learning} derived the following bound
 
\[\liminf_{T\rightarrow \infty} \frac{\mathds{E}_\theta R(T)}{\log T} \geq 
w_L \sum_{k=L+1}^{K} 
\frac{\theta_L-\theta_k}{d(\theta_k,\theta_L)}.
\]

Perhaps surprisingly, this lower bound does not show any additional term 
corresponding to the complexity of ranking the $L$ optimal arms. Indeed, the 
errors are still asymptotically dominated by the need to discriminate 
irrelevant arms $(\theta_k)_{k>L}$ from the worst of the relevant arms,
that is, $\theta_L$.

\subsection{Lower bound step by step}


\paragraph*{Step 1: Computing the expected log-likelihood ratio.}

Denoting by  $\mathcal{F}_{s-1}$ the $\sigma$-algebra generated by 
the past actions and observations, we define the log-likelihood ratio for the 
two values $\theta$ and $\lambda$ of the parameters by
\begin{equation}
  \ell(t) := \sum_{s=1}^t \log \frac{p(Z(s); \theta~|~\mathcal{F}_{s-1})}
{p(Z(s); \lambda~|~\mathcal{F}_{s-1})}.
  \label{eq:log-likelihood-ratio}
\end{equation}

\begin{lemma}\label{lem:loglikelihood}
	For each position $l$ and each item $k$, define the local amount of 
	information by
	$$I_l(\theta_k,\lambda_k):= 
	\mathds{E}_{\theta}\left[\left.\log\frac{p(Z_l(t);\theta)}
	{p(Z_l(t); \lambda)}\right| A_l(t)=k\right], $$
	and its cumulated sum over the $L$ positions by  		
	$I_a(\theta,\lambda):=\sum_{l=1}^{L} \sum_{k=1}^K
    \mathds{1}\{a_l = k\}I_l(\theta_k,\lambda_k)$. 
	The expected log-likelihood ratio is given by
	\begin{equation}
	\label{eq:llr_exp}
    \mathds{E}_{\theta}[\ell(t)] = \sum_{a\in \mathcal{A}}
	I_a(\theta,\lambda)\mathds{E}_\theta[N_a(t)].
	\end{equation}
\end{lemma}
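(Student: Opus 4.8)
The plan is to compute the expected log-likelihood ratio by decomposing the sum over rounds $s$ according to which action was played at each round, then using the tower property of conditional expectation to reduce everything to the per-position information quantities $I_l(\theta_k,\lambda_k)$.

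\medskip

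\noindent\textbf{Step-by-step approach.} First I would note that, conditionally on $\mathcal{F}_{s-1}$, the action $A(s)$ is determined (the algorithm's choice depends only on the past) and the censored observations $Z_1(s),\dots,Z_L(s)$ across the $L$ positions are independent, since each $Z_l(s)=X_l(s)Y_l(s)$ depends on independent Bernoulli draws. Hence the per-round log-likelihood ratio factorizes across positions:
\begin{equation*}
  \log \frac{p(Z(s);\theta\mid\mathcal{F}_{s-1})}{p(Z(s);\lambda\mid\mathcal{F}_{s-1})}
  = \sum_{l=1}^L \log \frac{p(Z_l(s);\theta)}{p(Z_l(s);\lambda)}.
\end{equation*}
Next I would take the expectation $\mathds{E}_\theta[\,\cdot\mid\mathcal{F}_{s-1}]$ of this quantity. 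Because $A(s)$ is $\mathcal{F}_{s-1}$-measurable, the conditional expectation of the $l$-th term is exactly $I_l(\theta_{A_l(s)},\lambda_{A_l(s)})$ by the definition of the local information, and summing over $l$ gives $I_{A(s)}(\theta,\lambda)$. I would then take the full expectation, sum over $s=1,\dots,t$, and reorganize the sum over rounds into a sum over actions $a\in\mathcal{A}$ by grouping rounds where $A(s)=a$; the indicator $\mathds{1}\{A(s)=a\}$ summed over $s$ produces exactly $N_a(t)$, yielding $\mathds{E}_\theta[\ell(t)] = \sum_{a\in\mathcal{A}} I_a(\theta,\lambda)\,\mathds{E}_\theta[N_a(t)]$.

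\medskip

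\noindent\textbf{Main obstacle.} The step that requires the most care is justifying the conditional-expectation identity rigorously in the adaptive (non-i.i.d.) setting. The quantity $\ell(t)$ is a sum of a \emph{random} number of terms whose summands depend on the history, so I cannot simply treat it as a sum of i.i.d.\ increments; I must verify that $\mathds{E}_\theta[\log(p(Z_l(s);\theta)/p(Z_l(s);\lambda))\mid\mathcal{F}_{s-1}] = I_l(\theta_{A_l(s)},\lambda_{A_l(s)})$ holds conditionally, exploiting that $A_l(s)$ is $\mathcal{F}_{s-1}$-measurable while $Z_l(s)$ is drawn afresh given the chosen arm. This is essentially a Wald-type argument: the tower property collapses the conditioning cleanly precisely because the per-position information $I_l(\theta_k,\lambda_k)$ is defined as a conditional expectation given $A_l(t)=k$, matching the structure of each increment. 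Once this conditional identity is in place, the remaining rearrangement into $\sum_a I_a(\theta,\lambda)\mathds{E}_\theta[N_a(t)]$ is a routine interchange of finite sums and an application of linearity of expectation.
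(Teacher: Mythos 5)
Your proposal is correct and follows essentially the same route as the paper's proof: factorize the per-round log-likelihood ratio across the $L$ positions, identify the conditional expectation of each term with $I_l(\theta_{A_l(s)},\lambda_{A_l(s)})$ using the measurability of the action with respect to the past, and regroup the sum over rounds by actions to produce $N_a(t)$. The only cosmetic difference is that you condition round by round on $\mathcal{F}_{s-1}$ via the tower property while the paper conditions on the whole action trajectory $A(1),\dots,A(t)$ at once; the two are equivalent here.
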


The next proposition is adapted from Theorem~17 in Appendix B of 
\cite{kaufmann2015complexity} and provides 
a lower bound on the expected log-likelihood ratio.

\begin{proposition}\label{prop:constraintLB}
	Let $B(\theta):= \{ \lambda \in \Theta \left| 
	\forall l\leq L, \theta_l=\lambda_l \text{ and }  
	\mu^*(\theta)<\mu^*(\lambda) \right. \}$ 
	be the set of changes of measure that improve over $\theta$ 
	without modifying the optimal arms.
	Assuming that the expectation of the log-likelihood ratio may be written as 
	in~\eqref{eq:llr_exp}, for any uniformly efficient algorithm one has
	$$ \forall \lambda \in B(\theta), \hspace{0.5cm} 
	\liminf_{T\rightarrow \infty} \dfrac{\sum_{a\in 
			\mathcal{A}}I_a(\theta,\lambda)\mathds{E}_\theta[N_a(T)]}
	{\log (T)} \geq 1.
	$$
\end{proposition}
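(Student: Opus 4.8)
The plan is to derive Proposition~\ref{prop:constraintLB} from the generic change-of-measure inequality of \cite{kaufmann2015complexity} (their Theorem~17), which relates the expected log-likelihood ratio between two bandit models to the regret of any uniformly efficient algorithm. First I would fix an arbitrary $\lambda\in B(\theta)$. Because $\lambda$ agrees with $\theta$ on the first $L$ coordinates, the two models share the same optimal arms $a^*$; however, by definition of $B(\theta)$ we have $\mu^*(\theta)<\mu^*(\lambda)$, so $\lambda$ strictly improves the best achievable reward. The key observation is that under $\lambda$ the list $a^*=(1,\dots,L)$ is \emph{not} optimal, so a uniformly efficient algorithm (which must have sub-polynomial regret under \emph{both} $\theta$ and $\lambda$) is forced to play $a^*$ rarely under $\lambda$ while playing it almost always under $\theta$.

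Next I would invoke the transportation lemma of \cite{kaufmann2015complexity}, which states that for any $\mathcal{F}_T$-measurable random variable $Z\in[0,1]$,
\begin{equation*}
\mathds{E}_\theta[\ell(T)] \geq \KL\bigl(\mathds{E}_\theta[Z],\mathds{E}_\lambda[Z]\bigr),
\end{equation*}
where $\KL(p,q)$ is the binary relative entropy. The natural choice is $Z=N_{a^*}(T)/T$, the empirical frequency of the optimal list. I would then control both arguments of $\KL$ using the uniform efficiency assumption: under $\theta$ the regret is $o(T^\alpha)$, which via the regret decomposition~\eqref{eq:exp_regret} forces $\mathds{E}_\theta[T-N_{a^*}(T)]=o(T^\alpha)$, so $\mathds{E}_\theta[Z]\to 1$; symmetrically, since $a^*$ is suboptimal under $\lambda$ with a strictly positive gap, the $o(T^\alpha)$ regret bound under $\lambda$ forces $\mathds{E}_\lambda[N_{a^*}(T)]=o(T^\alpha)$, so $\mathds{E}_\lambda[Z]\to 0$. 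Feeding these into the relative-entropy term and using the standard asymptotic $\KL(p,q)\sim (1-p)\log\frac{1}{q}$ as $p\to 1,q\to 0$, one extracts $\liminf_{T\to\infty}\mathds{E}_\theta[\ell(T)]/\log(T)\geq 1$.

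Finally I would combine this with Lemma~\ref{lem:loglikelihood}, which identifies $\mathds{E}_\theta[\ell(T)]=\sum_{a\in\mathcal{A}}I_a(\theta,\lambda)\mathds{E}_\theta[N_a(T)]$. Substituting this expression and dividing by $\log(T)$ yields exactly the claimed inequality, valid for every $\lambda\in B(\theta)$.

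The main obstacle I anticipate is the lower-bounding of $\KL(\mathds{E}_\theta[Z],\mathds{E}_\lambda[Z])$ with the correct $\log(T)$ rate: one must argue carefully that $\mathds{E}_\lambda[Z]$ decays no faster than polynomially (so that $\log\frac{1}{\mathds{E}_\lambda[Z]}$ grows like $\log T$ rather than faster, which would give a trivial bound), and that $\mathds{E}_\theta[Z]\to 1$ fast enough that the prefactor $(1-\mathds{E}_\theta[Z])$ does not erode the rate. This requires applying the uniform-efficiency hypothesis for \emph{all} $\alpha\in(0,1]$ and then taking $\alpha\to 0$, the delicate $\liminf$ manipulation at the heart of the Lai--Robbins-style argument. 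The change-of-measure validity itself, i.e.\ that~\eqref{eq:llr_exp} holds and that the censored PBM observations admit the stated likelihood, is guaranteed by the hypothesis of the proposition and by Lemma~\ref{lem:loglikelihood}, so the censoring does not introduce additional difficulty at this stage.
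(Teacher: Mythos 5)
Your overall strategy is the same as the paper's: the paper also derives this proposition from the change-of-measure machinery of Kaufmann et al.\ (their Theorem~17), via the transportation inequality $\sum_a I_a(\theta,\lambda)\mathds{E}_\theta[N_a(T)] \geq d(\mathds{P}_\theta(\mathcal{E}),\mathds{P}_\lambda(\mathcal{E}))$ (stated for events in the paper's appendix, equivalently for $[0,1]$-valued variables such as your $Z=N_{a^*}(T)/T$), combined with uniform efficiency under both $\theta$ and $\lambda$ and the limit $\alpha\to 0$. However, one step as written would make your bound collapse. You claim $d(p,q)\sim(1-p)\log\frac1q$ as $p\to1$, $q\to0$, and accordingly worry about the prefactor $(1-\mathds{E}_\theta[Z])$. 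That is the wrong branch of the divergence: since
\begin{equation*}
d(p,q)=p\log\frac{p}{q}+(1-p)\log\frac{1-p}{1-q}\;\geq\; p\log\frac1q-\log 2,
\end{equation*}
the dominant term is $p\log\frac1q\sim\log\frac1q$, not $(1-p)\log\frac1q$. With your own estimates $1-\mathds{E}_\theta[Z]=o(T^{\alpha-1})$ and $\mathds{E}_\lambda[Z]=o(T^{\alpha-1})$, the quantity $(1-p)\log\frac1q$ tends to $0$, so the inequality as you state it yields nothing; the correct term gives $p\log\frac1q\geq(1-o(1))(1-\alpha)\log T$, hence $\liminf\geq1-\alpha$ for every $\alpha\in(0,1]$ and therefore $\geq1$. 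Relatedly, your concern that $\mathds{E}_\lambda[Z]$ must decay ``no faster than polynomially'' is backwards: a faster decay only strengthens the lower bound through $\log\frac1q$; what uniform efficiency under $\lambda$ supplies is the needed \emph{upper} bound $\mathds{E}_\lambda[Z]=o(T^{\alpha-1})$, using that $a^*$ has a strictly positive gap under $\lambda$ (which holds because $\mu_{a^*}(\lambda)=\mu_{a^*}(\theta)=\mu^*(\theta)<\mu^*(\lambda)$). With these two corrections your argument coincides with the one the paper invokes.
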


\paragraph*{Step 2: Variational form of the lower bound.}
We are now ready to obtain the lower bound in a form similar to that originally 
given by \cite{graves1997asymptotically}.

\begin{theorem}\label{th:LB_opt}
	The expected regret of any uniformly 
	efficient algorithm satisfies
        \begin{equation*}	
        \liminf_{T\rightarrow \infty} \frac{\mathds{E}_\theta R(T)}{\log T} 
	\geq  
	f(\theta) \, ,
	\quad \text{where } f(\theta) = \inf_{c\succeq 0}\sum_{a\in \mathcal{A}} 
	\Delta_{a}(\theta) c_a \, , 
	\quad \text{s.t. }  \inf_{\lambda \in B(\theta)} \sum_{a\in 
		\mathcal{A}} I_a(\theta,\lambda)c_a \geq 1.
       \end{equation*}	
\end{theorem}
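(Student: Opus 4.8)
The plan is to follow the variational recipe of \cite{graves1997asymptotically}: reformulate the per-instance constraints of Proposition~\ref{prop:constraintLB} as feasibility for the optimization problem defining $f(\theta)$, and recognize the normalized regret as the objective of that same problem. Concretely, I would introduce the normalized allocation $c_a(T) := \mathds{E}_\theta[N_a(T)]/\log T$ for each $a \in \mathcal{A}$, which is nonnegative. Equation~\eqref{eq:exp_regret} then reads $\mathds{E}_\theta[R(T)]/\log T = \sum_{a \in \mathcal{A}} \Delta_a(\theta)\, c_a(T)$, so that the normalized regret is exactly the objective $\sum_a \Delta_a c_a$ evaluated at $c(T)$, while Proposition~\ref{prop:constraintLB} states that, for every fixed $\lambda \in B(\theta)$, $\liminf_T \sum_{a} I_a(\theta,\lambda) c_a(T) \geq 1$. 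It thus suffices to prove that $\liminf_T \sum_a \Delta_a c_a(T) \geq f(\theta)$.

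The core of the argument is a compactness step that turns the family of per-$\lambda$ asymptotic inequalities into feasibility of a single point. I would first dispose of the optimal action: since $(\theta_k)_k$ and $(\kappa_l)_l$ are strictly decreasing, $a^* = (1,\dots,L)$ is the unique maximizer, hence $\Delta_a(\theta) > 0$ for all $a \neq a^*$; moreover every $\lambda \in B(\theta)$ coincides with $\theta$ on the top $L$ arms, so that $I_{a^*}(\theta,\lambda) = \sum_{l\leq L} I_l(\theta_l,\lambda_l) = 0$ because each local term is a divergence between identical laws. Consequently $a^*$ contributes nothing to either the objective or the constraints, and all sums may be restricted to $a \neq a^*$. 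I would then extract a subsequence $(T_n)$ along which $\sum_a \Delta_a c_a(T_n)$ converges to $\ell^* := \liminf_T \sum_a \Delta_a c_a(T)$; if $\ell^* = +\infty$ the claim is trivial, so assume $\ell^* < \infty$. Then $\sum_{a \neq a^*}\Delta_a c_a(T_n)$ is bounded, and since each $\Delta_a > 0$ and $\mathcal{A}$ is finite, the vector $(c_a(T_n))_{a \neq a^*}$ lies in a compact set; pass to a further subsequence converging to some $c^* \succeq 0$ (setting $c_{a^*}^* = 0$).

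It remains to check that $c^*$ is feasible and attains the target value. For feasibility, fix $\lambda \in B(\theta)$: along the convergent sub-subsequence $\sum_a I_a(\theta,\lambda) c_a(T_n) \to \sum_a I_a(\theta,\lambda) c_a^*$, and since a subsequential limit is at least the full $\liminf$, this gives $\sum_a I_a(\theta,\lambda) c_a^* \geq \liminf_T \sum_a I_a(\theta,\lambda) c_a(T) \geq 1$. As this holds for every $\lambda$, we obtain $\inf_{\lambda \in B(\theta)} \sum_a I_a(\theta,\lambda) c_a^* \geq 1$, so $c^*$ is admissible for the program defining $f(\theta)$. By construction $\sum_a \Delta_a c_a^* = \ell^*$, whence $\ell^* \geq f(\theta)$, which is the desired bound.

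I expect the main obstacle to be exactly this exchange of limits: Proposition~\ref{prop:constraintLB} only provides, for each $\lambda$ \emph{separately}, a $\liminf$ lower bound on the normalized information, whereas the program defining $f(\theta)$ demands a single allocation $c$ satisfying all constraints $\lambda \in B(\theta)$ \emph{simultaneously}. The subsequence-plus-compactness device reconciles the two, and the only facts that make it go through cleanly are the finiteness of $\mathcal{A}$ together with the observation that $a^*$ is harmless in both the objective and the constraints. One caveat worth recording is that $B(\theta)$ may be a continuum, but this poses no difficulty since the constraint is established for each $\lambda$ individually before taking the infimum.
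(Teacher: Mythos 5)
Your proposal is correct and follows exactly the route the paper intends: identify $c_a(T)=\mathds{E}_\theta[N_a(T)]/\log T$ as a candidate feasible point, read the normalized regret as the objective via~\eqref{eq:exp_regret}, and invoke Proposition~\ref{prop:constraintLB} for the constraints. The paper dismisses this as a ``straightforward consequence'' without writing it out; your subsequence-and-compactness step (using $\Delta_a(\theta)>0$ for $a\neq a^*$, $I_{a^*}(\theta,\lambda)=0$ on $B(\theta)$, and finiteness of $\mathcal{A}$) is precisely the standard Graves--Lai argument needed to pass from the per-$\lambda$ $\liminf$ inequalities to a single feasible allocation, so it correctly fills the gap the paper leaves implicit.
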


Theorem~\ref{th:LB_opt} is a straightforward consequence of
Proposition~\ref{prop:constraintLB}, combined with the expression of the
expected regret given in~(\ref{eq:exp_regret}). The vector $c\in
\mathbb{R}_+^{|\mathcal{A}|}$, that satisfies the inequality
$\sum_{a\in \mathcal{A}} I_a(\theta,\lambda)c_a \geq 1$, represents the
feasible values of $\mathds{E}_\theta[N_a(T)]/\log(T)$. 

\paragraph*{Step 3: Relaxing the constraints.}
The bounds mentioned in Section~\ref{subsec:lbexisting} may
be recovered from Theorem~\ref{th:LB_opt} by considering only the changes of
measure that affect a single suboptimal arm.

\begin{corollary}\label{cor:LB_relaxed}
	\begin{equation*}
	f(\theta)\geq \inf_{c\succeq 0}\sum_{a\in \mathcal{A}} 
	\Delta_a(\theta) c_a \, , 
	\quad \text{s.t. } \sum_{a\in \mathcal{A}} \sum_{l=1}^L 
	\mathds{1}\{a_l=k\} I_l(\theta_k,\theta_L)c_a \geq 1 \, , \quad \forall k
    \in \{L+1, \ldots, K\}.
	\end{equation*} 
\end{corollary}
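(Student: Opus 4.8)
The plan is to show that the feasibility region of the relaxed program contains that of Theorem~\ref{th:LB_opt}. Since both programs minimize the same objective $\sum_{a\in\mathcal{A}}\Delta_a(\theta)c_a$ over $c\succeq 0$, enlarging the feasible set can only lower the infimum, so the relaxed value is at most $f(\theta)$, which is exactly the claimed inequality. Concretely, I would verify that every $c\succeq 0$ satisfying the constraint of Theorem~\ref{th:LB_opt} also satisfies each of the $K-L$ relaxed constraints indexed by $k\in\{L+1,\dots,K\}$. Note that the constraint $\inf_{\lambda\in B(\theta)}\sum_a I_a(\theta,\lambda)c_a\geq 1$ means precisely that $\sum_a I_a(\theta,\lambda)c_a\geq 1$ holds for \emph{every} individual $\lambda\in B(\theta)$, so it suffices to plug in well-chosen members of $B(\theta)$.

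First I would exhibit, for each suboptimal arm $k$, a convenient one-parameter family of changes of measure inside $B(\theta)$. Fix $k>L$ and $\epsilon>0$, and let $\lambda^{(k,\epsilon)}$ be the parameter that agrees with $\theta$ on every coordinate except the $k$-th, which is set to $\theta_L+\epsilon$. Because only a suboptimal coordinate is perturbed, the top-$L$ entries are untouched, so $\lambda^{(k,\epsilon)}_l=\theta_l$ for all $l\leq L$; and since $\theta_L+\epsilon>\theta_L$, arm $k$ now enters the optimal list, strictly raising the best achievable reward, i.e. $\mu^*(\theta)<\mu^*(\lambda^{(k,\epsilon)})$. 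Hence $\lambda^{(k,\epsilon)}\in B(\theta)$ for every $\epsilon>0$.

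The key simplification is that, for such single-coordinate perturbations, the cumulated information collapses to the contribution of arm $k$ alone. Recall $I_a(\theta,\lambda)=\sum_{l=1}^L\sum_{k'=1}^K\mathds{1}\{a_l=k'\}I_l(\theta_{k'},\lambda_{k'})$, where $I_l(\theta_{k'},\lambda_{k'})=d(\kappa_l\theta_{k'},\kappa_l\lambda_{k'})$ is a Bernoulli Kullback--Leibler divergence. For every $k'\neq k$ we have $\lambda^{(k,\epsilon)}_{k'}=\theta_{k'}$, so the corresponding term is the self-divergence $d(\kappa_l\theta_{k'},\kappa_l\theta_{k'})=0$ and vanishes. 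This leaves $I_a(\theta,\lambda^{(k,\epsilon)})=\sum_{l=1}^L\mathds{1}\{a_l=k\}\,I_l(\theta_k,\theta_L+\epsilon)$, and substituting $\lambda^{(k,\epsilon)}$ into the constraint of Theorem~\ref{th:LB_opt} yields $\sum_{a\in\mathcal{A}}\sum_{l=1}^L\mathds{1}\{a_l=k\}\,I_l(\theta_k,\theta_L+\epsilon)\,c_a\geq 1$.

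Finally I would let $\epsilon\downarrow 0$. The map $q\mapsto d(\kappa_l\theta_k,q)$ is continuous, so $I_l(\theta_k,\theta_L+\epsilon)\to I_l(\theta_k,\theta_L)$, and since the sum over $a$ and $l$ is finite the inequality passes to the limit, giving exactly the $k$-th relaxed constraint. Ranging over $k\in\{L+1,\dots,K\}$ shows the original feasible set is contained in the relaxed one, which establishes the corollary. The one point requiring care is precisely this limiting step: membership in $B(\theta)$ demands the \emph{strict} improvement $\mu^*(\theta)<\mu^*(\lambda)$, forcing $\lambda_k>\theta_L$ strictly, so the boundary value $\theta_L$ appearing in the relaxed constraint is never attained by an admissible $\lambda$ and must be reached through continuity rather than by a direct choice of change of measure.
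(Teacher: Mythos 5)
Your proposal is correct and follows essentially the same route as the paper: restricting the changes of measure to those that perturb a single suboptimal coordinate $\lambda_k>\theta_L$ (the paper's sets $B_k(\theta)$), observing that $I_a(\theta,\lambda)$ then collapses to the terms involving arm $k$, and passing to the limit $\lambda_k\downarrow\theta_L$ by monotonicity and continuity of $d(\kappa_l\theta_k,\cdot)$. Your explicit handling of the limiting step is in fact slightly more careful than the paper's one-line remark that ``$\lambda_k$ must be set to at least $\theta_L$.''
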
 

Corollary~\ref{cor:LB_relaxed} is obtained by restricting the constraint set 
$B(\theta)$  of Theorem~\ref{th:LB_opt} to $\cup_{k=L+1}^K B_k(\theta)$, where
$
B_k(\theta):= \left\lbrace \lambda \in \Theta | 
\forall j\neq k ,\theta_j=\lambda_j \text{ and } \mu^*(\theta)<\mu^*(\lambda) 
\right\rbrace. 
$ 

\subsection{Lower bound for the PBM}

\begin{theorem}\label{th:LB_apm}
	For the PBM, the following lower bound holds for any uniformly efficient 
	algorithm:
    \begin{equation}
	    \liminf_{T\rightarrow \infty} \frac{\mathds{E}_\theta R(T)}{\log T} \geq
	    \sum_{k=L+1}^K \min_{l \in \{1,\ldots,L\}} 
        \frac{\Delta_{v_{k,l}}(\theta)}{d(\kappa_l\theta_k,\kappa_l\theta_L)},
        \label{eq:LB_apm}
    \end{equation}
    where $v_{k,l} := (1,\dots,l-1,k,l,\dots,L-1)$.
\end{theorem}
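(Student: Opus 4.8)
The plan is to start from the relaxed variational bound of Corollary~\ref{cor:LB_relaxed} and evaluate it explicitly for the PBM. The first observation is that, since the censored observation at position $l$ is a product of two independent Bernoulli variables and hence itself $\mathcal{B}(\kappa_l\theta_{a_l})$-distributed, the local information reduces to $I_l(\theta_k,\theta_L)=d(\kappa_l\theta_k,\kappa_l\theta_L)$. Substituting this into Corollary~\ref{cor:LB_relaxed} leaves a finite linear program in $c\succeq 0$: minimize $\sum_{a}\Delta_a(\theta)c_a$ subject to $\sum_{a}\sum_{l}\mathds{1}\{a_l=k\}\,d(\kappa_l\theta_k,\kappa_l\theta_L)\,c_a\geq 1$ for every $k>L$. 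The goal is then to lower bound the value of this program by the right-hand side of \eqref{eq:LB_apm}.

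The next step is to notice that the constraints depend on $c$ only through the aggregated occupation numbers $n_{k,l}:=\sum_{a}\mathds{1}\{a_l=k\}c_a$, giving one constraint $\sum_{l}d(\kappa_l\theta_k,\kappa_l\theta_L)\,n_{k,l}\geq 1$ per suboptimal arm. If the objective could likewise be bounded below through these quantities, the program would split into $K-L$ independent one-dimensional subproblems. This is exactly what the key lemma provides: for every action $a$,
\[
\Delta_a(\theta)\;\geq\;\sum_{l\,:\,a_l>L}\Delta_{v_{a_l,l}}(\theta),
\]
i.e. the gap of any list is at least the sum, over the positions occupied by suboptimal arms, of the gaps of the elementary lists $v_{k,l}$. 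The action $v_{k,l}$ is singled out because, by the rearrangement inequality, it has the smallest gap among all lists placing arm $k$ at position $l$: once $k$ is fixed at position $l$, the reward is maximized by filling the remaining slots with the $L-1$ best arms $1,\dots,L-1$ matched against the decreasing $\kappa$'s, which yields precisely $(1,\dots,l-1,k,l,\dots,L-1)$.

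Granting this lemma, the reduction is immediate. Using the lemma together with $c_a\geq 0$ gives $\sum_{a}\Delta_a c_a\geq\sum_{k>L}\sum_{l}\Delta_{v_{k,l}}\,n_{k,l}$, and since the occupation numbers of any feasible $c$ satisfy the per-arm constraints, the value is at least $\inf\{\sum_{k,l}\Delta_{v_{k,l}}n_{k,l}:n\succeq 0,\ \sum_{l}d(\kappa_l\theta_k,\kappa_l\theta_L)n_{k,l}\geq 1\ \forall k>L\}$. This last program decouples over $k$, and each scalar subproblem is minimized by placing all the mass on the position realizing $\min_l \Delta_{v_{k,l}}/d(\kappa_l\theta_k,\kappa_l\theta_L)$, with optimal value equal to that minimum. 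Summing over $k$ yields exactly \eqref{eq:LB_apm}.

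The main obstacle is the superadditivity lemma. For a single displaced arm the bound $\Delta_a\geq\Delta_{v_{k,l}}$ is just the rearrangement step above, but the additive version over several suboptimal positions is more delicate. Writing $\Delta_a=\sum_{l}\kappa_l(\theta_l-\theta_{a_l})$ and $\Delta_{v_{k,l}}=\kappa_l(\theta_l-\theta_k)+\sum_{i>l}\kappa_i(\theta_i-\theta_{i-1})$, the diagonal terms $\kappa_l(\theta_l-\theta_{a_l})$ cancel, and with $S=\{l:a_l>L\}$ it remains to show that the sum of the non-displaced contributions $\sum_{l\notin S}\kappa_l(\theta_l-\theta_{a_l})$ and the nonnegative telescoping corrections $\sum_{l\in S}\sum_{i>l}\kappa_i(\theta_{i-1}-\theta_i)$ is nonnegative. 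Some non-displaced terms may be negative, since an optimal arm can sit above its home position only because a suboptimal arm displaced a better one earlier; the point is that the corrections, weighted by the decreasing $\kappa$'s, offset exactly these. I expect to close this by a summation-by-parts argument that re-expresses both sides through the nonnegative increments $\theta_{i-1}-\theta_i$ and the monotone weights $\kappa_i$, using that exactly $|S|$ optimal arms are absent from $a$; it is precisely the monotonicity of the $\kappa_l$ that makes the displaced mass cost at least as much as the corrections.
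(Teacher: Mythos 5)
Your overall route is sound and, at its core, coincides with the paper's: both arguments reduce the variational bound of Corollary~\ref{cor:LB_relaxed} to the elementary actions $v_{k,l}$ and both hinge on the same superadditivity inequality $\Delta_a(\theta)\geq\sum_{l:\,a_l>L}\Delta_{v_{a_l,l}}(\theta)$, which is exactly the paper's Lemma~\ref{lem:ineqLB1}. Where you differ is in how the linear program is handled once that inequality is available: you aggregate the $c_a$ into occupation numbers $n_{k,l}$, lower bound the objective through them, and let the program decouple into $K-L$ scalar problems whose values are the minima in~\eqref{eq:LB_apm}. The paper instead characterizes the support of an optimal solution (Proposition~\ref{prop:ca}) by an exchange argument: assuming some mass sits on an action $\alpha$ not of the form $v_{k,l_k}$, it transfers that mass onto the $v_{k,l_k}$ with weights $d(\kappa_{\pi_k(\alpha)}\theta_k,\kappa_{\pi_k(\alpha)}\theta_L)/d(\kappa_{l_k}\theta_k,\kappa_{l_k}\theta_L)$, checks feasibility is preserved (Lemma~\ref{lem:c_prime_lp}), and derives a strict decrease of the objective from Lemma~\ref{lem:ineqLB1}. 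Your aggregation is arguably cleaner, since it avoids characterizing the optimizer and only needs the non-strict form of the inequality; both yield the same bound.

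The genuine gap is that you do not prove the superadditivity lemma, and it is the technical heart of the theorem: you correctly observe that the non-displaced terms $\kappa_l(\theta_l-\theta_{a_l})$ for $l\notin S$ can be negative, so the single-arm rearrangement bound does not simply add up, but you only announce a summation-by-parts argument without carrying it out. The paper closes this in two steps that you may want to reproduce. First, replace $a$ by the intermediate action $v(\alpha)$ that keeps each suboptimal arm of $a$ in its position but fills the remaining slots with $1,\dots,L-|S|$ in order; by rearrangement $\Delta_a\geq\Delta_{v(\alpha)}$, and this eliminates the arbitrariness of the non-displaced entries. Second, expand $\Delta_{v(\alpha)}-\sum_{l\in S}\Delta_{v_{a_l,l}}$ as a residual $\mathcal{R}(\theta)$ that is a linear combination of the increments $\theta_{i-1}-\theta_i\geq 0$ with coefficients built from differences $\kappa_i-\kappa_{i+1}\geq 0$, and show $\mathcal{R}(\theta)\geq\kappa_L(\theta_{L-1}-\theta_L)>0$ by a cascading use of $\kappa_l\geq\kappa_{l+1}$. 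Note that even the paper only writes this out explicitly for two suboptimal arms in $a$; a complete argument requires an induction on $|S|$, so this step deserves more than the one-sentence sketch you give it.
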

\begin{proof}
	First, note that
	for the PBM one has $I_l(\theta_k,\lambda_k) =
	d(\kappa_l\theta_k,\kappa_l\lambda_k)$. To get the expression given in
	Theorem~\ref{th:LB_apm} from
	Corollary~\ref{cor:LB_relaxed}, we proceed as in \cite{combes2015learning} 
	showing
	that the optimal coefficients $(c_a)_{a\in\mathcal{A}}$ can be non-zero 
	only for the $K-L$ 	actions that put the suboptimal arm $k$ in the position $l$
    that reaches the minimum of
    $\Delta_{v_{k,l}}(\theta)/d(\kappa_l\theta_k,\kappa_l\theta_L)$. 
	Nevertheless, this position does not always coincide with $L$, the end of
    the displayed list, contrary to the case of \cite{combes2015learning} (see Appendix~\ref{sec:proof-lower-bound} for details).
\end{proof}

The discrete minimization that appears in the r.h.s.\ of Theorem~\ref{th:LB_apm}
corresponds to a fundamental trade-off in the PBM. When trying to
discriminate a suboptimal arm $k$ from the $L$ optimal ones, it is desirable to
put it higher in the list to obtain more information, as
$d(\kappa_l\theta_k,\kappa_l\theta_L)$ is an increasing function of
$\kappa_l$. On the other hand, the gap $\Delta_{v_{k,l}}(\theta)$ is also
increasing as $l$ gets closer to the top of the list. The fact that 
$d(\kappa_l\theta_k,\kappa_l\theta_L)$ is not linear in $\kappa_l$ (it is a 
strictly convex function of $\kappa_l$) renders the trade-off
non trivial. It is easily checked that when $(\theta_1 - \theta_L)$ is very
small, i.e. when all optimal arms are equivalent, the optimal exploratory 
position is $l=1$. In contrast, it is equal to $L$ when the gap
$(\theta_L -\theta_{L+1})$ becomes very small. Note that by
using  that for any suboptimal $a\in \mathcal{A}$, $\Delta_a(\theta)\geq 
\sum_{k=L+1}^K \sum_{l=1}^{L} \mathds{1}\{a_l = k\}\kappa_l (\theta_L-\theta_k)$,
one can lower bound the r.h.s. of Theorem~\ref{th:LB_apm} by
$\kappa_L \sum_{k=L+1}^K  (\theta_L -
\theta_k)/d(\kappa_L\theta_k,\kappa_L\theta_L)$, which is not tight in general.

\begin{remark}
  In the uncensored version of the PBM --~i.e., if the $Y_l(t)$ were
  observed~--, the expression of $I_a(\theta,\lambda)$ is simpler: it is equal
  to $\sum_{l=1}^{L}\sum_{k=1}^{K}\mathds{1}\{A_l(t)=k\}\kappa_l
  d(\theta_k,\lambda_k)$ and leads to a lower bound that coincides with
  (\ref{eq:lb_original}). The uncensored PBM is actually statistically very
  close to the weighted Cascade model and can be addressed by algorithms that do
  not assume knowledge of the $(\kappa_l)_l$ but only of their ordering.
\end{remark}


\section{Algorithms}
\label{sec:algo}

In this section we introduce two algorithms for the PBM. The first one uses the
CUCB strategy of \cite{chen2013combinatorial} and requires an
simple upper confidence bound for $\theta_k$ based on the estimator
$\hat{\theta}_k(t)$ defined in~\eqref{eq:theta_estimator}. The second algorithm
is based on the \emph{Parsimonious Item Exploration} --~PIE(L)~-- scheme
proposed in \cite{combes2015learning} and aims at reaching asymptotically
optimal performance. For this second algorithm, termed \algonamePIE, it is
also necessary to use a multi-position analog of the well-known KL-UCB index
\cite{garivier11} that is inspired by a result of \cite{magureanu2014lipschitz}. The  analysis of  \algonamePIE\ provided below confirms the relevance of the lower bound derived in Section~\ref{sec:lower_bound}. 

\paragraph*{\algonameCensUCB} 
The first algorithm simply consists in sorting optimistic indices in decreasing
order and pulling the corresponding first $L$ arms
\cite{chen2013combinatorial}. To derive the expression of the required
``exploration bonus'' we use an upper confidence for $\hat{\theta}_k(t)$ based on Hoeffding's inequality:
$$
U^{UCB}_k(t,\delta) = \frac{S_k(t)}{\tilde{N}_k(t)} + 
\sqrt{\frac{N_k(t)}{\tilde{N}_k(t)}}\sqrt{\frac{\delta}{2\tilde{N}_k(t)}},
$$
for which a coverage bound is given by the next proposition, proven in 
Appendix~\ref{sec:concentrationUCB}. 
\begin{proposition}\label{prop:boundCensUCB}
	Let $k$ be any arm in $\{1,\ldots,K\}$, then for any $\delta > 0$,
	$$
	\mathds{P}\left(U^{UCB}_k(t,\delta) \leq \theta_k \right) \leq
	e \delta \log(t) e^{-\delta}.
	$$
\end{proposition}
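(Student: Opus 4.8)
The plan is to recast the coverage event as a one-sided lower deviation of the unbiased estimator $\hat\theta_k(t)=S_k(t)/\tilde N_k(t)$ and to control it through an exponential (self-normalized) supermartingale combined with a peeling argument over the random pull count $N_k(t)$. Concretely, multiplying the defining inequality $U^{UCB}_k(t,\delta)\le\theta_k$ by $\tilde N_k(t)$ and setting $M_t:=\tilde N_k(t)(\theta_k-\hat\theta_k(t))=\sum_{s=1}^{t-1}\sum_l(\kappa_l\theta_k-Z_l(s))\mathds{1}\{A_l(s)=k\}$, the event becomes $\{M_t\ge\sqrt{\delta N_k(t)/2}\}$, which also forces $N_k(t)\ge1$ (the index is infinite when $k$ has never been drawn). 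Since a displayed list contains distinct arms, $k$ occupies at most one position per round, so the increments $D_s:=\sum_l(\kappa_l\theta_k-Z_l(s))\mathds{1}\{A_l(s)=k\}$ lie in an interval of length one; as $A_l(s)$ is $\mathcal{F}_{s-1}$-measurable and $Z_l(s)\sim\mathcal{B}(\kappa_l\theta_k)$ given the past, $\mathds{E}[D_s\mid\mathcal{F}_{s-1}]=0$ and $(D_s)$ is a martingale-difference sequence.

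Next I would build the supermartingale. With $B_s:=\sum_l\mathds{1}\{A_l(s)=k\}\in\{0,1\}$ and $\sum_{s<t}B_s=N_k(t)$, Hoeffding's lemma applied conditionally yields $\mathds{E}[e^{\eta D_s}\mid\mathcal{F}_{s-1}]\le e^{\eta^2 B_s/8}$ for every $\eta>0$, so $W_s:=\exp(\eta\sum_{u\le s}D_u-\frac{\eta^2}{8}\sum_{u\le s}B_u)$ satisfies $\mathds{E}[W_s]\le1$. The key structural fact is that the compensator is normalized by the \emph{count} $N_k(t)$ -- each increment has range one irrespective of $\kappa_l$ -- whereas the deviation threshold scales like $\sqrt{N_k(t)}$; for a fixed value $n$ of $N_k(t)$, Markov's inequality on $W_t$ with $\eta\propto\sqrt{\delta/n}$ would give a probability of order $e^{-\delta}$.

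Because $N_k(t)$ is random and dependent on $M_t$, I would finish with a peeling device in the spirit of \cite{magureanu2014lipschitz}: partition the range $\{1,\dots,t-1\}$ of $N_k(t)$ into $O(\log t)$ geometric slices $[\gamma^j,\gamma^{j+1})$, and on each slice bound $\mathds{P}(M_t\ge\sqrt{\delta\gamma^j/2},\,N_k(t)<\gamma^{j+1})$ by applying Markov to $W_t$ with $\eta$ tuned to that slice, which yields a per-slice bound of the form $e^{-\delta/\gamma}$. A union bound over the $\approx\log(t)/\log\gamma$ slices, followed by optimizing the grid ratio $\gamma$, then produces the stated $e\,\delta\log(t)\,e^{-\delta}$.

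I expect the main obstacle to be exactly this self-normalization: the normalizer $N_k(t)$ and the martingale $M_t$ are random and coupled, so no single Chernoff bound applies, and one must trade the loss incurred by evaluating $M_t$ at a slice's lower endpoint and the compensator at its upper endpoint against the number of slices -- this mismatch is what generates both the $\log t$ factor and the leading constant $e\delta$. A secondary complication, already flagged in the excerpt, is that $\hat\theta_k(t)$ is a $\kappa_l$-weighted ratio rather than a plain empirical mean, so the whole argument has to be carried through $M_t$ and the counts $N_k(t),\tilde N_k(t)$ rather than by invoking an off-the-shelf i.i.d.\ concentration inequality.
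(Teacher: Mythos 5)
Your proposal is correct and follows essentially the same route as the paper's proof: both center the observations into a bounded-increment martingale equal to $\tilde N_k(t)(\hat\theta_k(t)-\theta_k)$, peel the random count $N_k(t)$ into geometric slices, apply a Hoeffding-type exponential bound on each slice (the paper via the maximal Azuma--Hoeffding inequality indexed by draw times $\tau_i$, you via Markov's inequality on the exponential supermartingale evaluated at the deterministic time $t$ --- an equivalent device yielding the same per-slice bound $e^{-\delta/\gamma}$), and optimize the grid ratio to obtain $e\,\delta\log(t)\,e^{-\delta}$. The paper's explicit choice of ratio is $1+\gamma=\delta/(\delta-1)$, matching your final optimization step.
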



Following the ideas of \cite{combes2015combinatorial}, it is possible to obtain
a logarithmic regret upper bound for this algorithm. The proof is given in
Appendix~\ref{sec:proofUCB}.

\begin{theorem}\label{th:pbm-ucb}
  Let $C(\kappa) = \min_{1\leq l\leq L}
  [(\sum_{j=1}^L\kappa_j)^2/l+(\sum_{j=1}^l\kappa_j)^2]/\kappa_L^2$ and $\Delta
  = \min_{a \in \sigma(a^*) \setminus a^*} \Delta_a$, where $\sigma(a^*)$
  denotes the permutations of the optimal action.  Using \algonameCensUCB~ with
  $\delta=(1+\epsilon)\log(t)$ for some $\epsilon>0$, there exists a constant
  $C_0(\epsilon)$ independent from the model parameters such that the regret of
  \algonameCensUCB\ is bounded from above by
    $$ \mathds{E}[R(T)] \leq C_0(\epsilon) + 16(1+\epsilon) C(\kappa)\log T \left( \frac{L}{\Delta} +\sum_{k\notin 
		a^*}\frac{1}{\kappa_L(\theta_L-\theta_k)}\right).$$
\end{theorem}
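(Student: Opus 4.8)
The plan is to prove the regret bound for \algonameCensUCB\ by following the standard template for combinatorial upper-confidence-bound analyses, specializing the generic argument of \cite{chen2013combinatorial,combes2015combinatorial} to the censored PBM and using Proposition~\ref{prop:boundCensUCB} as the key concentration input. First I would decompose the regret using the expression \eqref{eq:exp_regret}, $\mathds{E}[R(T)] = \sum_{a}\Delta_a \mathds{E}[N_a(T)]$, and split the event that a suboptimal action $a$ is selected at round $t$ into a ``good'' event, on which all relevant confidence bounds hold, and a ``bad'' failure event where some index $U^{UCB}_k(t,\delta)$ underestimates its true parameter $\theta_k$. By Proposition~\ref{prop:boundCensUCB} with $\delta=(1+\epsilon)\log(t)$, each failure has probability at most $e\delta\log(t)e^{-\delta} = e(1+\epsilon)\log^2(t)\,t^{-(1+\epsilon)}$, which is summable in $t$; summing over the $K$ arms absorbs all failure contributions into the constant $C_0(\epsilon)$.

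The heart of the argument is then to control the selections that occur on the good event. On this event, whenever the greedy index-sorting rule picks a suboptimal action, optimism guarantees that the chosen arms have indices at least as large as the optimal ones, so the only way an error persists is that some counts $N_{k,l}(t)$ are still too small for the confidence widths to have shrunk below the gaps. The plan is to translate this into a deterministic bound on the number of ``bad'' pulls: whenever a suboptimal arm $k$ is displayed (at some position $l$) while the true optimal action would be preferred, the bias-corrected count $\tilde N_k(t)$ must be below a threshold of order $\log(T)/\Delta_{\text{rel}}^2$, where the relevant gap is either the permutation gap $\Delta$ (for ordering the top-$L$ arms) or a term of order $\kappa_L(\theta_L-\theta_k)$ (for separating a genuinely suboptimal arm $k$ from $\theta_L$). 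Converting from $\tilde N_k$ back to raw counts $N_k$ uses $\tilde N_k(t)\ge \kappa_L N_k(t)$, and the $\sqrt{N_k(t)/\tilde N_k(t)}$ prefactor in the index is where the combinatorial constant $C(\kappa)$ enters: bounding this ratio over all admissible allocations of an arm across the $L$ positions is exactly what the minimization defining $C(\kappa)$ captures.

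The main obstacle, and the step I would spend the most care on, is precisely this accounting of the confidence width for the pooled estimator $\hat\theta_k(t)=S_k(t)/\tilde N_k(t)$. Because this estimator aggregates observations collected at several positions with different examination rates $\kappa_l$, the effective sample size is not simply $N_k(t)$ but the bias-corrected $\tilde N_k(t)$, and the Hoeffding-type width carries the extra factor $\sqrt{N_k(t)/\tilde N_k(t)}$. Worst-casing the placement of pulls across positions—subject to the constraint that an arm can only be displayed once per round—yields a quadratic-in-$\kappa$ expression whose minimum over the possible number of occupied positions is the bracket in $C(\kappa)$; verifying that this worst case is genuinely attained, and that the resulting per-arm bound is $16(1+\epsilon)C(\kappa)\log T$ times the appropriate inverse gap, is the crux of the calculation. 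Everything else is bookkeeping: I would finally collect the permutation-gap contributions into the $L/\Delta$ term, the suboptimal-arm contributions into $\sum_{k\notin a^*} 1/(\kappa_L(\theta_L-\theta_k))$, and the summed failure probabilities into $C_0(\epsilon)$, giving the stated bound.
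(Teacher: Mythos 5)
Your overall architecture matches the paper's: the regret is split into rounds where some confidence bound fails --- absorbed into $C_0(\epsilon)$ via Proposition~\ref{prop:boundCensUCB} and the summability of $e(1+\epsilon)\log^2(t)\,t^{-(1+\epsilon)}$ --- and rounds where optimism forces the selected action's counts to be small relative to $\log T/\Delta_{\cdot}^2$. The two gap scales you identify ($\Delta$ for mis-ordering the top $L$ arms, $\kappa_L(\theta_L-\theta_k)$ for separating a suboptimal $k$ from $\theta_L$) and the conversion $\tilde N_k(t)\geq\kappa_L N_k(t)$ are exactly the ingredients used there.

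The gap is at the step you yourself flag as the crux: the origin of $C(\kappa)$. You propose to obtain the bracket $(\sum_{j=1}^L\kappa_j)^2/l+(\sum_{j=1}^l\kappa_j)^2$ by worst-casing the placement of one arm's pulls across the $L$ positions and bounding $\sqrt{N_k(t)/\tilde N_k(t)}$ over such allocations; that is not where it comes from, and that route does not naturally produce a minimum over $l$ (in the paper the ratio $N_k(t)/\tilde N_k(t)$ is simply bounded by $1/\kappa_L$, which is the sole source of the $\kappa_L^2$ in the denominator). In the actual proof, $l$ is a free integer parameter in the definition of the set $S_t$ of \emph{positions} whose displayed arm is still under-sampled at round $t$. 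The good-event rounds are split into $G_t=\{|S_t|\geq l\}$ and $H_t=\{|S_t|<l,\ \exists k\in A(t)$ with a count below an even smaller threshold$\}$; one proves $F_t\subset G_t\cup H_t$ by contradiction, splitting $\sum_{j}\kappa_j/\sqrt{N_{A_j(t)}(t)}$ over positions inside and outside $S_t$ (this is where the two thresholds proportional to $(\sum_{j=1}^L\kappa_j)^2$ and $(\sum_{j=1}^l\kappa_j)^2$ arise), and the factor $1/l$ appears because on $G_t$ at least $l$ arms are simultaneously under-sampled, so the per-round regret can be charged to each at weight $1/l$. A per-arm counting lemma (Lemma~2 in Appendix~B.4 of \cite{combes2015combinatorial}) then converts each count threshold $C/\Delta_{k,a}^2$ into a contribution $2C/\Delta_{\min,k}$, which yields the $16(1+\epsilon)$ and the final sum. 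Without this two-tier decomposition borrowed from \cite{kveton2015tight,combes2015combinatorial}, a single-threshold count gives a worse dependence on $L$ and does not produce the stated $C(\kappa)$.
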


The presence of the term $L/\Delta$ in the above expression is
attributable to limitations of the mathematical analysis. On the other hand,
the absence of the KL-divergence terms appearing in the lower
bound~\eqref{eq:LB_apm} is due to the use of an upper confidence bound based on
Hoeffding's inequality.

\paragraph*{\algonamePIE}
We adapt the PIE($l$) algorithm introduced by \cite{combes2015learning} for the
Cascade Model to the PBM in Algorithm~\ref{alg:PIE} below. At each round,
the learner potentially explores at position $L$ with probability $1/2$ using the 
following
upper-confidence bound for each arm $k$
\begin{equation}
    U_k(t,\delta) = \sup_{q\in [\theta_k^{\min}(t),1]} \left 
    \{q\left\vert\sum_{l=1}^L N_{k,l}(t)
    d\left(\frac{S_{k,l}(t)}{N_{k,l}(t)},\kappa_l q\right)\leq \delta \right. 
    \right\},
   \label{eq:pbm-pie_index}
\end{equation}
where $\theta_k^{\min}(t)$ is the minimum of the convex function 
$\Phi:q\mapsto\sum_{l=1}^L N_{k,l}(t) d(S_{k,l}(t)/N_{k,l}(t), \kappa_l q)$.
In other positions, $l=1,\dots,L-1$, PBM-PIE selects the arms with the largest
estimates $\hat{\theta}_k(t)$. The resulting algorithm is presented as
Algorithm~\ref{alg:PIE} below, denoting by $\mathcal{L}(t)$ the $L$-largest
empirical estimates, referred to as the ``leaders'' at round $t$.

\begin{algorithm}[hbt]
	\caption{ -- \algonamePIE}
	\begin{algorithmic}\small
		\REQUIRE{$K$, $L$, observation probabilities $\kappa$, $\epsilon>0$}
		\STATE{Initialization: first $K$ rounds, play each arm at every 
		position}
		\FOR{$t = K+1,\ldots,T$}
            \STATE Compute $\hat{\theta}_k(t)$ for all $k$
            \STATE $\mathcal{L}(t) \gets$ top-$L$ ordered arms by decreasing
                   $\hat{\theta}_k(t)$
            \STATE $A_l(t) \gets \mathcal{L}_l(t)$ for each position $l < L$
            \STATE $\mathcal{B}(t) \gets \{k | k \notin \mathcal{L}(t), 
            U_k(t,(1+\epsilon)\log(T))
                \geq \hat{\theta}_{\mathcal{L}_L(t)}(t)$
            \IF{$\mathcal{B}(t) = \emptyset$}
                \STATE $A_L(t) \gets \mathcal{L}_L(t)$
            \ELSE
            \STATE With probability $1/2$, select $A_L(t)$ uniformly at random 
            from
                $\mathcal{B}(t)$, else $A_L(t) \gets \mathcal{L}_L(t)$
            \ENDIF
            \STATE Play action $A(t)$ and observe feedback $Z(t)$; Update $N_{k,l}(t+1)$ and $S_{k,l}(t+1)$.
		\ENDFOR
	\end{algorithmic}
	\label{alg:PIE}
\end{algorithm} 

The $U_k(t,\delta)$ index defined in~\eqref{eq:pbm-pie_index} aggregates
observations from all positions --~as in \algonameCensUCB~-- but
allows to build tighter confidence regions as shown by the next proposition proved in Appendix~\ref{sec:proofUBPIE}.

\begin{proposition}\label{prop:PIEcontrol}
 For all $\delta \geq L+1$,
	$$
	\mathds{P}\left(U_k(t,\delta) < \theta_k\right) \leq 
	e^{L+1} \left(\frac{\left\lceil\delta \log(t)\right\rceil
		\delta}{L}\right)^L e^{-\delta} .
	$$
\end{proposition}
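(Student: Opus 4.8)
The plan is to relate the failure event $\{U_k(t,\delta) < \theta_k\}$ to a deviation of an aggregated empirical Kullback--Leibler statistic, and then to control that statistic through a multi-position adaptation of the self-normalised concentration argument underlying KL-UCB.

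First I would perform the reduction. Writing $\Phi(q) = \sum_{l=1}^L N_{k,l}(t)\, d(S_{k,l}(t)/N_{k,l}(t), \kappa_l q)$, recall that $\Phi$ is convex and that $\theta_k^{\min}(t)$ is its minimiser, so $\Phi$ is strictly increasing on $[\theta_k^{\min}(t),1]$ and $U_k(t,\delta)$ is simply the largest $q$ on that interval with $\Phi(q)\le\delta$. The minimiser is feasible whenever $\Phi(\theta_k^{\min}(t))\le\delta$, and when it is not one trivially has $\Phi(\theta_k)\ge\Phi(\theta_k^{\min}(t))>\delta$; in either case $U_k(t,\delta)<\theta_k\le 1$ forces $\theta_k>U_k(t,\delta)$ on the increasing branch and hence $\Phi(\theta_k)>\delta$. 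Thus $\{U_k(t,\delta)<\theta_k\}\subseteq\{\Phi(\theta_k)>\delta\}$. The key structural fact I would stress here is that, at position $l$, the censored observation $Z_l(s)=X_l(s)Y_l(s)$ of arm $k$ is $\mathcal{B}(\kappa_l\theta_k)$-distributed, so each summand $N_{k,l}(t)\,d(S_{k,l}(t)/N_{k,l}(t),\kappa_l\theta_k)$ is exactly the empirical KL deviation of $N_{k,l}(t)$ i.i.d.\ Bernoulli draws from their true mean $\kappa_l\theta_k$.

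It then remains to bound $\mathds{P}(\sum_{l=1}^L N_{k,l}(t)\,d(S_{k,l}(t)/N_{k,l}(t),\kappa_l\theta_k)\ge\delta)$, which is where the argument of \cite{magureanu2014lipschitz} is adapted. For each position $l$ I would introduce, in the local sampling time counting how often arm $k$ is shown at position $l$, the exponential martingale associated with the tilt of a $\mathcal{B}(\kappa_l\theta_k)$ variable, and form the product over the $L$ positions. Because the observation streams attached to distinct positions are conditionally independent given the past placements, this product, mixed against a suitable prior over the $L$ tilting parameters, is a nonnegative supermartingale of unit initial mass, and Ville's maximal inequality controls all the random count configurations $(N_{k,1}(t),\dots,N_{k,L}(t))$ simultaneously --~this is what absorbs the adaptivity of the algorithm. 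A Laplace approximation of the mixture integral around its maximiser (the empirical tilt) reproduces $\exp(\Phi(\theta_k))$ up to a normalising factor; integrating the threshold $\delta$ over the $L$ tilting directions yields a simplex-type volume of order $\delta^L/L!\sim(e\delta/L)^L$, while peeling the local time of each position into geometric slices contributes one factor $\lceil\delta\log(t)\rceil$ per position. Collecting the $e^{-\delta}$ factor from the martingale bound, the per-position peeling constants and the Laplace normalisation gives the announced $e^{L+1}(\lceil\delta\log(t)\rceil\,\delta/L)^L e^{-\delta}$, the hypothesis $\delta\ge L+1$ being what guarantees at least unit budget per position in the peeling and the monotonicity of the volume estimate.

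The main obstacle I anticipate is precisely the cross-position adaptivity: the counts $N_{k,l}(t)$ are neither fixed nor independent, since the algorithm decides which arm occupies which slot from the past feedback. The whole point of routing the proof through a single multidimensional mixture supermartingale, rather than a naive union over per-position deviations, is to dispose of this dependence cleanly while keeping the constant exponential in $L$ rather than worse; making the bookkeeping of the peeling slices and of the Gaussian normalisation collapse exactly into $e^{L+1}(\lceil\delta\log(t)\rceil\,\delta/L)^L$ under $\delta\ge L+1$ is the delicate part.
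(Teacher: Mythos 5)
Your argument follows essentially the same route as the paper's: you reduce the event $\{U_k(t,\delta)<\theta_k\}$ to $\{\Phi(\theta_k)\geq\delta\}$ via the convexity and monotonicity of $\Phi$ on $[\theta_k^{\min}(t),1]$, and then control the aggregated statistic $\sum_{l=1}^L N_{k,l}(t)\,d(S_{k,l}(t)/N_{k,l}(t),\kappa_l\theta_k)$ with the multi-dimensional self-normalised KL deviation bound, which is exactly Theorem~2 of \cite{magureanu2014lipschitz} that the paper invokes directly. The only difference is that you sketch the mixture-martingale/peeling derivation of that inequality where the paper simply cites it; the reduction step and the resulting bound are identical.
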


We may now state the main result of this section that provides an
upper bound on the regret of \algonamePIE.

\begin{theorem}\label{th:pbm-pie}
	Using \algonamePIE\ with $\delta = (1+\epsilon)\log(t)$ and $\epsilon >
	0$, for any $\eta<\min_{k<K} (\theta_k-\theta_{k+1})/2$, there exist
	problem-dependent constants $C_1(\eta)$, $C_2(\epsilon,\eta),C_3(\epsilon)$ 
	and $\beta(\epsilon,\eta)$ such that
    \[
        \mathds{E}[R(T)] \leq (1+\epsilon)^2 \log(T)\sum_{k=L+1}^K \frac{\kappa_L
        (\theta_L - \theta_k)}{d(\kappa_L\theta_k,\kappa_L(\theta_L-\eta))} + C_1(\eta)
        + \frac{C_2(\epsilon,\eta)}{T^{\beta(\epsilon,\eta)}}+C_3(\epsilon).
    \]
\end{theorem}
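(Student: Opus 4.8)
The plan is to bound the expected regret by controlling the number of times each suboptimal arm $k \in \{L+1,\dots,K\}$ is played at the exploratory position $L$, since this is the only mechanism through which \algonamePIE\ incurs the dominant logarithmic regret. First I would decompose the regret according to whether the set of ``leaders'' $\mathcal{L}(t)$ coincides with the optimal set $\{1,\dots,L\}$ or not. On the good event where the leaders are correct, the only source of regret is exploration at position $L$, where a suboptimal $k$ may be tried; on the complementary event the leaders are wrong, and I would argue this happens only finitely often in expectation. Concretely, I would introduce the event $\{\hat\theta_j(t)$ is within $\eta$ of $\theta_j$ for all relevant $j\}$ and use the concentration of the pooled estimator $\hat\theta_k(t)$ (the specific estimator from~\eqref{eq:theta_estimator}, whose non-standard form is flagged in the excerpt) to show that the probability of a misordering among the top arms decays fast enough that the contribution to the regret is a constant; this yields the $C_1(\eta)$ term.

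Next, on the event that the leaders are correct, I would focus on a fixed suboptimal arm $k$ and bound $\mathds{E}[N_{k,L}(T)]$, the number of rounds in which $k$ is selected at position $L$ by the exploration step. The key is that $k$ is only placed in $\mathcal{B}(t)$ when its index $U_k(t,(1+\epsilon)\log T) \geq \hat\theta_{\mathcal{L}_L(t)}(t) \approx \theta_L$. Following the standard KL-UCB argument of \cite{garivier11} adapted through \cite{combes2015learning}, I would split the count into rounds where the empirical KL-based index is large because of insufficient sampling versus rounds where the true deviation is large. The number of rounds of the first type is controlled using Proposition~\ref{prop:PIEcontrol}: the probability that $U_k(t,\delta)$ underestimates $\theta_k$ is summably small, contributing the $C_2(\epsilon,\eta)/T^{\beta}$ and $C_3(\epsilon)$ correction terms once summed over $t$. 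The number of rounds of the second type is where the leading logarithmic term comes from: if $k$ has been explored at position $L$ roughly $n$ times, then $U_k$ drops below $\theta_L - \eta$ once $n \, d(\kappa_L \theta_k, \kappa_L(\theta_L-\eta)) \gtrsim (1+\epsilon)\log T$, which caps $\mathds{E}[N_{k,L}(T)]$ at $(1+\epsilon)\log T / d(\kappa_L\theta_k, \kappa_L(\theta_L-\eta))$.

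Finally I would convert this bound on exploration counts into regret. Each exploratory play of suboptimal arm $k$ at position $L$ replaces the optimal arm that would sit there, incurring per-round regret $\kappa_L(\theta_L - \theta_k)$ on the good event (the displacement happens at position $L$ and the other positions hold the correct leaders). Multiplying the per-round gap $\kappa_L(\theta_L-\theta_k)$ by the cap on $\mathds{E}[N_{k,L}(T)]$ and summing over $k=L+1,\dots,K$ produces exactly the leading term
\[
(1+\epsilon)^2 \log(T) \sum_{k=L+1}^K \frac{\kappa_L(\theta_L-\theta_k)}{d(\kappa_L\theta_k, \kappa_L(\theta_L-\eta))},
\]
where one factor of $(1+\epsilon)$ comes from the confidence level $\delta$ and the second from absorbing lower-order sampling fluctuations. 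The tolerance $\eta$ (required to be below half the minimum gap) ensures the leaders stabilize and appears as the $-\eta$ shift inside the divergence, reflecting the unavoidable slack between the empirical threshold $\hat\theta_{\mathcal{L}_L(t)}$ and the true $\theta_L$.

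The step I expect to be the main obstacle is the concentration control of the index $U_k(t,\delta)$ and of $\hat\theta_k(t)$, precisely because, as the excerpt emphasizes, $\hat\theta_k(t)$ is not a sum of conditionally i.i.d.\ increments divided by the count—it pools observations with heterogeneous success probabilities $\kappa_l \theta_k$ across positions. Establishing that $U_k$ behaves like a genuine KL-UCB index despite this aggregation, and correctly tracking the interplay between the random set $\mathcal{B}(t)$, the $1/2$ exploration probability, and the event that the leaders are correctly ordered, is the delicate part; the rest follows the well-trodden KL-UCB regret template once Proposition~\ref{prop:PIEcontrol} is in hand.
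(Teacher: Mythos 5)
Your proposal is correct and follows essentially the same route as the paper's proof: control the rounds where the leaders or their estimates are bad via concentration of the pooled estimator (yielding the constant terms), then bound the number of exploratory plays of each suboptimal arm at position $L$ by the standard KL-UCB argument applied to the aggregated index $U_k$, using Proposition~\ref{prop:PIEcontrol} for the deviation term and multiplying by the gap $\Delta_{v_{k,L}}(\theta)=\kappa_L(\theta_L-\theta_k)$. The only cosmetic difference is that the paper works with the finer decomposition into the sets $B$, $C$, $D$ (rather than just ``leaders correct/incorrect''), which is exactly the refinement your sketch would need when made rigorous, and you correctly identify the delicate points (the non-i.i.d.\ pooled estimator and the interaction with the randomized exploration of $\mathcal{B}(t)$).
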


The proof of this result is provided in
Appendix~\ref{sec:proofUBPIE}. Comparing to the expression
in~(\ref{eq:LB_apm}), Theorem~\ref{th:pbm-pie} shows that PBM-PIE reaches
asymptotically optimal performance when the optimal exploring position is
indeed located at index $L$. In other case, there is a gap that is
caused by the fact the exploring position is fixed beforehand and not adapted
from the data.



We conclude this section by a quick description of two other algorithms that will be used in the experimental section to benchmark our results.

\paragraph*{Ranked Bandits (\algonameRBA)} The state-of-the-art algorithm for
the sequential ``learning to rank'' problem was proposed by
\cite{radlinski2008learning}. It runs one bandit algorithm per position, each
one being entitled to choose the best suited arm at its rank. The underlying
bandit algorithm that runs in each position is left to the choice of the user,
the better the policy the lower the regret can be. If the bandit algorithm at
position $l$ selects an arm already chosen at a higher position, it receives a
reward of zero.  Consequently, the bandit algorithm operating at position $l$
tends to focus on the estimation of $l$-th best arm. In the next section, we use
as benchmark the Ranked Bandits strategy using the KL-UCB algorithm
\cite{garivier11} as the per-position bandit.

\paragraph*{\algonameTS}
The observations $Z_l(t)$ are censored
Bernoulli which results in a posterior that does not belong to a standard 
family of distribution. \cite{komiyama2015optimal}
suggest a version of Thompson Sampling called  ``Bias Corrected Multiple Play 
TS'' (or BC-MP-TS) that approximates the true
posterior by a Beta distribution. We observed in experiments that for
parameter values close to one, this algorithm does not explore enough. In 
Figure~\ref{fig:d-ts}, we show this phenomenon for $\theta = (0.95, 0.85, 0.75, 
0.65, 0.55)$. The true posterior 
for the parameter
$\theta_k$ at time $t$ may be written as a product of truncated scaled beta 
distributions
\[\pi_t(\theta_k)\propto \prod_l 
\theta_k^{\alpha_{k,l}(t)}(1-\kappa_l\theta_k)^{\beta_{k,l}(t)},\]
where $\alpha_{k,l}(t) = S_{k,l}(t)$ and $\beta_{k,l}(t) = N_{k,l}(t)-
S_{k,l}(t)$. To draw from this exact posterior, we use rejection sampling with
proposal distribution 
$\operatorname{Beta}(\alpha_{k,m}(t),\beta_{k,m}(t))/\kappa_{m}$,
where $m = \arg\max_{1\leq l \leq L} (\alpha_{k,l}(t)+\beta_{k,l}(t))$.


\section{Experiments}
\label{sec:experiments}


\begin{figure}
	\centering
	\subfigure[\small Average regret of \algonameTS~and BC-MP-TS compared for 
	high parameters. Shaded areas:
    first and last deciles.
	\label{fig:d-ts}]{\includegraphics[height=5cm]
		{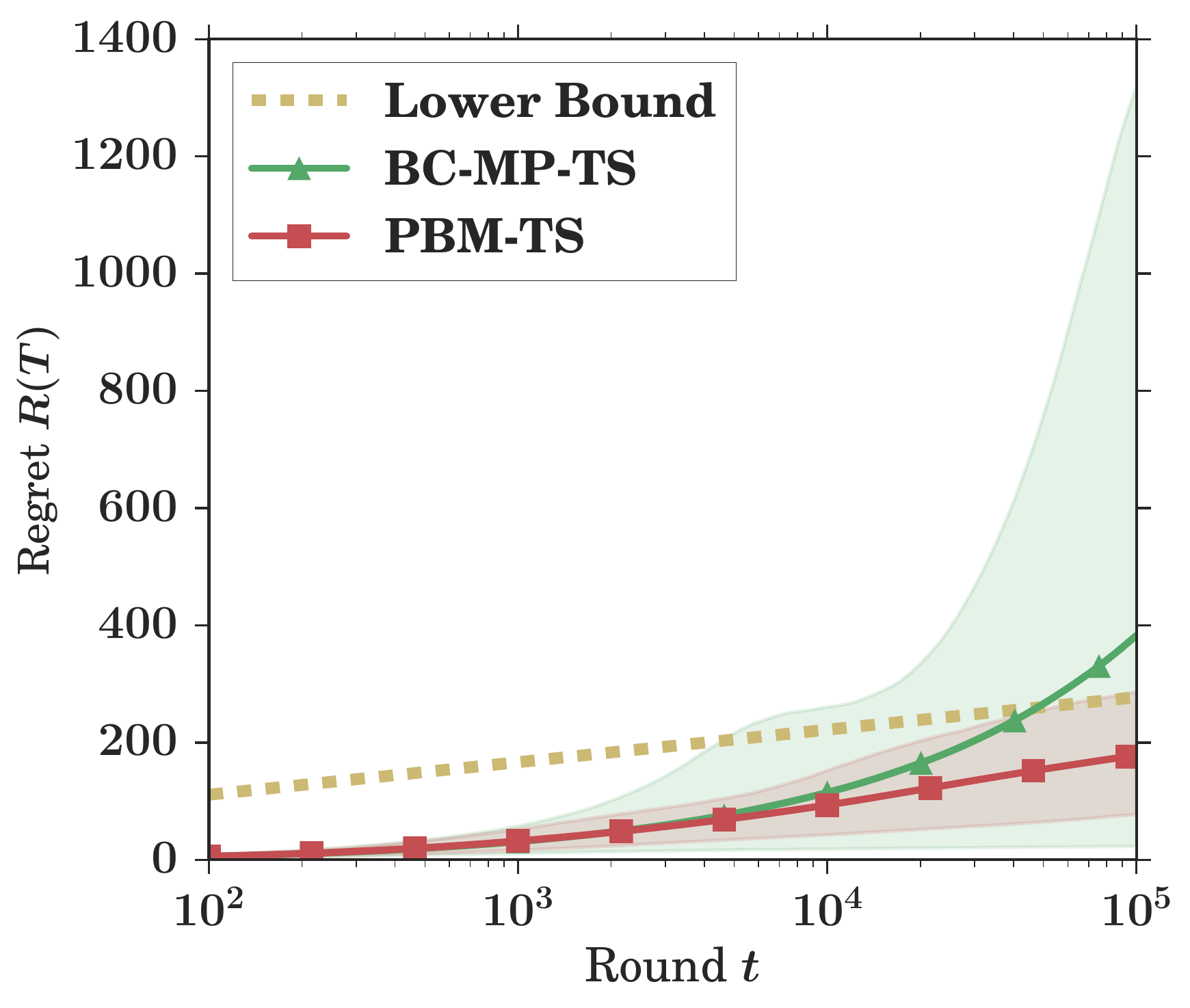}}~\quad
	\subfigure[Average regret of various algorithms on synthetic 
data under the 
PBM.\label{fig:DVA_simu}]{\includegraphics[height=5cm]{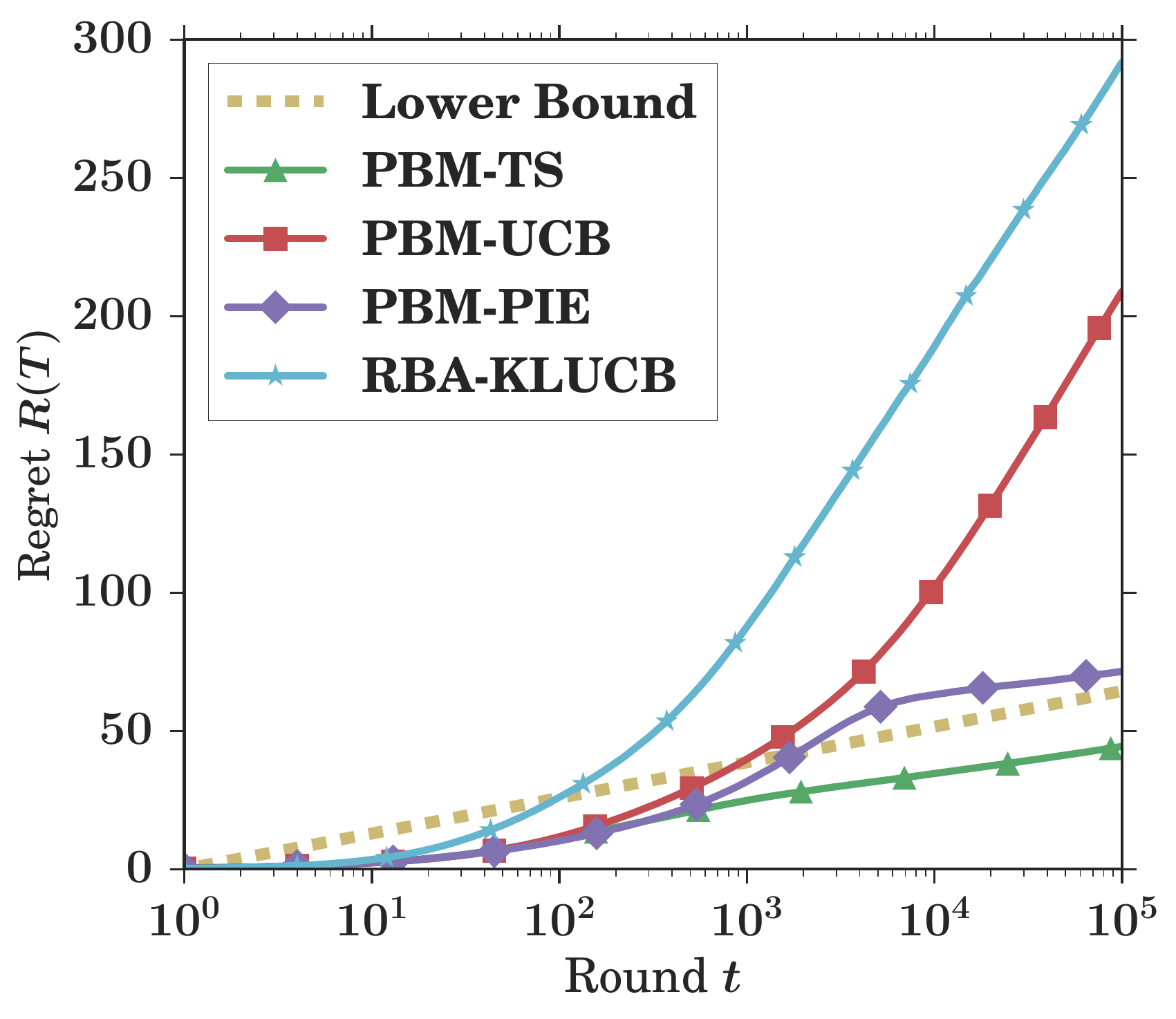}}
	\caption{Simulation results for the suggested strategies.}
\end{figure}

\subsection{Simulations}

In order to evaluate our strategies, a simple problem is considered in which
$K=5$, $L=3$, $\kappa=(0.9,0.6,0.3)$ and $\theta = (0.45,0.35,0.25,0.15,0.05)$.
The arm expectations are chosen such that the asymptotic behavior can be 
observed after reasonable time horizon. All results are averaged based on
$10,000$ independent runs of the algorithm. We present the results in
Figure~\ref{fig:DVA_simu} where \algonameCensUCB, 
\algonamePIE\ and \algonameTS\ are compared to \algonameRBA. The performance 
of \algonamePIE\ and \algonameTS\ are comparable, the latter even being under
the lower bound (it is a common observation, e.g. see
\cite{komiyama2015optimal}, and is due to the
asymptotic nature of the lower bound). The curves confirm our analysis
for \algonamePIE\ and lets us conjecture that the true Thompson Sampling
policy might be asymptotically optimal. As expected, 
\algonamePIE\ shows 
asymptotically optimal performance, matching the lower bound after a large 
enough horizon.

%
%
%
\begin{table}[t]
	\begin{minipage}[]{0.5\textwidth}
        \vspace{5mm}
		\begin{tabular}[b]{ c | c | c | c }
			{$\#$\textbf{ads} $\boldsymbol{(K)}$}  & 
			{$\boldsymbol{\#}$\textbf{records}} & {$\boldsymbol{\min~\theta}$} 
			& 
			{$\boldsymbol{\max~\theta}$}\\
			\hline
			\small $5$ &  \small $216,565$ &  \small $0.016$ &  \small  
			$0.077$\\ 
			\small $5$ &  \small $68,179$ &  \small $0.031$ &  \small $0.050$\\
			\small $6$ &  \small $435,951$ &  \small $0.025$ &  \small $0.067$\\
			\small $6$ &  \small $110,071$ &  \small $0.023$ &  \small $0.069$\\
			\small $6$ &  \small $147,214$ &  \small $0.004$ &  \small $0.148$\\
			\small $8$ &  \small $122,218$ &  \small $0.108$ &  \small $0.146$\\
			\small $11$ &  \small $1,228,004$ &  \small $0.022$ &  \small 
			$0.149$\\
			\small $11$ &  \small $391,951$ &  \small $0.022$ &  \small 
			$0.084$\\
			\hline
		\end{tabular}
		\vspace{1.75cm}
		\caption{\label{tab:kdd2} Statistics on the queries: each line 
			corresponds to the sub-dataset associated with a query.}
	\end{minipage}
	\quad
	\begin{minipage}[]{0.415\textwidth}	
		\includegraphics[width=\textwidth]{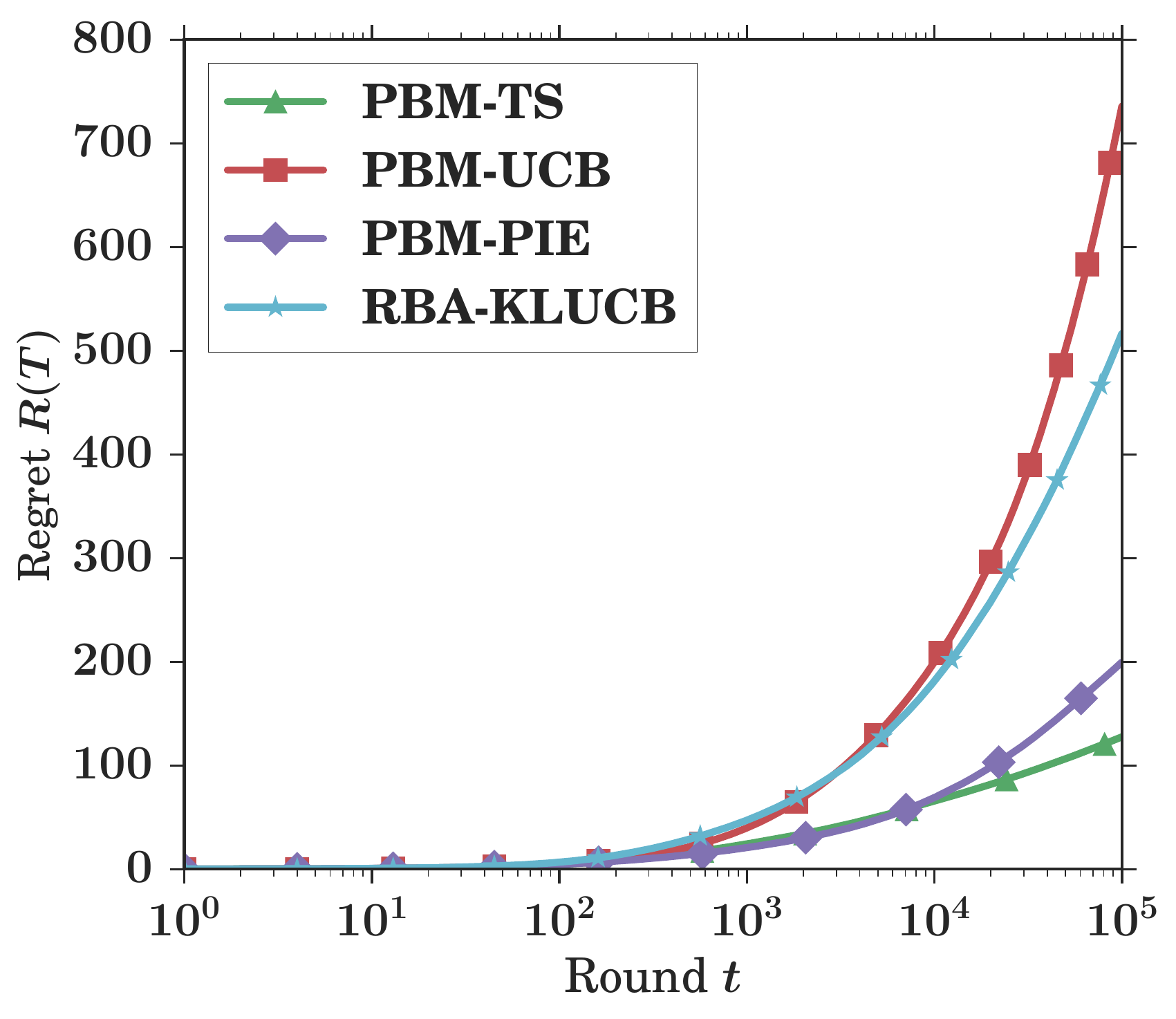}
		\captionof{figure}{\small Performance of the proposed algorithms under 
		the PBM on real data.}
		\label{fig:kdd2}
	\end{minipage}
\end{table}

\subsection{Real data experiments: search advertising}

The dataset  was provided for KDD Cup 2012 track 2
\footnote{http://www.kddcup2012.org/} and involves session logs of
soso.com, a search engine owned by Tencent. It consists of ads that were 
inserted among search results. Each of the $150M$ lines from the log
contains the user ID, the query typed, an ad, a position ($1$, $2$ or $3$) 
at which it was displayed and a binary reward (click/no-click). First,
for every query, we excluded ads that were not displayed at least $1,000$
times at every position. We also filtered queries that had less than $5$ ads
satisfying the previous constraints. As a result, we
obtained $8$ queries with at least $5$ and up to $11$ ads. For each query $q$, 
we computed the matrix of the average click-through rates (CTR): 
$M_q \in \mathds{R}^{K \times L}$, where $K$ is the number of ads for the query 
$q$ and $L=3$ the number of
positions. It is noticeable that the SVD of each $M_q$ matrix has a highly 
dominating first singular value, therefore validating the low-rank assumption 
underlying in the PBM. In order to 
estimate the parameters of the problem, we used the EM algorithm suggested by 
\cite{chuklin2015click,dempster1977maximum}. 
Table~\ref{tab:kdd2} reports some statistics about the bandit models
reconstructed for each query: number of arms $K$, amount of data used to 
compute the parameters, minimum and maximum values of the $\theta$'s for each 
model.

We conducted a series of $2,000$ simulations over this dataset. At the 
beginning of each run, a query was randomly selected together with
corresponding probabilities of scanning positions and arm expectations.
Even if rewards were still simulated, this scenario is more realistic
since the values of the parameters were extracted from a real-world dataset.
We show results for the different algorithms in Figure~\ref{fig:kdd2}. It is
remarkable that \algonameRBA\ performs slightly better than \algonameCensUCB.
One can imagine that \algonameCensUCB\ does not benefit enough from position
aggregations -- only $3$ positions are considered -- to beat \algonameRBA.
Both of them are outperformed by \algonameTS\ and \algonamePIE.

%


\section*{Conclusion}
\label{sec:conclusion}
%

This work provides the first complete analysis of the PBM in an online context. 
The proof scheme used
to obtain the lower bound on the regret is interesting on its
own, as it can be generalized to various other settings. The
tightness of the lower bound is validated by our analysis of \algonamePIE\ but
it would be an interesting future contribution to provide such guarantees for
more straightforward algorithms such as \algonameTS\ or a `PBM-KLUCB' using the
confidence regions of \algonamePIE.
In practice, the algorithms are robust to small variations of the
values of the $(\kappa_l)_l$, but it would be preferable to obtain
some control over the regret under uncertainty on these examination
parameters.


\newpage

\bibliographystyle{abbrv}
\bibliography{biblio}

\newpage
\appendix
%

\section{Properties of $\hat{\theta}_k(t)$ (Section~\ref{sec:model})}
\label{sec:appendix-estim}
Conditionnally to the actions $A(1)$ up to $A(t-1)$, the log-likelihood of the observations $Z(1), \dots, Z(t-1)$ may be written as
\begin{multline*}
\sum_{s=}^{t-1} \sum_{k=1}^K \sum_{l=1}^L \mathds{1}\{A_l(t)=k\} \left[ Z_l(t) \log (\kappa_l \theta_k) + (1-Z_l(t)) \log (1-\kappa_l \theta_k) \right] \\
= \sum_{k=1}^K \sum_{l=1}^L S_{k,l}(t) \log (\kappa_l \theta_k) + (N_{k,l}(t)-S_{k,l}(t)) \log (1-\kappa_l \theta_k).
\end{multline*}
Differenciating twice with respect to $\theta_k$ and taking the expectation of $(S_{k,l}(t))_{l}$, contional to $A(1), \dots, A(t-1)$, yields the expression of $I(\theta_k)$ given in Section~\ref{sec:model}.

\section{Proof of Theorem~\ref{th:LB_opt}}
\label{sec:proof-lower-bound}
\subsection{Proof of Lemma~\ref{lem:loglikelihood}}

\begin{proof}[\unskip\nopunct]
Under the PBM, the conditional expectation of the log-likelihood ratio defined
in~\eqref{eq:log-likelihood-ratio} writes
\begin{align*}
    \mathds{E}_\theta[\ell(t)|A(1),\ldots,A(t)] &= \mathds{E}_\theta\left[\sum_{s=1}^t
        \sum_{a \in \mathcal{A}} \mathds{1}\{A(s) = a\} \sum_{l=1}^L
        \log \frac{p_{a_l}(X_l(s)Y_l(s) ; \theta)}{p_{a_l}(X_l(s)Y_l(s);
        \lambda)}~\middle|~A(1),\ldots,A(t) \right] \\
        &= \sum_{s=1}^t \sum_{a \in \mathcal{A}} \mathds{1}\{A(s) = a\}
            \sum_{l=1}^L \mathds{E}\left[\log \frac{p_{a_l}(X_l(s)Y_l(s) ;
            \theta)}{p_{a_l}(X_l(s)Y_l(s); \lambda)}~\middle|~A(s) = a \right] \\
        &= \sum_{a \in \mathcal{A}} N_a(t) \sum_{l=1}^L\sum_{k=1}^{K}
          \mathds{1}\{a_l=k\}d(\kappa_l\theta_{k}, \kappa_l\lambda_{k}) \\
        &= \sum_{a \in \mathcal{A}} N_a(t) I_a(\theta,\lambda) ,
\end{align*}
using the notation
$I_a(\theta,\lambda) = \sum_{l=1}^L\sum_{k=1}^{K} 
\mathds{1}\{a_l=k\} d(\kappa_l\theta_{k}, \kappa_l\lambda_{k})$.
\end{proof}

\subsection{Details on the proof of Proposition~\ref{prop:constraintLB}}
\begin{lemma}\label{lem:boundLikelihood}
    Let $\theta=(\theta_1,\ldots,\theta_K)$ and 
    $\lambda=(\lambda_1,\ldots,\lambda_K)$ be two bandit models such that the 
    distributions of all
    arms in $\theta$ and $\lambda$ are mutually absolutely continuous.
    Let $\sigma$ be a
    stopping time with respect to $(\mathcal{F}_t)$ such that $(\sigma < +\infty)$
    a.s.~under both models. Let $\mathcal{E} \in \mathcal{F}_{\sigma}$ be an 
    event such that
    $0 < \mathds{P}_{\theta }(\mathcal{E}) < 1$. Then one has
    \begin{align*}
    	\sum_{a\in \mathcal{A}}  
    	I_a(\theta,\lambda)\mathds{E}_{\theta}[N_a(\sigma)]
     \geq d(\mathds{P}_{\theta}(\mathcal{E}), \mathds{P}_{\lambda}(\mathcal{E})),
    \end{align*}
    where $I_a(\theta,\lambda)$ is the conditional expectation of the
    log-likelihood ratio for the model of interest.
\end{lemma}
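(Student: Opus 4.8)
The plan is to reduce the claimed inequality to two well-known ingredients: a Wald-type identity that evaluates the expected log-likelihood ratio stopped at $\sigma$, and the data-processing (contraction) property of the Kullback-Leibler divergence. First I would observe that, by the definition in~\eqref{eq:log-likelihood-ratio}, $\ell(\sigma)$ is exactly the log-density of $\mathds{P}_\theta$ with respect to $\mathds{P}_\lambda$ on the $\sigma$-algebra $\mathcal{F}_\sigma$; this is where the mutual absolute continuity of all arm distributions is used, guaranteeing that the ratio is finite and that the two restricted measures are equivalent. Consequently $\mathds{E}_\theta[\ell(\sigma)]$ is the relative entropy $\mathrm{KL}(\mathds{P}_\theta^{\mathcal{F}_\sigma}\,\|\,\mathds{P}_\lambda^{\mathcal{F}_\sigma})$ between the two models restricted to $\mathcal{F}_\sigma$.

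The first main step is to compute this expectation by extending Lemma~\ref{lem:loglikelihood} from the deterministic horizon $t$ to the stopping time $\sigma$. Writing the per-round increment of $\ell$ as $\Delta_s := \log\{p(Z(s);\theta \mid \mathcal{F}_{s-1})/p(Z(s);\lambda \mid \mathcal{F}_{s-1})\}$, the computation in the proof of Lemma~\ref{lem:loglikelihood} shows that $\mathds{E}_\theta[\Delta_s \mid \mathcal{F}_{s-1}] = I_{A(s)}(\theta,\lambda)$, so that $M_t := \ell(t) - \sum_{s=1}^{t} I_{A(s)}(\theta,\lambda)$ is a zero-mean $(\mathcal{F}_t)$-martingale under $\mathds{P}_\theta$. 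Applying the optional-stopping theorem at $\sigma$ and then regrouping terms by action via $N_a(\sigma)=\sum_{s\leq\sigma}\mathds{1}\{A(s)=a\}$ yields $\mathds{E}_\theta[\ell(\sigma)] = \mathds{E}_\theta[\sum_{s=1}^\sigma I_{A(s)}(\theta,\lambda)] = \sum_{a\in\mathcal{A}} I_a(\theta,\lambda)\,\mathds{E}_\theta[N_a(\sigma)]$, which is precisely the left-hand side of the claim.

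The second step is the lower bound $\mathds{E}_\theta[\ell(\sigma)] \geq d(\mathds{P}_\theta(\mathcal{E}),\mathds{P}_\lambda(\mathcal{E}))$. Since $\mathcal{E}\in\mathcal{F}_\sigma$, the indicator $\mathds{1}_{\mathcal{E}}$ is a (coarse) statistic of the observations, and the contraction property of relative entropy gives $\mathrm{KL}(\mathds{P}_\theta^{\mathcal{F}_\sigma}\,\|\,\mathds{P}_\lambda^{\mathcal{F}_\sigma}) \geq \mathrm{KL}(\mathcal{B}(\mathds{P}_\theta(\mathcal{E}))\,\|\,\mathcal{B}(\mathds{P}_\lambda(\mathcal{E}))) = d(\mathds{P}_\theta(\mathcal{E}),\mathds{P}_\lambda(\mathcal{E}))$. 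If I prefer to keep the argument self-contained, the same inequality is obtained elementarily by applying Jensen's inequality to $-\log$ conditionally on $\mathcal{E}$ and on $\mathcal{E}^c$ and adding the two contributions; the hypothesis $0<\mathds{P}_\theta(\mathcal{E})<1$ ensures that both conditional expectations are well defined. Combining this with the identity from the first step gives the stated bound, which is essentially Lemma~19 of~\cite{kaufmann2015complexity} transcribed to the PBM feedback.

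I expect the delicate point to be the justification of optional stopping at the a.s.-finite but unbounded time $\sigma$: one must verify integrability of the stopped martingale, typically by applying optional stopping to the bounded times $\sigma\wedge n$ and passing to the limit through a monotone/dominated convergence argument. Here the assumptions that $\sigma<\infty$ almost surely under both models and that each local information $I_l(\theta_k,\lambda_k)=d(\kappa_l\theta_k,\kappa_l\lambda_k)$ is finite — which holds because $\kappa_l\theta_k,\kappa_l\lambda_k\in(0,1)$ — are exactly what is needed to control the increments and to exchange limit and expectation. Everything else is routine.
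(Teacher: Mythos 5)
Your proposal is correct and follows essentially the same route as the paper, which simply defers to the proof of Lemma~1 of \cite{kaufmann2015complexity}: a Wald-type identity giving $\mathds{E}_\theta[\ell(\sigma)]=\sum_{a}I_a(\theta,\lambda)\mathds{E}_\theta[N_a(\sigma)]$ (the stopping-time extension of Lemma~\ref{lem:loglikelihood}), combined with the change-of-measure identity of Lemma~\ref{lemma2} and conditional Jensen (equivalently, data processing) to lower bound $\mathds{E}_\theta[\ell(\sigma)]$ by $d(\mathds{P}_\theta(\mathcal{E}),\mathds{P}_\lambda(\mathcal{E}))$. Your attention to the optional-stopping justification is a welcome addition rather than a deviation.
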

The proof of this lemma directly follows from the above expressions of the
log-likelihood ratio and from the proof of Lemma $1$ in Appendix A.1 of
\cite{kaufmann2015complexity}.

We simply recall the following technical lemma for completeness.
\begin{lemma}\label{lemma2}
    Let $\sigma$ be any stopping time with respect to $(\mathcal{F}_t)$.
    For every event $A \in \mathcal{F}_{\sigma}$,
    \begin{align*}
        \mathds{P}_{\lambda} (A)= \mathds{E}_{\theta}[\mathds{1}\{A\} \exp 
        (-\ell(\sigma))].
    \end{align*}
\end{lemma}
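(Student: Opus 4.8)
The plan is to prove the change-of-measure identity first for deterministic times and then bootstrap to the stopping time $\sigma$ by conditioning on its value. The only structural input needed is that $\exp(-\ell(t))$, with $\ell(t)$ as in~\eqref{eq:log-likelihood-ratio}, is precisely the Radon--Nikodym derivative $d\mathds{P}_\lambda/d\mathds{P}_\theta$ restricted to $\mathcal{F}_t$.

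First I would establish, for every fixed $t$ and every $A\in\mathcal{F}_t$, the identity
\[
\mathds{P}_\lambda(A)=\mathds{E}_\theta\bigl[\mathds{1}\{A\}\exp(-\ell(t))\bigr].
\]
To identify the density ratio I would write out the joint law of the observations $(Z(1),\dots,Z(t))$ together with the actions $(A(1),\dots,A(t))$ under each model. The learner's sampling rule is a measurable function of $\mathcal{F}_{s-1}$ together with independent external randomization, and crucially does \emph{not} depend on the unknown parameter; hence the conditional law of $A(s)$ given $\mathcal{F}_{s-1}$ is the same under $\mathds{P}_\theta$ and $\mathds{P}_\lambda$, and all policy kernels cancel in the ratio, leaving exactly $\prod_{s\leq t} p(Z(s);\lambda\,|\,\mathcal{F}_{s-1})/p(Z(s);\theta\,|\,\mathcal{F}_{s-1})=\exp(-\ell(t))$. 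Mutual absolute continuity on $\mathcal{F}_t$ holds because $\theta,\lambda\in(0,1)^K$ and $\kappa_l\in(0,1)$, so every Bernoulli observation density is strictly positive. The displayed identity then follows directly from the definition of the Radon--Nikodym derivative, since $\mathds{E}_\theta[\mathds{1}\{A\}\,d\mathds{P}_\lambda/d\mathds{P}_\theta]=\int_A d\mathds{P}_\lambda=\mathds{P}_\lambda(A)$.

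Next I would extend to the stopping time $\sigma$. Assuming $\sigma<\infty$ almost surely under both models (as in the companion Lemma~\ref{lem:boundLikelihood}), I partition $A$ according to the value of $\sigma$ into the disjoint pieces $A\cap\{\sigma=t\}$. By the very definition of $\mathcal{F}_\sigma$, each $A\cap\{\sigma=t\}$ lies in $\mathcal{F}_t$, so the fixed-time identity applies to it; moreover $\ell(\sigma)=\ell(t)$ on $\{\sigma=t\}$. Summing over $t$ and using countable additivity together with monotone convergence to interchange the sum and the expectation gives
\begin{align*}
\mathds{P}_\lambda(A)
&= \sum_{t\geq 1}\mathds{P}_\lambda\bigl(A\cap\{\sigma=t\}\bigr)
= \sum_{t\geq 1}\mathds{E}_\theta\bigl[\mathds{1}\{A\cap\{\sigma=t\}\}\exp(-\ell(t))\bigr]\\
&= \mathds{E}_\theta\bigl[\mathds{1}\{A\}\exp(-\ell(\sigma))\bigr].
\end{align*}

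The main obstacle is not any hard estimate but two points of rigour. The first is making the cancellation of the policy factors fully precise, which requires an explicit model of the interaction kernel as a parameter-independent Markov kernel and a careful statement of the joint density of actions and censored observations $Z(s)=(X_l(s)Y_l(s))_l$. The second is the role of finiteness of $\sigma$: if $\mathds{P}(\sigma=\infty)>0$ the countable decomposition of $A$ loses mass and the identity degrades to an inequality, so I would state the almost-sure finiteness hypothesis explicitly rather than leave $\sigma$ entirely arbitrary. Given these, the argument is the standard likelihood-ratio change of measure and needs no optional-stopping or uniform-integrability machinery, since the partition over $\{\sigma=t\}$ reduces everything to the deterministic-time case.
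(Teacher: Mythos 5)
Your proof is correct and follows exactly the standard likelihood-ratio change-of-measure argument that the paper itself does not reproduce but delegates to Lemma~15 of \cite{kaufmann2015complexity}: identify $\exp(-\ell(t))$ as $d\mathds{P}_\lambda/d\mathds{P}_\theta$ on $\mathcal{F}_t$ (the policy kernels cancel because the sampling rule is parameter-free), then decompose $A$ over the events $\{\sigma=t\}$. You are also right to flag that almost-sure finiteness of $\sigma$ is needed for the decomposition to exhaust $A$; the paper's statement omits this hypothesis, though it is assumed in the companion Lemma~\ref{lem:boundLikelihood} where the result is actually used.
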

A full proof of Lemma~\ref{lemma2} can be found in the Appendix A.3 of  
\cite{kaufmann2015complexity} (proof of Lemma 15).

\subsection{Lower bound proof (Theorem~\ref{th:LB_opt})}

\begin{proof}[\unskip\nopunct]
In order to prove the simplified lower bound of Theorem \ref{th:LB_opt} we 
basically have two arguments:
\begin{enumerate}
	\item a lower bound on $f(\theta)$ can be obtained by enlarging the 
	    feasible set, that is by relaxing some constraints;
	\item Lemma \ref{lem:c_prime_lp} can be used to lower bound the 
	    objective function of the problem.
\end{enumerate}
The constant $f(\theta)$ is defined by 
\begin{gather}
f(\theta) =  \inf_{c\succeq 0}  \sum_{a\neq a^*(\theta)} 
\Delta_{a}(\theta) c_a \label{opt_arg} \\ 
s.t ~~ \inf_{\lambda \in B(\theta)} \sum_{a\in 
    \mathcal{A}} I_a(\theta,\lambda)c_a \geq 1 .\label{opt_cons}
\end{gather} 
We begin by relaxing some constraints: we only allow the change of measure 
$\lambda$ to belong to the sets $B_k(\theta):= \left\lbrace \lambda \in \Theta 
| 
\forall j\neq k ,\theta_j=\lambda_j \text{ and } \mu^*(\theta)<\mu^*(\lambda) 
\right\rbrace$ defined in Section 
\ref{sec:lower_bound}:
\begin{gather}
f(\theta) =  \inf_{c\succeq 0}  \sum_{a\neq a^*(\theta)} 
\Delta_{a}(\theta) c_a  \\ 
s.t ~~ \forall k\notin a^*(\theta),~\forall \lambda \in B_k(\theta), 
\sum_{a\in 
	\mathcal{A}} I_a(\theta,\lambda)c_a \geq 1 .\label{opt_cons2}
\end{gather}
The $K-L$ constraints (\ref{opt_cons2}) only let one parameter move and must be 
true for any value satisfying the definition of the corresponding set 
$B_k(\theta)$. In practice, for each $k$, the parameter $\lambda_k$ must be set 
to at least $\theta_L$. Consequently, these constraints may then be rewritten 

\begin{gather}
    f(\theta) = \inf_{c\succeq 0} \sum_{a\neq a^*(\theta)} \Delta_a(\theta)
        c_a \label{ineq:apm_objective} \\
    s.t~ \forall k\notin a^*(\theta), \sum_{a\neq a^*(\theta)}c_a\sum_{l=1}^L
        \mathds{1}\{a_l=k\} d(\kappa_l\theta_k,\kappa_l\theta_L)\geq 1.
        \label{ineq:apm_constraints}
\end{gather}
Proposition~\ref{prop:ca} tells
us that coefficients $c_a$ are all zeros except for actions $a\in\mathcal{A}$
which can be written $a = v_{k,l_k}$ where $l_k = \argmin_{l \leq L}
\frac{\Delta_{v_{k,l}}(\theta)}{d(\kappa_l\theta_k,\kappa_l\theta_L)}$.
Thus, we obtain the desired lower bound by rewriting (\ref{ineq:apm_objective}) as
$$
    f(\theta) \geq \sum_{k=L+1}^K \min_{l \in \{1,\ldots,L\}} 
    \frac{\Delta_{v_{k,l}}(\theta)}{d(\kappa_l\theta_k,\kappa_l\theta_L)}.
$$
\end{proof}

\begin{proposition} \label{prop:ca}
    Let $c = \{c_a~:~a\neq a^*\}$ be a solution of the linear problem (LP) in
    Theorem~\ref{th:LB_opt}. Coefficients are all zeros except for actions $a$
    which can be written as $a=(1,\ldots,l_k-1,k,l_k,\ldots,L-1):=v_{k,l_k}$
    where $k > L$ and $l_k = \argmin_{l \leq L} \frac{\Delta_{v_{k,l}}(\theta)}
    {d(\kappa_l\theta_k,\kappa_l\theta_L)}$.
\end{proposition}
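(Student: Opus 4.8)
The plan is to identify an explicit optimal solution of the relaxed linear program~\eqref{ineq:apm_objective}--\eqref{ineq:apm_constraints} and to show that its support is exactly $\{v_{k,l_k} : k=L+1,\ldots,K\}$. Throughout I write $S(a)=\{k>L : k\in a\}$ for the set of suboptimal arms displayed by action $a$ and, for $k\in S(a)$, $p(a,k)$ for the unique position at which $k$ appears in $a$. Since each arm occupies at most one slot of a list, the coefficient of $c_a$ in the $k$-th constraint~\eqref{ineq:apm_constraints} equals $d(\kappa_{p(a,k)}\theta_k,\kappa_{p(a,k)}\theta_L)$ when $k\in S(a)$ and $0$ otherwise. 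First I would exhibit the candidate solution $c^\dagger$ given by $c^\dagger_{v_{k,l_k}}=1/d(\kappa_{l_k}\theta_k,\kappa_{l_k}\theta_L)$ for each $k>L$ and $c^\dagger_a=0$ elsewhere. Because $v_{k,l_k}$ contains the single suboptimal arm $k$ at position $l_k$, it contributes only to the $k$-th constraint, which it saturates; hence $c^\dagger$ is feasible, and its objective value equals $\sum_{k=L+1}^K \Delta_{v_{k,l_k}}(\theta)/d(\kappa_{l_k}\theta_k,\kappa_{l_k}\theta_L)=\sum_{k=L+1}^K \min_{l\leq L}\Delta_{v_{k,l}}(\theta)/d(\kappa_l\theta_k,\kappa_l\theta_L)$. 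This already yields the upper bound $f(\theta)\leq \sum_{k}\min_l \Delta_{v_{k,l}}(\theta)/d(\kappa_l\theta_k,\kappa_l\theta_L)$.

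The matching lower bound is where the real work lies, and its key ingredient is a domination lemma: for every action $a$,
\[
\Delta_a(\theta)\;\geq\;\sum_{k\in S(a)}\Delta_{v_{k,p(a,k)}}(\theta).
\]
I would prove this from $\Delta_a=\sum_{l=1}^L \kappa_l(\theta_l-\theta_{a_l})$ in two sub-steps. First, permuting the optimal arms appearing in $a$ among their free positions so that they are placed in decreasing $\theta$ order over decreasing $\kappa_l$ can only decrease $\Delta_a$; this is the rearrangement inequality applied to $\mu_a=\sum_l \kappa_l\theta_{a_l}$ using that $(\kappa_l)_l$ is decreasing, and it leaves each $p(a,k)$ unchanged. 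Second, once the optimal arms are canonically placed, a telescoping comparison (carried out by peeling off the suboptimal arms one at a time, as an induction on $|S(a)|$) shows that the gap of the resulting list is at least the sum of the isolated single-arm gaps $\Delta_{v_{k,p(a,k)}}$.

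Granting the lemma, any feasible $c$ satisfies
\[
\sum_{a}\Delta_a(\theta)\,c_a\;\geq\;\sum_{a}c_a\sum_{k\in S(a)}\Delta_{v_{k,p(a,k)}}(\theta)=\sum_{k=L+1}^K\sum_{l=1}^L \Delta_{v_{k,l}}(\theta)\Big(\sum_{a:\,p(a,k)=l}c_a\Big).
\]
Setting $x_{k,l}=\sum_{a:\,p(a,k)=l}c_a\geq 0$, constraint~\eqref{ineq:apm_constraints} becomes $\sum_l x_{k,l}\,d(\kappa_l\theta_k,\kappa_l\theta_L)\geq 1$ for each $k$, so the right-hand side is bounded below by $\sum_k \min_l \Delta_{v_{k,l}}(\theta)/d(\kappa_l\theta_k,\kappa_l\theta_L)$, the value of the decoupled single-constraint LP for each arm. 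This matches the upper bound, so $c^\dagger$ is optimal and $f(\theta)=\sum_k\min_l\Delta_{v_{k,l}}(\theta)/d(\kappa_l\theta_k,\kappa_l\theta_L)$.

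Finally, to pin down the support I would trace back the equality cases. Optimality forces equality in the domination step, which excludes every action carrying more than one suboptimal arm and every single-arm action whose optimal arms are not canonically ordered (both strict losses, as the worked small cases confirm), and it forces each per-arm subproblem to concentrate all its mass at the ratio-minimizing position $l_k$. Hence every optimal $c$ is supported on $\{v_{k,l_k}\}$; when each minimizer $l_k$ is unique, the support is exactly determined. I expect the domination lemma to be the main obstacle, since it is the only step that genuinely exploits the PBM gap structure together with the monotonicity of $(\kappa_l)_l$; everything else is the routine decoupling of a linear program with one constraint per suboptimal arm.
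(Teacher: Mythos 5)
Your proof is correct, and it reaches the conclusion by a genuinely different route than the paper. The paper argues by contradiction with a local exchange: given an optimal $c$ with $c_\alpha>0$ for some $\alpha$ not of the form $v_{k,l_k}$, it transfers the mass $c_\alpha$ onto the actions $v_{k,l_k}$, rescaled by ratios of divergences so that the constraints are preserved (Lemma~\ref{lem:c_prime_lp}), and shows the objective strictly decreases using $\Delta_\alpha(\theta)>\sum_{k>L:k\in\alpha}\Delta_{v_{k,\pi_k(\alpha)}}(\theta)$ (Lemma~\ref{lem:ineqLB1}). You instead sandwich the LP value directly: the explicit feasible point $c^\dagger$ gives the upper bound, and your domination lemma --- which is precisely the paper's Lemma~\ref{lem:ineqLB1} in its non-strict form, proved by the same two sub-steps (canonicalize the optimal arms, then telescope over the suboptimal ones) --- combined with the change of variables $x_{k,l}=\sum_{a:\pi_k(a)=l}c_a$ decouples the problem into $K-L$ one-constraint LPs whose minima sum to the same quantity; the support statement then falls out of the equality cases. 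So the genuinely hard ingredient is shared, but your route buys two things: it identifies the value of the relaxed program as an exact equality rather than only the inequality needed for Theorem~\ref{th:LB_apm}, and it handles the equality cases more cleanly --- note that the strict inequality invoked in~\eqref{ineq:apm_1} is actually an equality when $\alpha=v_{k,l}$ with $l\neq l_k$ (a single suboptimal arm, canonically surrounded), so the paper there has to fall back on the second term being positive by definition of $l_k$, whereas your per-arm LP treats this uniformly. Two small points to tighten: (i) in the first sub-step of the domination lemma, permuting the optimal arms of $a$ does not in general reach the comparison list --- you must also replace the set of optimal arms appearing in $a$ by the top ones $1,\ldots,L-|S(a)|$, which likewise only increases $\mu_a$; (ii) your equality analysis should also explicitly rule out actions with no suboptimal arm (permutations of $a^*$), for which the domination bound degenerates to $\Delta_a\geq 0$ and strictness again forces $c_a=0$. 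Your caveat about uniqueness of the minimizing position $l_k$ is well taken and applies equally to the paper's statement.
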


\begin{proof}
    We denote by $\pi_k(a)$ the position of item $k \in \{1,\ldots,K\}$ in action
    $a$ ($0$ if $k \notin a$). Let $l_k$ be the optimal position of item
    $k > L$ for exploration: $l_k = \argmin_{l\leq L} \frac{\Delta_{v_{k,l}}(\theta)}
    {d(\kappa_l\theta_k,\kappa_l\theta_L)}$.
    Following \cite{combes2015learning}, we show by contradiction that $c_a>0$
    implies that $a$ can be written $v_{k,l_k}$ for a well chosen $k > L$. Let $\alpha
    \neq a^*$ be a suboptimal action such that $\forall k > L, \alpha \neq v_{k,l_k}$
    and $c_{\alpha} > 0$. We need to show a contradiction.  Let us introduce a new set
    of coefficients $c'$ defined as follows, for any $a \neq a^*$:
    $$
    c'_a =
      \begin{cases}
        0                                   & \quad \text{if } a=\alpha         \\
        c_a + 
        \frac{d(\kappa_{\pi_k(\alpha)}\theta_k,\kappa_{\pi_k(\alpha)}\theta_L)}
                {d(\kappa_{l_k}\theta_k,\kappa_{l_k}\theta_L)}c_\alpha  & \quad 
                \text{if }
                \exists k > L \text{ s.t. } a=v_{k,l_k} \text{ and } k \in \alpha \\
        c_a \                               & \quad \text{otherwise.}           \\
      \end{cases}
    $$
    According to Lemma \ref{lem:c_prime_lp}, these coefficients satisfy the constraints
    of the LP. We now show that these new coefficients yield a strictly lower value to
    the optimization problem:
    \begin{align}
    c(\theta) - c'(\theta) &= c_\alpha \Delta_\alpha(\theta) - \sum_{k > L:k \in \alpha}
            \frac{d(\kappa_{\pi_k(\alpha)}\theta_k,\kappa_{\pi_k(\alpha)}\theta_L)}
            {d(\kappa_{l_k}\theta_k,\kappa_{l_k}\theta_L)}c_\alpha 
            \Delta_{v_{k,l_k}}(\theta) \nonumber \\
        & > c_\alpha \left( \sum_{k>L: k \in \alpha}
            \Delta_{v_{k,\pi_k(\alpha)}}(\theta) -
            \sum_{k > L:k \in \alpha} \frac{d(\kappa_{\pi_k(\alpha)}\theta_k,
            \kappa_{\pi_k(\alpha)}\theta_L)} {d(\kappa_{l_k}\theta_k,
            \kappa_{l_k}\theta_L)}\Delta_{v_{k,l_k}}(\theta)\right). \label{ineq:apm_1}
    \end{align}
    The strict inequality (\ref{ineq:apm_1}) is shown in Lemma \ref{lem:ineqLB1}.
    Let $k > L$ be one of the suboptimal arms in $\alpha$. By definition of $l_k$,
    the corresponding term of the sum in equation (\ref{ineq:apm_1}) is positive.
    Thus, we have that $c(\theta) > c'(\theta)$ and, hence, by contradiction, we showed that 
    $c_a > 0$ iff $a$ can be written $a=v_{k,l_k}$ for some $k>L$.
\end{proof}

\begin{lemma} \label{lem:c_prime_lp}
    Let $c$ be a vector of coefficients that satisfy constraints  
    (\ref{ineq:apm_constraints}) of the optimization problem. Then, coefficients $c'$ as 
    defined in Proposition \ref{prop:ca} also satisfy the constraints:
    $$
    \forall k\notin a^*(\theta), \sum_{a\neq a^*(\theta)}c'_a\sum_{l=1}^{L}
              \mathds{1}\{a_l=k\} d(\kappa_l\theta_k,\kappa_l\theta_L)\geq 1.
    $$
\end{lemma}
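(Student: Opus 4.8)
The plan is to show that the reweighting $c\mapsto c'$ leaves the left-hand side of each of the $K-L$ constraints \emph{exactly} unchanged, so that the inequalities satisfied by $c$ transfer verbatim to $c'$. First I would record that, for a fixed suboptimal arm $k\in\{L+1,\dots,K\}$, the indicator $\mathds{1}\{a_l=k\}$ is nonzero for at most one position, namely $l=\pi_k(a)$; hence the constraint associated with $k$ is the value of the linear functional
$$
g_k(c):=\sum_{a\neq a^*}c_a\,\mathds{1}\{k\in a\}\,d\!\left(\kappa_{\pi_k(a)}\theta_k,\kappa_{\pi_k(a)}\theta_L\right),
$$
and the hypothesis on $c$ reads $g_k(c)\geq 1$ for every such $k$. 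The goal then reduces to checking that $g_k(c')=g_k(c)$ for all $k$.

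The key structural observation is that the map $c\mapsto c'$ only redistributes weight \emph{locally}: the coefficient of $\alpha$ is zeroed, and for each suboptimal arm $k\in\alpha$ the coefficient of the single action $v_{k,l_k}$ is increased. Since $v_{k,l_k}=(1,\dots,l_k-1,k,l_k,\dots,L-1)$ contains exactly one suboptimal arm---namely $k$ itself, placed at position $l_k$---adding mass to $v_{k,l_k}$ influences the functional $g_j$ only for $j=k$ and leaves $g_j$ untouched for every other suboptimal arm. Moreover, distinct suboptimal arms $k\in\alpha$ yield distinct actions $v_{k,l_k}$ (they carry different suboptimal items), and the standing assumption $\alpha\neq v_{k,l_k}$ guarantees that we never simultaneously remove and re-add mass to the same action, so no bookkeeping collisions occur.

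It then remains to evaluate the net change of $g_k$ in two cases. If $k\notin\alpha$, neither the deletion of $\alpha$ (which contributes nothing to $g_k$) nor any of the increments (all attached to arms lying in $\alpha$) touches $g_k$, so $g_k(c')=g_k(c)$. If $k\in\alpha$, the deletion of $\alpha$ lowers $g_k$ by exactly $c_\alpha\,d(\kappa_{\pi_k(\alpha)}\theta_k,\kappa_{\pi_k(\alpha)}\theta_L)$, while the increment on $v_{k,l_k}$ raises $g_k$ by the added weight times $d(\kappa_{l_k}\theta_k,\kappa_{l_k}\theta_L)$; by the very definition of $c'$ this added weight equals $\frac{d(\kappa_{\pi_k(\alpha)}\theta_k,\kappa_{\pi_k(\alpha)}\theta_L)}{d(\kappa_{l_k}\theta_k,\kappa_{l_k}\theta_L)}c_\alpha$, so the two contributions cancel and again $g_k(c')=g_k(c)$. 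Combining both cases gives $g_k(c')=g_k(c)\geq 1$ for every suboptimal $k$, which is the claim.

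The arithmetic is routine; the only point requiring care---and which I regard as the crux---is the localization argument, i.e. verifying that each exploration action $v_{k,l_k}$ carries a single suboptimal item so that the reweighting decouples across the $K-L$ constraints. This decoupling is exactly what lets the scaling factor $d(\kappa_{\pi_k(\alpha)}\theta_k,\kappa_{\pi_k(\alpha)}\theta_L)/d(\kappa_{l_k}\theta_k,\kappa_{l_k}\theta_L)$ produce exact, rather than merely approximate, cancellation in each constraint.
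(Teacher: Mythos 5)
Your proposal is correct and follows essentially the same route as the paper's proof: both show that the reweighting leaves each constraint's left-hand side exactly unchanged, via the observation that $v_{k,l_k}$ contains the single suboptimal arm $k$ (so the update decouples across the $K-L$ constraints) and that the scaling factor $d(\kappa_{\pi_k(\alpha)}\theta_k,\kappa_{\pi_k(\alpha)}\theta_L)/d(\kappa_{l_k}\theta_k,\kappa_{l_k}\theta_L)$ produces exact cancellation in the case $k\in\alpha$. Your explicit remark that $\alpha\neq v_{k,l_k}$ prevents bookkeeping collisions is a detail the paper leaves implicit, but it does not change the argument.
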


\begin{proof}
    We use the same $\alpha$ as introduced in Proposition \ref{prop:ca}. Let us fix
    $k \notin a^*(\theta)$. Let us define
    $$L(c) = \sum_{a\neq a^*(\theta)}c_a\sum_{l=1}^{L} \mathds{1}\{a_l=k\}
             d(\kappa_l\theta_k,\kappa_l\theta_L).$$
    We have
    \begin{align*}
        L(c')-L(c) = -c_\alpha \sum_{l=1}^L \mathds{1}\{\alpha_l=k\} d(\kappa_l\theta_k,
                \kappa_l\theta_L) + \sum_{l:\alpha_l>L}
                &\frac{d(\kappa_l\theta_k,\kappa_l\theta_L)}
                {d(\kappa_{l_k}\theta_k,\kappa_{l_k}\theta_L)}c_\alpha \\
            &\times \mathds{1}\{\alpha_l=k\}d(\kappa_{l_k}\theta_k,\kappa_{l_k}\theta_L).
    \end{align*}
    If $k \notin \alpha$, clearly, $L(c')-L(c)=0$. Else, $k \in \alpha$ and we note $p$
    its position in $\alpha$: $p = \pi_k(\alpha)$. We rewrite:
    \begin{equation*}
        L(c')-L(c) = c_\alpha d(\kappa_p\theta_k,\kappa_p\theta_L)\left( -1 +
                     \frac{d(\kappa_{l_k}\theta_k,\kappa_{l_k}\theta_L)}
                     {d(\kappa_{l_k}\theta_k,\kappa_{l_k}\theta_L)} \right)
                     = 0.
    \end{equation*}
    Thus, the coefficients $c'$ satisfy the constraints from Proposition 
    \ref{prop:ca}.
\end{proof}

\begin{lemma} \label{lem:ineqLB1}
    Let $\alpha$ be as in the proof of Proposition \ref{prop:ca}.
    $$
    \Delta_\alpha(\theta) > \sum_{k>L: k \in \alpha} \Delta_{v_{k,\pi_k(\alpha)}}(\theta).
    $$
\end{lemma}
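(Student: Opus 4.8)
The plan is to collapse every gap into a single summation-by-parts (Abel) identity that rewrites $\Delta_a(\theta)$ as prefix-sum deficits weighted by the increments of $\kappa$. Setting $\kappa_{L+1}:=0$ and, for any action $a$, $T_l^a:=\sum_{i=1}^l\theta_{a_i}$ with $T_l^*:=\theta_1+\dots+\theta_l$ the maximal prefix sum, Abel summation gives $\Delta_a(\theta)=\sum_{l=1}^L(\kappa_l-\kappa_{l+1})(T_l^*-T_l^a)$. First I would record this identity and specialise it to the single-insertion actions: since $v_{k,p}=(1,\dots,p-1,k,p,\dots,L-1)$ satisfies $T_l^{v_{k,p}}=T_l^*$ for $l<p$ and $T_l^{v_{k,p}}=T_l^*-(\theta_l-\theta_k)$ for $l\ge p$, one obtains the clean formula $\Delta_{v_{k,p}}(\theta)=\sum_{l=p}^L(\kappa_l-\kappa_{l+1})(\theta_l-\theta_k)$.

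Next I would form the difference $D:=\Delta_\alpha(\theta)-\sum_{k>L:k\in\alpha}\Delta_{v_{k,\pi_k(\alpha)}}(\theta)$, expand the right-hand sum with the formula above, and exchange the order of summation over $k$ and $l$. Writing $S_l:=\{k>L: k\in\alpha,\ \pi_k(\alpha)\le l\}$ for the suboptimal arms occupying the first $l$ positions of $\alpha$ and $s_l:=|S_l|$, this collapses $D$ into $D=\sum_{l=1}^L(\kappa_l-\kappa_{l+1})E_l$ with $E_l:=T_l^*-T_l^\alpha-s_l\theta_l+\sum_{k\in S_l}\theta_k$. Since $\kappa$ is strictly decreasing and $\kappa_{L+1}=0$, every weight $\kappa_l-\kappa_{l+1}$ is positive, so it suffices to show $E_l\ge0$ for each $l$.

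The crux is this per-position estimate, which is where a rearrangement argument enters. Observing that $T_l^\alpha-\sum_{k\in S_l}\theta_k$ is exactly the $\theta$-sum of the optimal arms sitting in the first $l$ positions of $\alpha$, and that these are $l-s_l$ distinct indices in $\{1,\dots,L\}$, that sum is at most $\theta_1+\dots+\theta_{l-s_l}$. Hence $E_l\ge(\theta_1+\dots+\theta_l)-(\theta_1+\dots+\theta_{l-s_l})-s_l\theta_l=\sum_{i=l-s_l+1}^l(\theta_i-\theta_l)\ge0$, because every index $i\le l$ has $\theta_i\ge\theta_l$. This gives $D\ge0$, i.e.\ the non-strict version of the inequality.

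Finally, for the strict inequality I would track when this chain is loose. If $\alpha$ carries at least two suboptimal arms then $s_L\ge2$, and the bound for $E_L$ already contains the strictly positive term $\theta_{L-s_L+1}-\theta_L$, so $D\ge(\kappa_L-\kappa_{L+1})E_L>0$; otherwise the slack comes from the rearrangement step, the optimal arms being out of their sorted order, which forces $E_l>0$ for some $l$. Either way $D>0$ whenever $\alpha$ is not one of the single-insertion actions $v_{k,p}$, which is precisely the configuration ruled in by Proposition~\ref{prop:ca}. The main obstacle I anticipate is making the per-position bound $E_l\ge0$ fully rigorous — in particular the ``the $\theta$-sum of $l-s_l$ optimal arms is maximised by the top indices'' step and the precise characterisation of its equality case — rather than the algebra, which is routine once the Abel identity is in hand.
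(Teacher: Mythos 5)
Your argument is correct, and it takes a genuinely different route from the paper's. The paper first replaces $\alpha$ by a canonical action $v(\alpha)$ that keeps the suboptimal arms in their positions and sorts the top optimal arms into the remaining slots (so $\Delta_\alpha(\theta)\geq\Delta_{v(\alpha)}(\theta)$), and then verifies $\Delta_{v(\alpha)}(\theta)\geq\sum_{k}\Delta_{v_{k,\pi_k(\alpha)}}(\theta)$ by an explicit telescoping computation whose remainder $\mathcal{R}(\theta)$ is shown positive using $\kappa_l\geq\kappa_{l+1}$ --- and it only writes this computation out for $p=2$ suboptimal arms. Your Abel identity $\Delta_a(\theta)=\sum_{l=1}^L(\kappa_l-\kappa_{l+1})(T_l^*-T_l^a)$ with $\kappa_{L+1}:=0$ handles every $p$ uniformly: the exchange of sums is routine, and the per-position bound $E_l\geq\sum_{i=l-s_l+1}^{l}(\theta_i-\theta_l)\geq 0$ that you flag as the crux is rigorous, because the $l-s_l$ optimal arms occupying the first $l$ slots of $\alpha$ are distinct indices in $\{1,\dots,L\}$, so their $\theta$-sum is maximised by $\{1,\dots,l-s_l\}$. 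Your strictness analysis is also sharper and more complete than the paper's: $E_L>0$ as soon as $s_L\geq 2$, while for at most one suboptimal arm the condition ``$E_l=0$ for all $l$'' forces, by a short induction on prefixes, $\alpha=a^*$ or $\alpha=v_{k,\pi_k(\alpha)}$, covering the $p=0$ and $p=1$ cases the paper omits. One caveat on your closing sentence: Proposition~\ref{prop:ca} only excludes $\alpha=v_{k,l_k}$, so the $\alpha$ it considers may still equal $v_{k,p}$ with $p\neq l_k$; for such $\alpha$ the lemma's strict inequality degenerates to an equality, a boundary case your proof correctly identifies but the paper's write-up silently skips. There the strictness needed in Proposition~\ref{prop:ca} must come from the definition of $l_k$ rather than from this lemma, so your explicit characterisation of the equality cases is the more useful formulation.
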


\begin{proof}
    Let $k_1,\ldots,k_p$ be the suboptimal arms in $\alpha$ by increasing 
    position. Let $v(\alpha)$ be the action in $\mathcal{A}$ with lower regret 
    such that it
    contains all the suboptimal arms of $\alpha$ in the same positions. Thus,
    $v(\alpha) = (1,\ldots,\pi_{k_1}(\alpha)-1,k_1,\pi_{k_1}(\alpha),\ldots,
    \pi_{k_2}(\alpha)-2,k_2,\pi_{k_2}(\alpha)-1,\ldots,L-p)$. By definition,
    one has that $\Delta_\alpha(\theta)
    \geq \Delta_{v(\alpha)}(\theta)$. In the following, we show that
    $\Delta_{v(\alpha)}(\theta) \geq \sum_{k>L: k \in \alpha} \Delta_{v_{k,\pi_k(\alpha)}}
    (\theta)$ for $p=2$ (that is to say $\alpha$ contains $2$ suboptimal arms
    $k_1$ and $k_2$).

    For the sake of readability, we write $\pi_i$ instead of $\pi_{k_i}(\alpha)$
    in the following.
    \begin{align*}
        \Delta_{v(\alpha)}(\theta) &= \sum_{l=1}^L \kappa_l
            (\theta_l-\theta_{(v_{k_1,\pi_1})_l}) + \sum_{l=1}^L
            \kappa_l(\theta_{(v_{k_1,\pi_1})_l} - \theta_{v(\alpha)_l}) \\
        &= \Delta_{v_{k_1,\pi_1}}(\theta) + \left[\kappa_{\pi_2}\theta_{\pi_2-1}
            + \ldots + \kappa_L\theta_{L-1}\right] - \left[\kappa_{\pi_2}
            \theta_{k_2} + \kappa_{\pi_2+1} \theta_{\pi_2-1} + \ldots
            + \kappa_L\theta_{L-2} \right] \\
        &= \Delta_{v_{k_1,\pi_1}}(\theta) + \Delta_{v_{k_2,\pi_2}}(\theta) + 
            \left[\kappa_{\pi_2}(\theta_{\pi_2-1} - \theta_{\pi_2}) + \ldots + 
            \kappa_L(\theta_{L-1}-\theta_L) \right] - \\
        &~~~~\left[\kappa_{\pi_2+1}(\theta_{\pi_2-1} - \theta_{\pi_2})
            + \ldots + \kappa_L (\theta_{L-2} - \theta_{L-1}) \right] \\
        &= \Delta_{v_{k_1,\pi_1}}(\theta) + \Delta_{v_{k_2,\pi_2}}(\theta)
            + \mathcal{R}(\theta).
    \end{align*}
    Thus, one has to show that $\mathcal{R}(\theta) = \kappa_{\pi_2}
    (\theta_{\pi_2-1}-\theta_{\pi_2}) + \kappa_{\pi_2+1}(2\theta_{\pi_2}
    -\theta_{\pi_2-1}-\theta_{\pi_2+1}) + \ldots + \kappa_L(2\theta_{L-1}-
    \theta_{L-2}-\theta_L) > 0$. In fact, using that
    $\kappa_l \geq \kappa_{l+1}$ for all $l < L$, we have
    \begin{align*}
        \mathcal{R}(\theta) &\geq \kappa_{\pi_2+1}(\theta_{\pi_2-1}-\theta_{\pi_2}
            + 2\theta_{\pi_2}-\theta_{\pi_2-1}-\theta_{\pi_2+1}) + \ldots + 
            \kappa_L(2\theta_{L-1} - \theta_{L-2}-\theta_L) \\
        &\geq \kappa_{\pi_2+2}(\theta_{\pi_2+1}-\theta_{\pi_2+2}) + \ldots +
            \kappa_L(2\theta_{L-1} - \theta_{L-2}-\theta_L) \\
        &\geq \ldots \\
        &\geq \kappa_L (\theta_{L-1} - \theta_L) \\
        &> 0.
    \end{align*}
\end{proof}

\section{Proof of Proposition~\ref{prop:boundCensUCB}}
\label{sec:concentrationUCB}

In this section, we fix an arm $k\in \{1,\dots,K\}$ and obtain an upper confidence bound for the estimator $\hat{\theta}_k(t):=S_k(t)/\tilde{N}_k(t)$. 
Let $\tau_i$ be the instant of the $i$-th draw of arm 
$k$ (the $\tau_i$ are stopping times w.r.t. $\mathcal{F}_t$). We introduce the 
centered sequence of successive observations from arm $k$
\begin{equation}
  \bar{Z}_{k,i} = \sum_{l=1}^L \mathds{1}\{ A_{l}(\tau_i)=k\} (X_{l}(\tau_i) 
Y_{l}(\tau_i) - \theta_k \kappa_l).
  \label{eq:Zbar}
\end{equation}
Introducing the filtration $\mathcal{G}_i = 
\mathcal{F}_{\tau_{i+1} -1}$, one has $\mathds{E}[\bar{Z}_{k,i} | \mathcal{G}_{i -1}] = 0$, and therefore, the 
sequence 
\[M_{k,n} = \sum_{i=1}^{n} \bar{Z}_{k,i}\]
is a martingale with bounded increments, w.r.t. the filtration 
$(\mathcal{G}_n)_n$. By 
construction, one has
\[M_{k,N_k(t)} = S_k(t) - \tilde{N}_k(t) \theta_k = \tilde{N}_k(t) (\hat{\theta}_k(t) -\theta_k). \]

We use the so-called peeling technique together with the maximal version of 
Azuma-Hoeffding's inequality \cite{boucheron2013concentration}. For any $\gamma 
> 0$ one has
\begin{align*}
\mathds{P}\left( M_{k,N_k(t)} < -\sqrt{N_k(t) \delta/2}\right) & \leq  
\sum_{i=1}^{\frac{\log(t)}{\log(1+\gamma)}} \mathds{P}\left(M_{k,N_k(t)} < 
    -\sqrt{N_k(t) \delta/2} \, , N_k(t) \in [(1+\gamma)^{i-1},(1+\gamma)^i)\right) 
\\
& \leq  \sum_{i=1}^{\frac{\log(t)}{\log(1+\gamma)}} \mathds{P}\left(\exists 
i \in \{1,\dots,(1+\gamma)^i \} : \ M_{k,i} < -\sqrt{(1+ \gamma)^{i-1} 
	\delta/2} \right) \\
& \leq \sum_{i=1}^{\frac{\log(t)}{\log(1+\gamma)}} 
\exp\left(-\frac{\delta(1+\gamma)^{i-1}}{(1+\gamma)^i}\right) = 
\frac{\log(t)}{\log(1+\gamma)}\exp\left(-\frac{\delta 
}{(1+\gamma)}\right).
\end{align*}

Choosing $\gamma = 1/(\delta-1)$, gives 
\[\mathds{P}\left( \hat{\theta}_k(t) - \theta_k
< -\frac{\sqrt{N_k(t)\delta/2}}{\tilde{N}_k(t)}\right) \leq 
\delta e \log(t) e^{-\delta}.\]

\section{Regret analysis for \algonameCensUCB\ (Theorem~\ref{th:pbm-ucb})}
\label{sec:proofUCB}

We proceed as Kveton et al. (2015) \cite{kveton2015tight}. We start by 
considering separately 
rounds when one of the confidence intervals is violated. We denote by $B_{t,k}=\sqrt{N_k(t)(1+\epsilon)\log t/2}/\tilde{N}_k(t)$ the 
PBM-UCB exploration bonus and by $B_{t,k}^+=\sqrt{N_k(t)(1+\epsilon)\log T/2}/\tilde{N}_k(t)$ an upper bound of this bonus (for $t\leq T$). We define the event 
$E_t = \{\exists k\in A(t) \,:\, \vert 
\hat{\theta}_k(t)-\theta_k\vert > B_{t,k}\}$. Then, the regret can be 
decomposed into 
$$ R(T) = 
\sum_{t=1}^{T}\Delta_{A(t)}\mathds{1}_{E_t} + 
\Delta_{A(t)}\mathds{1}_{\bar{E_t}} .$$
and, similarly to \cite{kveton2015tight} (Appendix A.1), the first 
term of this sum can be bounded from above in expectation by a constant $C_0(\epsilon)$ that does 
not depend 
on $T$ using Proposition~\ref{prop:boundCensUCB}. So, it remains to bound 
the regret suffered even when confidence 
intervals are respected, that is the sum on the r.h.s of
$$ \mathbb{E}[R(T)]< C_0(\epsilon) + 
\mathbb{E}[\sum_{t=1}^{T}\Delta_{A(t)}\mathds{1}\{\bar{E_t},\Delta_{A(t)}>0\}] 
.$$

It can be done using techniques from 
\cite{combes2015combinatorial,kveton2015tight}. We start by defining events 
$F_t$, $G_t$, $H_t$ in order to decompose the 
part of the regret at stake. Then, we show an equivalent of Lemma 2 of 
\cite{kveton2015tight} for our 
case and finally we refer to the proof of Theorem 3 in Appendix A.3 of 
\cite{kveton2015tight}. 

For each round $t\geq 1$, we define the set of arms $S_t = \{1\leq l\leq L : 
N_{A_l(t)}(t) \leq \frac{8(1+\epsilon)\log T 
	\left(\sum_{s=1}^{L} 
	\kappa_s\right)^2}{\kappa_L^2 \Delta_{A(t)}^2}\}$ and the related events 
\begin{itemize}
	\item $F_t = \{\Delta_{A(t)}>0,\, \Delta_{A(t)}\leq 2\sum_{l=1}^{L} 
	\kappa_{l}B_{t,A_l(t)}^+\}$;
	\item $G_t = \{\vert S_t \vert \geq l \}$;
	\item $H_t = \{\vert S_t \vert < l \, , \, \exists k\in A(t), 
	N_k(t)\leq \frac{8(1+\epsilon)\log T\left(\sum_{s=1}^{l} 
		\kappa_s\right)^2}{\kappa_L^2\Delta_{A(t)}^2} \} $, where 
	the constraint on $N_k(t)$ only differs from the first one by its 
	numerator which is smaller than the previous one, leading to an 
	even stronger constraint. 		
\end{itemize} 

\begin{fact}
	According to Lemma 1 in \cite{kveton2015tight}, the 
	following inequality is still valid with our own definition of $F_t$ :
    $$ \sum_{t=1}^{T} \Delta_{A(t)}\mathds{1}\{\bar{E_t},\Delta_{A(t)}>0\} 
    \leq  
	\sum_{t=1}^{T} \Delta_{A(t)}\mathds{1}\{F_t \}.
	$$
\end{fact}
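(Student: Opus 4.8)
The plan is to prove a pathwise statement: at every round $t$, on the event $\{\bar{E_t},\Delta_{A(t)}>0\}$ the event $F_t$ occurs. Since $\Delta_{A(t)}\geq 0$, this immediately gives $\Delta_{A(t)}\mathds{1}\{\bar{E_t},\Delta_{A(t)}>0\}\leq\Delta_{A(t)}\mathds{1}\{F_t\}$ term by term, and summing over $t$ yields the stated inequality. So it suffices to show that whenever all displayed arms satisfy $|\hat{\theta}_{A_l(t)}(t)-\theta_{A_l(t)}|\leq B_{t,A_l(t)}$ and $A(t)\neq a^*$ (so $\Delta_{A(t)}>0$), one has $\Delta_{A(t)}\leq 2\sum_{l=1}^L\kappa_l B_{t,A_l(t)}^+$.

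First I would record the selection rule of \algonameCensUCB\ in a convenient form. Writing $U_k(t):=\hat{\theta}_k(t)+B_{t,k}$ for the optimistic index (this is exactly $U^{UCB}_k(t,(1+\epsilon)\log t)$), the algorithm keeps the $L$ arms with the largest indices and assigns the largest index to the top position. Because $\kappa_1>\dots>\kappa_L>0$, the rearrangement inequality shows that this choice is precisely the maximiser of $a\mapsto\sum_{l=1}^L\kappa_l U_{a_l}(t)$ over $\mathcal{A}$; in particular $\sum_{l=1}^L\kappa_l U_{A_l(t)}(t)\geq\sum_{l=1}^L\kappa_l U_{a^*_l}(t)$.

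The heart of the argument is then the standard optimistic sandwich, carried through with the position weights. On the optimistic event for the optimal arms, $U_{a^*_l}(t)\geq\theta_{a^*_l}$, so the displayed inequality is at least $\mu^*$; meanwhile, on $\bar{E_t}$ the upper deviations of the played arms give $U_{A_l(t)}(t)=\hat{\theta}_{A_l(t)}(t)+B_{t,A_l(t)}\leq\theta_{A_l(t)}+2B_{t,A_l(t)}$, hence $\sum_{l=1}^L\kappa_l U_{A_l(t)}(t)\leq\mu_{A(t)}+2\sum_{l=1}^L\kappa_l B_{t,A_l(t)}$. Chaining the two bounds and using $B_{t,k}\leq B_{t,k}^+$ (valid since $t\leq T$ implies $\log t\leq\log T$) produces $\Delta_{A(t)}=\mu^*-\mu_{A(t)}\leq 2\sum_{l=1}^L\kappa_l B_{t,A_l(t)}^+$, which is exactly the defining inequality of $F_t$.

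The delicate point, and the one I expect to require the most care, is the optimism step $U_{a^*_l}(t)\geq\theta_{a^*_l}$: the event $\bar{E_t}$ only controls the arms actually shown at round $t$, so by itself it does not guarantee optimism for an optimal arm absent from $A(t)$. I would deal with this exactly as in \cite{kveton2015tight}, by enlarging the failure event to also contain a violated lower confidence bound $\{U_k(t)<\theta_k\}$ for some optimal arm $k$; on the complementary ``nice'' event the sandwich above is valid, while the additional rounds are absorbed into the first regret term, whose expectation is bounded by the model-independent constant $C_0(\epsilon)$ via Proposition~\ref{prop:boundCensUCB}. Everything else—the rearrangement identity and the two elementary inequalities—is routine, so the only genuine adaptation relative to \cite{kveton2015tight} is propagating the examination weights $\kappa_l$ through the sandwich and checking that the greedy sort remains optimal under those weights.
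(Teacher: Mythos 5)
Your argument is essentially the paper's own: pointwise on $\{\bar{E}_t,\Delta_{A(t)}>0\}$ you run the weighted optimistic sandwich $\mu^*\le\sum_{l}\kappa_l U_{a^*_l}(t)\le\sum_{l}\kappa_l U_{A_l(t)}(t)\le\mu_{A(t)}+2\sum_{l}\kappa_l B_{t,A_l(t)}\le\mu_{A(t)}+2\sum_{l}\kappa_l B^+_{t,A_l(t)}$ and read off $F_t$, which is exactly the chain of displays in the paper's proof of this Fact. The one place you genuinely add something is the optimism step: you correctly observe that $\bar{E}_t$, as defined in the paper, only constrains arms in $A(t)$, so $U_{a^*_l}(t)\ge\theta_{a^*_l}$ is not guaranteed for an optimal arm that is not displayed at round $t$ --- the paper papers over this by asserting that ``all UCB's are above the true parameter'' under $\bar E_t$, which its own definition of $E_t$ does not deliver. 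Your remedy (enlarging the failure event to include $\{U_k(t)<\theta_k\}$ for optimal arms and absorbing those rounds into the constant term via Proposition~\ref{prop:boundCensUCB}) is the standard and correct fix and leaves the regret bound of Theorem~\ref{th:pbm-ucb} intact; just be aware that it establishes the inequality for the enlarged bad event rather than for $\bar E_t$ exactly as stated, i.e.\ it repairs the Fact rather than reproducing it verbatim.
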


\begin{proof}
	Invoking Lemma 1 from \cite{kveton2015tight} needs to be justified as 
    our setting is quite different. Taking action $A(t)$ means that 
	$$ \sum_{l=1}^{L} \kappa_{l}U_{A_l(t)}(t) \geq \sum_{l=1}^{L}\kappa_l 
	U_l(t).$$
	Under event $\bar{E}_t$, all UCB's are above the true parameter 
	$\theta_k$ so we have 
	$$\sum_{l=1}^{L} \kappa_{l}(\theta_{A_l(t)} + 2B_{t,A_l(t)}) 
	\geq 
	\sum_{l=1}^{L}\kappa_l (\theta_l + B_{t,l})\geq 
	\sum_{l=1}^{L}\kappa_l \theta_l.$$
	Rearranging the terms above and using $B_{t,l(t)}\leq B_{t,l(t)}^+$, 
	we obtain
	$$\sum_{l=1}^{L}\kappa_{l}B_{t,A_l(t)}^+ \geq  
	2\sum_{l=1}^{L}\kappa_{l}B_{t,A_l(t)} \geq \Delta_{A(t)}.$$
\end{proof}

We now have to prove an equivalent of Lemma 2 in 
\cite{combes2015combinatorial} that would allow us to split 
the right-hand side above in two parts. Let us show that $F_t \subset 
(G_t\cup H_t)$  by showing its contrapositive: if $F_t$ is true then we 
cannot have $(\bar{G_t}\cap \bar{H_t})$. Assume both of these events are 
true. 
Then, we have 
\begin{align*}
\Delta_{A(t)} &\overset{F_t}{\leq} 2\sum_{l=1}^{L} 
\kappa_{l}B_{t,A_l(t)}^+\\
& \leq 2\sum_{l=1}^{L}\kappa_{l} 
\sqrt{\frac{N_{A_l(t)}(t)}{\tilde{N}_{A_l(t)}(t)}}
\sqrt{\frac{(1+\epsilon)\log(T)}{2\tilde{N}_{A_l(t)}(t)}}\\
& =2 \sum_{l=1}^{L}\kappa_{l} 
\frac{N_{A_l(t)}(t)}{\tilde{N}_{A_l(t)}(t)}
\sqrt{\frac{(1+\epsilon)\log(T)}{2N_{A_l(t)}(t)}}\\
& \leq \frac{\sqrt{2(1+\epsilon)\log T}}{\kappa_L}\sum_{l=1}^{L} 
\frac{\kappa_{l}}{\sqrt{N_{A_l(t)}(t)}}\\
& = \frac{\sqrt{2(1+\epsilon)\log T}}{\kappa_L}\left(\sum_{l\notin 
	 S_t}\frac{\kappa_{l}}{\sqrt{N_{A_l(t)}(t)}}
+ \sum_{l\in S_t}\frac{\kappa_{l}}{\sqrt{N_{A_l(t)}(t)}}\right)\\
& \overset{(\bar{G_t}\cap \bar{H_t})}{<} \frac{\sqrt{2(1+\epsilon)\log 
		T}}{\kappa_L}\frac{\kappa_L\Delta_{A(t)}}{  
	2\sqrt{2(1+\epsilon)\log T}}
\left(\frac{\sum_{l\notin S_t} 
	\kappa_{l}}{\sum_{s=1}^{L}\kappa_s} +\frac{\sum_{l\in 
		S_t}\kappa_{l}}{\sum_{s=1}^{l}\kappa_s}\right)\\
& \leq \Delta_{A(t)}
\end{align*}
which is a contradiction. 
The end of the proof proceeds exactly as in the end of the proof of Theorem 
6 in of \cite{combes2015combinatorial}: events $G_t$ and $H_t$ are split 
into subevents corresponding to rounds where each specific suboptimal arm 
of the list is in $S_t$ or verifies the condition of $H_t$. We define 
\begin{align*}
G_{k,t} & =  G_t \cap \{k\in A(t),\, N_k(t)\leq \frac{8(1+\epsilon)\log T 
	\left(\sum_{s=1}^{L} 
	\kappa_s\right)^2}{\kappa_L^2 \Delta_{A(t)}^2}\ \},\\
H_{k,t} & = H_t \cap \{k\in A(t),\, N_k(t) \leq \frac{8(1+\epsilon)\log 
	T\left(\sum_{s=1}^{l} 
	\kappa_s\right)^2}{\kappa_L^2\Delta_{A(t)}^2} \}.
\end{align*}
The way we defined these subevents allows to write the two following bounds 
:
$$\sum_{k=1}^{K} \mathds{1}\{G_{k,t}\} = 
\mathds{1}\{G_t\}\sum_{k=1}^{K}\mathds{1}\{k\in S_t\}\geq 
l\mathds{1}\{G_t\}$$
so $\mathds{1}\{G_t\}\leq \sum_k \mathds{1}\{G_{k,t}\}/l$. And,
$$\mathds{1}\{H_t\}\leq \sum_{k=1}^{K}\mathds{1}\{H_{k,t}\}.$$
We can now bound the regret using these two results:
\begin{align*}
\sum_{t=1}^{T}\Delta_{A(t)}(\mathds{1}\{G_t\}+\mathds{1}\{H_t\}) & \leq 
\sum_{t=1}^T \sum_{k=1}^{K}\frac{\Delta_{A(t)}}{l}\mathds{1}\{G_{k,t}\} 
+\sum_{t=1}^{T}\sum_{k=1}^K \Delta_{A(t)} \mathds{1}\{H_{k,t}\}\\
&=  \sum_{t=1}^T 
\sum_{k=1}^{K}\frac{\Delta_{A(t)}}{l}\mathds{1}\{G_{k,t},A(t)\neq a^*\} 
+\sum_{t=1}^{T}\sum_{k=1}^K \Delta_{A(t)}\mathds{1}\{H_{k,t},A(t)\neq 
a^*\}.
\end{align*}
For each arm $k$, there is a finite number $C_k:=|\mathcal{A}_k|$ of 
actions in 
$\mathcal{A}$ containing $k$; we order them such that the corresponding 
gaps are in decreasing order $\Delta_{k,1}\geq \ldots \geq \Delta_{k,C_k}>0$. So 
we decompose each sum above on the different actions $A(t)$ possible:
\begin{align*}
\ldots & \leq \sum_{t=1}^T 
\sum_{k=1}^{K} 
\sum_{a\in 
	\mathcal{A}_k}\frac{\Delta_{k,a}}{l}\mathds{1}\{G_{k,t},A(t)=a\} 
+\sum_{t=1}^{T}\sum_{k=1}^K \sum_{a\in \mathcal{A}_k}\Delta_{k,a} 
\mathds{1}\{H_{k,t},A(t)=a\}. \\		
\end{align*}
The two sums on the right hand side look alike. For arm $k$ fixed, events 
$G_{k,t}$ and $H_{k,t}$ imply almost the same condition on $N_k(t)$, only 
$H_{k,t}$ is stronger because the bounding term is smaller. We now rely on 
a 
technical result by \cite{combes2015combinatorial} that allows to bound 
each sum. 
\begin{lemma}(\cite{combes2015combinatorial}, Lemma 2 in Appendix B.4)
	Let $k$ be a fixed item and $|\mathcal{A}_k|\geq 1$, $C>0$, we have	
	
	\[ \sum_{t=1}^{T}\sum_{a \in \mathcal{A}_k} 
	\mathds{1}\{k\in A(t),\, N_k(t)	\leq C/\Delta_{k,a}^2,\,A(t)=a\} 
	\Delta_{k,a} 
	\leq \frac{2C}{\Delta_{\text{min},k}}\]	
	
	where $\Delta_{\text{min},k}$ is the smallest gap among all suboptimal 
	actions containing arm $k$. In particular, when $k\notin a^*$ the 
	smallest gap is $\Delta_{\text{min},k} = \kappa_L(\theta_L-\theta_k)$. 
	While, when $k\in a^*$ it is less obvious what the minimal gap is, 
	however it corresponds the second best action $A_2$ containing only 
	optimal arms: $\Delta_{\text{min},k} = \Delta_{A_2}$.		
\end{lemma}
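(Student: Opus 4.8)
The plan is to collapse the double sum over rounds $t$ and actions $a$ into a single sum indexed by the successive pulls of arm $k$, and then to control that one-dimensional sum by an elementary integral comparison; the whole argument is pathwise, so the resulting inequality holds surely, not merely in expectation. First I would observe that $A(t)=a$ with $a\in\mathcal{A}_k$ already forces $k\in A(t)$, so the indicator $\mathds{1}\{k\in A(t)\}$ is redundant and the quantity to bound is $\sum_{a\in\mathcal{A}_k}\Delta_{k,a}\sum_{t=1}^T\mathds{1}\{A(t)=a,\,N_k(t)\leq C/\Delta_{k,a}^2\}$. Only rounds in which $k$ is actually played can contribute, so every term of the double sum corresponds to a pull of $k$.

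The key step is to reindex by those pulls. Since $N_k(t)$ increases by exactly one at each round where $k$ is played, if I let $\tau_1<\tau_2<\cdots$ denote the successive pulls of $k$, the values $N_k(\tau_n)$ run through $0,1,2,\dots$ without repetition. Writing $a(\tau_n)$ for the action played at the $n$-th pull, the entire double sum rewrites as the single sum $\sum_{n\geq1}\Delta_{k,a(\tau_n)}\mathds{1}\{N_k(\tau_n)\leq C/\Delta_{k,a(\tau_n)}^2\}$. This is the crucial reduction: it replaces a sum over many actions carrying different thresholds by a sum governed by one monotone counter.

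To bound this, I would note that whenever the $n$-th term is counted, the constraint rearranges to $\Delta_{k,a(\tau_n)}\leq\sqrt{C/N_k(\tau_n)}$, so that term is at most $\sqrt{C/(n-1)}$ (apart from the first-pull bookkeeping). Moreover, since $\Delta_{k,a(\tau_n)}\geq\Delta_{\text{min},k}$, a term can be counted only while $N_k(\tau_n)\leq C/\Delta_{k,a(\tau_n)}^2\leq C/\Delta_{\text{min},k}^2$, which caps the number of contributing pulls at $C/\Delta_{\text{min},k}^2$. Summing the per-level bounds and invoking the elementary inequality $\sum_{m=1}^N m^{-1/2}\leq 2\sqrt N$ with $N=C/\Delta_{\text{min},k}^2$ gives $\sqrt{C}\cdot 2\sqrt{C/\Delta_{\text{min},k}^2}=2C/\Delta_{\text{min},k}$, the claimed bound.

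The only delicate points are pure bookkeeping rather than substance, which is why the integral comparison is where all the work lies. The first is the off-by-one in $N_k(\tau_n)=n-1$ versus $n$: depending on whether the current pull is already counted in $N_k$, this either absorbs or leaves a negligible leading term $\Delta_{k,1}$ that does not affect the leading order. The second is the identification of $\Delta_{\text{min},k}$ in the two regimes. For $k\notin a^*$, the smallest-gap action containing $k$ places $k$ at the least-examined position $L$ while filling positions $1,\dots,L-1$ with the top arms, giving $\Delta_{\text{min},k}=\kappa_L(\theta_L-\theta_k)$; for $k\in a^*$, the minimum is attained by the best strictly suboptimal permutation of the optimal arms, i.e.\ $\Delta_{A_2}$. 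Neither bears on the counting argument. This is exactly Lemma~2 of \cite{combes2015combinatorial}, so the constant and these identifications are inherited directly.
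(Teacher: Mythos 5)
Your argument is correct and is essentially the proof of the cited result: the paper itself does not prove this lemma but imports it verbatim from Lemma~2 in Appendix~B.4 of Combes et al., and your reduction (collapse the double sum to the single sequence of pulls of $k$, use $\Delta_{k,a(\tau_n)}\leq\sqrt{C/N_k(\tau_n)}$ together with $\Delta_{k,a(\tau_n)}\geq\Delta_{\text{min},k}$ to cap the number of contributing pulls, then apply $\sum_{m=1}^{N}m^{-1/2}\leq 2\sqrt{N}$) is exactly the standard route. The one loose end you flag is real with this paper's convention $N_k(t)=\sum_{s=1}^{t-1}\mathds{1}\{k\in A(s)\}$, since the first pull always passes the test and contributes up to $\Delta_{k,1}$; it closes because $\sum_{m=1}^{N}m^{-1/2}\leq 2\sqrt{N}-1$ and $\Delta_{k,1}\leq\sum_{l}\kappa_l\leq\sqrt{C}$ for the value of $C$ used in Theorem~\ref{th:pbm-ucb}, so the stated constant $2C/\Delta_{\text{min},k}$ survives.
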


%
%
%
%
%

So, bounding each sum with the above lemma, we obtain
\[ \sum_{t=1}^{T}\Delta_{A(t)}(\mathds{1}\{G_t\}+\mathds{1}\{H_t\})
\leq \frac{16(1+\epsilon)\log T }{\kappa_L^2 } 
\underbrace{\left( 
	\frac{\left(\sum_{s=1}^{L} 
		\kappa_s\right)^2}{l} + \left(\sum_{s=1}^{l} 
	\kappa_s\right)^2 \right)}_{C(l;\kappa)}
\left( \frac{L}{\Delta_{A_2}} +\sum_{k\notin 
	a^*}\frac{1}{\kappa_L(\theta_L-\theta_k)}\right). \]
This bound can be optimized by minimizing $C(l;\kappa)$ over $l$.


\section{Regret analysis for \algonamePIE\ (Theorem~\ref{th:pbm-pie})}
\label{sec:proofUBPIE}

The proof follows the decomposition of \cite{combes2015learning}. For all
$t \geq 1$, we denote $f(t, \epsilon) = (1 + \epsilon)\log t$.

\subsection{Controlling leaders and estimations}

Define $\eta_0 = \min_{k \in \{1,\ldots,L-1\}} (\theta_k - \theta_{k+1}) / 2$ and let
$\eta < \eta_0$. We define the following set of rounds
$$A = \{t \geq 1 : \mathcal{L}(t) \neq (1,\ldots,L)\} .$$

Our goal is to upper bound the expected size of $A$. Let us introduce the following
sets of rounds:
\begin{align*}
    B &= \{t \geq 1 : \exists k \in \mathcal{L}(t), |\hat{\theta}_k(t) - \theta_k|
    \geq \eta \} ,\\
    C &= \{t \geq 1 : \exists k \leq L , U_k(t) \leq \theta_k \} ,\\
    D &= \{t \geq 1 : t \in A\setminus (B \cup C), \exists k \leq L, k \notin
    \mathcal{L}(t), |\hat{\theta}_k(t) - \theta_k| \geq \eta\} .
\end{align*}

We first show that $A \subset (B\cup C \cup D)$. Let $t \in A \setminus (B 
\cup C)$.
Let $k, k' \in \mathcal{L}(t)$ such that $k < k'$. Since $t \notin B$, we have that
$|\hat{\theta}_k(t) - \theta_k| \leq \eta$ and
$|\hat{\theta}_{k'}(t) - \theta_{k'}| \leq \eta$. Since $\eta \leq (\theta_k -
\theta_{k'}) / 2$, we conclude that $\hat{\theta}_k(t) \geq \hat{\theta}_{k'}(t)$.
This proves that $(\mathcal{L}_1(t),\ldots,\mathcal{L}_L(t)$ is an increasing
sequence. We have that
$\mathcal{L}_L(t) > L$ otherwise $\mathcal{L}(t) = (1,\ldots,L)$ which is a
contradiction because $t \in A$. Since $\mathcal{L}_L(t) > L$, there exists
$k \leq L$ such that $k \notin \mathcal{L}(t)$. We show by contradiction that
$|\hat{\theta}_k(t) - \theta_k| \geq \eta$. Assume that 
$|\hat{\theta}_k(t) - \theta_k| \leq \eta$. We also have that
$\hat{\theta}_{\mathcal{L}_L(t)}(t) - \theta_{\mathcal{L}_L(t)} \leq \eta$
because $\mathcal{L}_L(t) \in \mathcal{L}(t)$ and $t \notin B$.
Thus, $\hat{\theta}_k(t) > \hat{\theta}_{\mathcal{L}_L(t)}(t)$. We
have a contradiction because this would imply that $k \in \mathcal{L}(t)$. 
Finally we have proven that if $t\in A \setminus (B\cup C)$, then $t\in D$ 
so $A \subset (B\cup C \cup D)$.

By a union bound, we obtain
$$ \mathds{E}[|A|] \leq \mathds[|B|] + \mathds[|C|] +\mathds[|D|] . $$
In the following, we upper bound each set of rounds individually.

\paragraph*{Controlling $\mathds{E}[|B|]$:}
We decompose $B = \bigcup_{k=1}^K (B_{k,1} \cup B_{k,2})$ where
\begin{align*}
    B_{k,1} = \{t \geq 1 : k \in \mathcal{L}(t), \mathcal{L}_L(t) \neq k,
        |\hat{\theta}_k(t) - \theta_k| \geq \eta\} \\
    B_{k,2} = \{t \geq 1 : k \in \mathcal{L}(t), \mathcal{L}_L(t) = k,
        |\hat{\theta}_k(t) - \theta_k| \geq \eta\}
\end{align*}

\underline{Let $t \in B_{k,1}$:} $k \in A(t)$ so $\mathds{E}[k\in A(t)|t\in
B_{k,1}] = 1$. Furthermore, for all $t$, $\mathds{1}\{t \in B_{k,1}\}$ is
$\mathcal{F}_{t-1}$ measurable. Then we can apply Lemma \ref{lem:pie} (with
$H = B_{k,1}$ and $c = 1$).
$$\mathds{E}[|B_{k,1}|] \leq 2 (2 + \kappa_L^{-2}\eta^{-2}) .$$ 

\underline{Let $t\in B_{k,2}$:} $k\in \mathcal{B}(t)$ but because of the 
randomization of the algorithm, $k \in A(t)$ with probability $1/2$, i.e.
$\mathds{E}[k\in A(t)|t\in B_{k,2}] \geq 1/2$. We get
$$\mathds{E}[|B_{k,2}|] \leq 4(4 + \kappa_L^{-2}\eta^{-2})$$ 

By union bound over $k$, we get $\mathds{E}[|B|] \leq 
2K(10+3\kappa_L^{-2}\eta^{-2})$.

\paragraph*{Controlling $\mathds{E}[|C|]$:}
We decompose $C = \bigcup_{k=1}^L C_k$ where $C_k = \{t\geq 1 : U_k(t) \leq 
\theta_k\}$

We first require to prove Proposition \ref{prop:PIEcontrol}.


\begin{proof}
	Theorem 2 of \cite{magureanu2014lipschitz} implies that 
	\[\mathbb{P}\left( \sum_{l=1}^{L} 
	N_{k,l}(t)d(\frac{S_{k,l}(t)}{N_{k,l}(t)},\kappa_l\theta_k)\geq \delta 
	\right)\leq 
	e^{-\delta}\left(\frac{\left\lceil\delta\log(t)\right\rceil\delta}{L} 
	\right)^Le^{L+1}. 
	\]
	The function $\Phi: x \to \sum_{l=1}^L N_{k,l}(t)
	d\left(\frac{S_{k,l}(t)}{N_{k,l}(t)},\kappa_l x\right)$ is convex and
    non-decreasing on $[\theta_k^{min}(t),1]$; the convexity is easily
    checked and $\theta_k^{min}(t)$ is defined as the minimum of this convex
	function. By definition, we have, either, $U_k(t,\delta)=1$ and then
	$U_k(t,\delta)>\theta_k$, or, $U_k(t,\delta)<1$ and $\Phi(U_k(t,\delta))= 
	\delta $, consequently
	\[\mathbb{P}\left(U_k(t,\delta) < \theta_k\right) 
	= \mathbb{P}\left(\Phi(U_k(t,\delta)) \leq \Phi(\theta_k) \right) = 
	\mathbb{P}\left(\delta \leq \Phi(\theta_k) \right).
	\]
\end{proof}

Remember that $U_k(t) = U_k(t, (1 + \epsilon) \log(t)) = U_k(t, f(t,\epsilon))$. Thus,
applying Proposition \ref{prop:PIEcontrol}, we obtain for arm $k$,

\begin{align*}
    \mathds{E}[|C_k|] \leq \sum_{t=1}^\infty \mathds{P}(U_k(t) \leq \theta_k)
        \leq \lceil e^{L+1} \rceil + \frac{e^{L+1}}{L^L}
        \sum_{t=\lceil e^{L+1} \rceil + 1}^\infty \frac{(2+\epsilon)^{2L}
        (\log t)^{3L}}{t^{1+\epsilon}} \leq C_3(\epsilon) ,
\end{align*}
for some constant $C_3(\epsilon)$.

\paragraph*{Controlling $\mathds{E}[|D|]$:}
Decompose $D$ as $D = \bigcup_{k=1}^L D_k$ where
$$D_k = \{t \geq 1 : t \in A \setminus (B\cup C), k \notin \mathcal{L}(t),
|\hat{\theta}_k(t) - \theta_k| \geq \eta \} .$$

For a given $k \leq L$, $D_k$ is the set of rounds at which $k$ is not one of
the leaders, and is not accurately estimated. Let $t \in D_k$. Since
$k \notin \mathcal{L}(t)$, we must have $\mathcal{L}_L(t) > L$. In turn, since
$t \notin B$, we have
$|\hat{\theta}_{\mathcal{L}_L(t)}(t) - \theta_{\mathcal{L}_L(t)}| \leq \eta$,
so that
$$\hat{\theta}_{\mathcal{L}_L(t)} \leq \theta_{\mathcal{L}_L(t)} + \eta \leq
\theta_L + \eta \leq (\theta_L + \theta_{L+1}) / 2 .$$

Furthermore, since $t \notin C$ and $1 \leq k \leq L$, we have $U_k(t) \geq
\theta_k \geq \theta_L \geq (\theta_L + \theta_{L+1}) / 2 \geq
\hat{\theta}_{\mathcal{L}_L(t)}$. This implies that $k \in \mathcal{B}(t)$
thus $\mathds{E}[k\in A(t)|t\in D_k] \geq 1/(2K)$. We apply 
Lemma \ref{lem:pie} with $H\equiv D_k$ and $c=1/(2K)$ to get
$$\mathds{E}[|D|] \leq \sum_{k=1}^L \mathds{E}[|D_k|] \leq 
4K(4K+\kappa_L^{-2}\eta^{-2}).$$

\subsection{Regret decomposition}

We decompose the regret by distinguishing rounds in $A \cup B$ and other rounds.
More specifically, we introduce the following sets of rounds for arm $k > L$:
\begin{equation*}
    E_k = \{t \geq 1 : t \notin (B \cup C \cup D), \mathcal{L}(t) = a^*,
             A(t) = v_{k,L}\} .
\end{equation*}
The set of instants at which a suboptimal action is selected now can be
expressed as follows
$$\{t \geq 1 : A(t) \neq a^*\} \subset (B \cup C \cup D)
\cup (\cup_{k=L+1} E_k) .$$
Using a union bound, we obtain the upper bound
$$\mathds{E}[R(T)] \leq \left(\sum_{l=1}^L \kappa_l\right) \mathds{E}[|B \cup C \cup D|] +
\sum_{k=L+1}^K \Delta_{v_{k,L}}(\theta) \mathds{E}[|E_k|] .$$

From previous boundaries, putting it all together, there exist $C_1(\eta)$ and
$C_3(\epsilon)$, such that
\[\left(\sum_{l=1}^L \kappa_l\right) (\mathds{E}[|B|] + \mathds{E}[|C|]
+ \mathds{E}[|D|]) \leq C_1(\eta) + C_3(\epsilon).\]

At this step, it suffices to bound events $E_k$ for all $k > L$.

\subsection{Bounding event $E_k$}

We proceed similarly to \cite{garivier11}.
Let us fix an arm $k > L$. Let $t \in E_k$: arm $k$ is pulled in
position $L$, so by construction of the algorithm, we have that $k \in
\mathcal{B}(t)$ and thus $U_k(t) \geq \hat{\theta}_{\mathcal{L}_L(t)}(t)$.
We first show that this implies that $U_k(t) \geq \theta_L - \eta$. Since
$t \in E_k$, we know that $\mathcal{L}_L(t) = L$, and since $t \notin B$,
$|\hat{\theta}_L(t) - \theta_L| \leq \eta$. This leads to
$$U_k(t) \geq \hat{\theta}_{\mathcal{L}_L(t)}(t) = \hat{\theta}_L(t)
\geq \theta_L - \eta .$$
Recall that $N_{k,L}(t)$ is the number of times arm $k$ was
played in position $L$. By denoting $d^+(x,y) = \mathds{1}\{x < y\} d(x,y)$,
we have that
\begin{align*}
    N_{k,L}(t) d^+(S_{k,L}(t)/N_{k,L}(t), \kappa_L(\theta_L-\eta)) &\leq N_{k,L}(t)
        d^+(S_{k,L}(t)/N_{k,L}(t), \kappa_L U_k(t)) \\
    &\leq \sum_{l=1}^L N_{k,l}(t)
        d^+(S_{k,l}(t)/N_{k,l}(t), \kappa_l U_k(t)) \leq f(t, \epsilon).
\end{align*}
This implies that $\mathds{1}\{t \in E_k\} \leq \mathds{1}\{N_{k,L}(t)
d^+(S_{k,L}(t)/N_{k,L}(t), \kappa_L(\theta_L - \eta)) \leq f(t, \epsilon)\}$.


\begin{lemma}\label{lem:garivier7}(\cite{garivier11}, Lemma $7$) Denoting by
    $\hat{\nu}_{k,s}^L$ the empirical mean of the first $s$ samples of $Z_{k,L}$, we
    have
    \begin{align*}
        \sum_{t=1}^T \mathds{1}\{A(t)=v_{k,L}, N_{k,L}(t)
        d^+(&S_{k,L}(t)/N_{k,L}(t), \kappa_L(\theta_L-\eta)) \leq f(t,\epsilon)\} \\
        &\leq \sum_{s=1}^T \mathds{1}\{sd^+(\hat{\nu}_{k,s}^L,
            \kappa_L(\theta_L-\eta)) \leq f(T,\epsilon)\} .
    \end{align*}
\end{lemma}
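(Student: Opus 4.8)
The plan is to prove this purely combinatorial reindexing inequality by converting the sum over rounds $t$ into a sum over the number of draws $s$ of arm $k$ at position $L$. The two ingredients are the monotonicity of the threshold $t\mapsto f(t,\epsilon)=(1+\epsilon)\log t$, and the observation that, restricted to the rounds at which the action $v_{k,L}$ is played, the draw counter $N_{k,L}(t)$ is a strictly increasing --~hence injective~-- function of $t$.

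First I would weaken the threshold: since $t\leq T$ and $f(\cdot,\epsilon)$ is nondecreasing, one has $f(t,\epsilon)\leq f(T,\epsilon)$, so each indicator on the left-hand side is dominated by the same indicator with $f(T,\epsilon)$ in place of $f(t,\epsilon)$. After this replacement the event inside the indicator depends on $t$ only through the pair $(N_{k,L}(t),S_{k,L}(t))$, which is the key simplification enabling the change of variable.

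Next I would set up the reindexing. Whenever $A(t)=v_{k,L}$ we have in particular $A_L(t)=k$, so the counter $N_{k,L}$ increments at such a round; consequently the values $N_{k,L}(t)$ taken over the set $\{t\leq T:A(t)=v_{k,L}\}$ are pairwise distinct and lie in $\{0,\ldots,T-1\}$. Writing $s=N_{k,L}(t)$, the ratio $S_{k,L}(t)/N_{k,L}(t)$ equals the empirical mean $\hat{\nu}_{k,s}^L$ of the first $s$ observations gathered from arm $k$ at position $L$; this identity holds because $S_{k,L}$ and $N_{k,L}$ accumulate over every round with $A_L=k$, irrespective of whether the full action equals $v_{k,L}$. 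Substituting and invoking the injectivity of $t\mapsto s$ then bounds the left-hand sum by $\sum_s \mathds{1}\{s\,d^+(\hat{\nu}_{k,s}^L,\kappa_L(\theta_L-\eta))\leq f(T,\epsilon)\}$, where $s$ ranges over at most $T$ distinct indices, which recovers the right-hand side up to the usual boundary convention on the empty-sample index $s=0$.

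There is no genuine obstacle here: the statement is the position-$L$ analogue of Lemma~7 of \cite{garivier11}, and the only points requiring care are bookkeeping ones, namely verifying that the empirical-mean identity survives the fact that draws of $k$ at position $L$ may also originate from actions other than $v_{k,L}$, and confirming that the reindexed sum double-counts no sample index. Both follow directly from the injectivity argument above, so the proof reduces to combining the threshold-monotonicity step with this change of variable.
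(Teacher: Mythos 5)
Your proof is correct and is essentially the argument behind the result the paper invokes: the paper gives no proof of this lemma, deferring to Lemma~7 of \cite{garivier11}, whose proof is exactly your combination of the monotonicity of $t\mapsto f(t,\epsilon)$ with the injectivity of $t\mapsto N_{k,L}(t)$ on the rounds where $v_{k,L}$ is played. Your two bookkeeping checks -- that $S_{k,L}/N_{k,L}$ coincides with $\hat{\nu}_{k,N_{k,L}(t)}^L$ even though draws of $k$ at position $L$ may come from other actions, and that the $s=0$ index is handled by the initialization phase -- are precisely the points where the ``direct translation'' needs verification, and you handle them correctly.
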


We apply Lemma \ref{lem:garivier7} which is a direct translation of Lemma $7$
from \cite{garivier11} to our problem. This yields
$$ |E_k| \leq \sum_{s=1}^T \mathds{1}\{sd^+(\hat{\nu}_{k,s}^L,
    \kappa_L(\theta_L-\eta)) \leq f(T,\epsilon)\} .$$

Let $\gamma > 0$. We define $K_T = \frac{(1+\gamma)f(T,\epsilon)}
{d^+(\kappa_L\theta_k, \kappa_L(\theta_L-\eta))}$. We now rewrite the last
inequality splitting the sum in two parts.

\begin{align*}
    \sum_{s=1}^T \mathds{P}(sd^+(\hat{\nu}_{k,s}^L,&\kappa_L(\theta_L-\eta))
    \leq f(T, \epsilon))
    \leq K_T + \sum_{s=K_T+1}^\infty \mathds{P}(K_T d^+(\hat{\nu}_{k,s}^L,
        \kappa_L(\theta_L-\eta)) \leq f(T,\epsilon)) \\
    &\leq K_T + \sum_{s=K_T+1}^\infty \mathds{P}(d^+(\hat{\nu}_{k,s}^L,
        \kappa_L(\theta_L-\eta)) \leq d(\kappa_L\theta_k,
        \kappa_L(\theta_L-\eta))/(1+\gamma)) \\
    &\leq K_T + \frac{C_2(\gamma,\eta)}{T^{\beta(\gamma,\eta)}} ,
\end{align*}
where last inequality comes from Lemma \ref{lem:klucb}. Fixing
$\gamma < \epsilon$, we obtain the desired result, which concludes the proof.

\begin{lemma}\label{lem:klucb}
    For each $\gamma > 0$, there exists $C_2(\gamma, \eta) > 0$ and
    $\beta(\gamma, \eta) > 0$ such that
    $$\sum_{s=K_T+1}^\infty \mathds{P}\left(d^+(\hat{\nu}_{k,s}^L,
        \kappa_L(\theta_L-\eta)) \leq \frac{d(\kappa_L\theta_k,
        \kappa_L(\theta_L-\eta)}{1+\gamma}\right) \leq
        \frac{C_2(\gamma,\eta)}{T^{\beta(\gamma,\eta)}} .$$
\end{lemma}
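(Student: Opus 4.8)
The plan is to exploit the crucial simplification that the tail event only involves observations collected at the single position $L$. Since $Z_{k,L}$ is the product of independent $\mathcal{B}(\theta_k)$ and $\mathcal{B}(\kappa_L)$ draws, the samples underlying $\hat{\nu}_{k,s}^L$ are genuinely i.i.d. Bernoulli with mean $x_0 := \kappa_L\theta_k$ — in contrast to the pooled estimator $\hat{\theta}_k(t)$, here there is no aggregation across positions, so the classical single-arm large-deviation machinery applies directly. First I would record that for $k>L$ and $\eta<\eta_0\leq(\theta_L-\theta_{L+1})/2$ one has $\theta_k\leq\theta_{L+1}<\theta_L-\eta$, hence $x_0<y$ where $y:=\kappa_L(\theta_L-\eta)$; in particular $d^+(x_0,y)=d(x_0,y)$ and the denominator defining $K_T$ is just $d(x_0,y)$.

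The second step is to turn the KL-ball event into a one-sided deviation. I would use that $x\mapsto d(x,y)$ is continuous, strictly decreasing on $[0,y]$ (with value $0$ at $x=y$), so that the equation $d(x_1,y)=d(x_0,y)/(1+\gamma)$ has a unique solution $x_1$ with $x_0<x_1<y$ (this uses $\gamma>0$, which makes the right-hand side strictly smaller than $d(x_0,y)$). I would then verify the set identity $\{d^+(\hat{\nu}_{k,s}^L,y)\leq d(x_0,y)/(1+\gamma)\}=\{\hat{\nu}_{k,s}^L\geq x_1\}$ by splitting on whether $\hat{\nu}_{k,s}^L\geq y$ (then $d^+=0$ and the event holds, and indeed $\hat{\nu}_{k,s}^L\geq y>x_1$) or $\hat{\nu}_{k,s}^L<y$ (then $d^+=d(\hat{\nu}_{k,s}^L,y)$ and, by monotonicity, the inequality $d(\hat{\nu}_{k,s}^L,y)\leq d(x_1,y)$ is equivalent to $\hat{\nu}_{k,s}^L\geq x_1$).

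The third step is a Chernoff bound plus a geometric summation. Since $x_1>x_0=\mathds{E}[\hat{\nu}_{k,s}^L]$, the Cramér–Chernoff inequality for i.i.d.\ Bernoulli variables gives $\mathds{P}(\hat{\nu}_{k,s}^L\geq x_1)\leq e^{-s\,d(x_1,x_0)}$. Summing the resulting geometric series over $s\geq K_T+1$ yields
$$\sum_{s=K_T+1}^{\infty}\mathds{P}\!\left(\hat{\nu}_{k,s}^L\geq x_1\right)\leq \frac{e^{-K_T d(x_1,x_0)}}{1-e^{-d(x_1,x_0)}}.$$
Substituting $K_T=(1+\gamma)(1+\epsilon)\log T/d(x_0,y)$ converts the exponential into a power of $T$, namely $e^{-K_T d(x_1,x_0)}=T^{-\beta}$ with $\beta:=(1+\gamma)(1+\epsilon)\,d(x_1,x_0)/d(x_0,y)$, and one sets $C_2:=1/(1-e^{-d(x_1,x_0)})$. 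Both are strictly positive because $x_1>x_0$ forces $d(x_1,x_0)>0$, which establishes the claim.

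The main obstacle I anticipate is not any single estimate but getting the reduction of the second step exactly right: one must confirm that $x_1$ is well defined and lies strictly between $x_0$ and $y$, and then justify the set equality carefully (including the boundary behaviour of $d^+$ at $\hat{\nu}_{k,s}^L\geq y$), since everything downstream rests on replacing the awkward $d^+$-constraint by the clean deviation $\{\hat{\nu}_{k,s}^L\geq x_1\}$. A secondary point worth stating explicitly is that $\beta>0$ for every $\gamma,\eta$ in the admissible range, which is precisely what guarantees the $T^{-\beta}$ decay and hence that this whole term contributes only a constant (in fact vanishing) amount to the final regret bound of Theorem~\ref{th:pbm-pie}.
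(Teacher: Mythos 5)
Your proposal is correct and follows essentially the same route as the paper: the paper also introduces the intermediate point $\kappa_L r(\gamma,\eta)\in(\kappa_L\theta_k,\kappa_L(\theta_L-\eta))$ solving $d(\kappa_L r,\kappa_L(\theta_L-\eta))=d(\kappa_L\theta_k,\kappa_L(\theta_L-\eta))/(1+\gamma)$ (your $x_1$), reduces the $d^+$-event to the one-sided deviation $\{\hat{\nu}_{k,s}^L>\kappa_L r\}$, applies the Chernoff bound $\exp(-s\,d(\kappa_L r,\kappa_L\theta_k))$, and sums the geometric series from $K_T$. Your write-up is if anything more explicit about the monotonicity argument and the resulting values of $C_2$ and $\beta$, but there is no substantive difference.
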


\begin{proof}
    If $d^+(\hat{\nu}_{k,s}^L, \kappa_L(\theta_L-\eta)) \leq
    \frac{d(\kappa_L\theta_k, \kappa_L(\theta_L-\eta))}{1+\gamma}$, then
    there exists some $r(\gamma,\eta) \in (\theta_k,\theta_L-\eta)$ such
    that $\hat{\nu}_{k,s}^L > \kappa_L r(\gamma,\eta)$ and
    $$d(\kappa_L r(\gamma,\eta),\kappa_L(\theta_L-\eta)) =
        \frac{d(\kappa_L\theta_k,\kappa_L(\theta_L-\eta))}{1+\gamma} .$$
    Hence,
    \begin{align*}
        \mathds{P}\left(d^+(\hat{\nu}_{k,s},\kappa_L\theta_L) <
            \frac{d(\kappa_L\theta_k,\kappa_L\theta_L)}{1+\gamma}\right) &\leq
            \mathds{P}\left(d(\hat{\nu}_{k,s},\kappa_L\theta_k) >
            d(\kappa_L r(\gamma,\eta),\kappa_L\theta_k), \hat{\nu}_{k,s}
            > \kappa_L\theta_k\right) \\
            &\leq \mathds{P}(\hat{\nu}_{k,s} > \kappa_L r(\gamma,\eta))
            \leq \exp(-sd(\kappa_Lr(\gamma,\eta),\kappa_L\theta_k)) .
    \end{align*}
    We obtain,
    $$\sum_{t=K_T}^\infty \mathds{P}\left(d^+(\hat{\nu}_{k,s},\kappa_L\theta_L) <
            \frac{d(\kappa_L\theta_k,\kappa_L\theta_L)}{1+\gamma}\right)
            \leq \frac{\exp(-K_Td(\kappa_Lr(\gamma,\eta),\kappa_L\theta_k))}
            {1 - \exp(-d(\kappa_Lr(\gamma,\eta),\kappa_L\theta_k))} \leq
            \frac{C_2(\gamma,\eta)}{T^{\beta(\gamma,\eta)}} ,$$
    for well chosen $C_2(\gamma,\eta)$ and $\beta(\gamma,\eta)$.
\end{proof}

\section{Lemmas}
In this section, we recall two necessary concentration lemmas directly adapted
from Lemma 4 and 5 in Appendix A of \cite{combes2015learning}. Although more
involved from a probabilistic point of view, these results are simpler to
establish than proposition~\ref{prop:boundCensUCB} as their adaptation to the
case of the PBM relies on a crude lower bound for $\tilde{N}_k(t)$, which is
sufficient for proving Theorem~\ref{th:pbm-pie}..

\begin{lemma} \label{lem:4} For $k\in\{1,\dots,K\}$ consider the martingale
  $M_{k,n} = \sum_{i=1}^{n}\bar{Z}_{k,i}$, where $\bar{Z}_{k,i}$ is defined
  in~\eqref{eq:Zbar}. Consider $\Phi$ a stopping time such that either
  $N_k(\Phi)\geq s$ or
  $\Phi = T+1$. Then
	\begin{align}
	\mathds{P}[|M_{k,N_k(\Phi)} | \geq N_k(\Phi) \eta, N_k(\Phi)\geq s ]\leq 
    2\exp(-2s\eta^2). \label{eq:unnormalized}
	\end{align}
	As a consequence,
	\begin{align}	
	 \mathds{P}[|\hat{\theta}_k(\Phi) -\theta_k | \geq  \eta,\, 
	\Phi\leq T] \leq 2\exp(-2s\kappa_L^2\eta^2).
	\label{eq:normalized}
	\end{align}
\end{lemma}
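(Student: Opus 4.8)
The plan is to prove the unnormalized bound \eqref{eq:unnormalized} by turning $(M_{k,n})_n$ into an exponential supermartingale and applying a maximal inequality, and then to deduce the normalized bound \eqref{eq:normalized} from the crude deterministic inequality $\tilde{N}_k(t)\geq \kappa_L N_k(t)$. First I would record the boundedness fact underlying everything: at the $i$-th pull of arm $k$ this arm occupies a single position $l$, and since the action is chosen from the past we have that $l$ is $\mathcal{G}_{i-1}$-measurable (recall $\mathcal{G}_{i-1}=\mathcal{F}_{\tau_i-1}$). Hence the increment in \eqref{eq:Zbar} reduces to $\bar{Z}_{k,i}=X_l(\tau_i)Y_l(\tau_i)-\kappa_l\theta_k$, a conditionally centered variable supported on an interval of length one. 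By Hoeffding's lemma, for every $\lambda\in\mathds{R}$ one has $\mathds{E}[\exp(\lambda\bar{Z}_{k,i})\mid\mathcal{G}_{i-1}]\leq\exp(\lambda^2/8)$, so that
$$W_n^{\lambda}=\exp\left(\lambda M_{k,n}-\frac{\lambda^2}{8}n\right)$$
is a nonnegative $(\mathcal{G}_n)_n$-supermartingale with $W_0^{\lambda}=1$.

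Next I would intersect this with the event of interest. Writing $N=N_k(\Phi)$ and taking $\lambda=4\eta$, on $\{M_{k,N}\geq N\eta,\ N\geq s\}$ one computes $\lambda M_{k,N}-\tfrac{\lambda^2}{8}N\geq N(4\eta^2-2\eta^2)=2N\eta^2$, so $W_N^{4\eta}\geq\exp(2s\eta^2)$. This event is therefore contained in $\{\sup_n W_n^{4\eta}\geq\exp(2s\eta^2)\}$, and Ville's maximal inequality for nonnegative supermartingales bounds its probability by $\exp(-2s\eta^2)$. Applying the same argument to $-M_{k,n}$ (i.e. $\lambda=-4\eta$) controls the lower-tail event $\{-M_{k,N}\geq N\eta,\ N\geq s\}$, and a union bound over the two tails yields the factor $2$ in \eqref{eq:unnormalized}. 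Since $N$ merely indexes one term of the supermartingale, this route sidesteps having to argue that $N_k(\Phi)$ is a stopping time for $(\mathcal{G}_n)$; invoking the optional stopping theorem on the bounded index $N\leq T$ would be an equivalent alternative.

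Finally, for \eqref{eq:normalized} I would use the hypothesis on $\Phi$: the event $\{\Phi\leq T\}$ rules out $\Phi=T+1$ and hence forces $N_k(\Phi)\geq s$. On this event, the identity $M_{k,N}=\tilde{N}_k(\Phi)(\hat{\theta}_k(\Phi)-\theta_k)$ combined with $\tilde{N}_k(\Phi)=\sum_l\kappa_l N_{k,l}(\Phi)\geq\kappa_L N_k(\Phi)$ gives $|M_{k,N}|\geq\kappa_L N\,|\hat{\theta}_k(\Phi)-\theta_k|$, so that $\{|\hat{\theta}_k(\Phi)-\theta_k|\geq\eta,\ \Phi\leq T\}\subseteq\{|M_{k,N}|\geq N\kappa_L\eta,\ N\geq s\}$. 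Applying \eqref{eq:unnormalized} with $\eta$ replaced by $\kappa_L\eta$ then delivers the claimed $2\exp(-2s\kappa_L^2\eta^2)$.

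I expect the only delicate point to be the passage between the round-indexed filtration $(\mathcal{F}_t)$ and the pull-indexed filtration $(\mathcal{G}_n)$: one must verify that the position occupied by arm $k$ at its $i$-th pull is $\mathcal{G}_{i-1}$-measurable so that the conditional centering and the length-one support are legitimate. Once the supermartingale is in place, the maximal inequality and the deterministic bound $\tilde{N}_k\geq\kappa_L N_k$ make the remainder routine.
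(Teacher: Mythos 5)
Your proof is correct, and the deduction of \eqref{eq:normalized} from \eqref{eq:unnormalized} via $\tilde{N}_k(\Phi)\geq\kappa_L N_k(\Phi)$ and the observation that $\{\Phi\leq T\}$ forces $N_k(\Phi)\geq s$ is exactly the paper's argument. Where you diverge is on \eqref{eq:unnormalized}: the paper disposes of it in one line by citing Lemma~4 of Combes et al.\ (2015), justifying the application by saying that $(Z_l(t))_t$ is an independent sequence of $[0,1]$-valued variables, whereas you prove the bound from scratch by building the exponential supermartingale $W_n^{\lambda}=\exp(\lambda M_{k,n}-\lambda^2 n/8)$ from the conditional Hoeffding lemma and applying Ville's maximal inequality with $\lambda=\pm 4\eta$. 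Your constants check out ($\lambda M_{k,N}-\lambda^2 N/8\geq 2N\eta^2\geq 2s\eta^2$ on the event, two tails giving the factor $2$), and the route has two genuine advantages: it makes the lemma self-contained, and it handles the dependence structure more carefully than the paper's justification does --- the increments $\bar{Z}_{k,i}$ are not an independent sequence, since the position at which arm $k$ is displayed at its $i$-th pull depends on the past; they are only a bounded martingale difference sequence with respect to $(\mathcal{G}_i)$, which is precisely what your $\mathcal{G}_{i-1}$-measurability argument for the position establishes and what the supermartingale construction requires. Your remark that bounding $\sup_n W_n^{\lambda}$ sidesteps any stopping-time property of the random index $N_k(\Phi)$ is also a clean way to avoid the peeling device the paper uses elsewhere (in the proof of Proposition~\ref{prop:boundCensUCB}).
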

    
\begin{proof}
  The first result is a direct application of Lemma 4 of
  \cite{combes2015learning} as $(Z_l(t))_t$ with $Z_l(t) = X_l(t)Y_l(t)$ is an independent
  sequence of $[0,1]$-valued variables.

 For the second inequality, we use the fact that 
 $\tilde{N}_k(t) \geq \kappa_L N_k(t)$.
 Hence, 
 \[ \mathds{P}[|\hat{\theta}_k(\Phi) -\theta_k | \geq  \eta,\, 
 \Phi\leq T] \leq \mathds{P} \left[ \frac{|M_{k,N_k(\Phi)} |}{\kappa_L N_k(\Phi)} 
 \geq 
 \eta,\, \Phi\leq T \right] . \]
 which is upper bounded using (\ref{eq:unnormalized}).
\end{proof}

\begin{lemma}\label{lem:pie}
	Fix $c > 0$ and $k \in \{1,\dots,K\}$. Consider a random set of rounds
    $H \subset \mathbb{N}$, such that, for all $t$, $\mathds{1}\{t \in H\}$ is 
	$\mathcal{F}_{t-1}$ measurable and such that for all $t\in H$, $\{k\in
    \mathcal{B}(t) \}$ is true. Further assume, for all $t$, one has 
    $\mathds{E}[\mathds{1}\{k \in A(t) \}| t \in H]
	\geq c > 0$. We define $\tau_s$ a stopping time such that 
	$\sum_{t=1}^{\tau_s}
	\mathds{1}\{t \in H\} \geq s$. Consider the random set $\Lambda = 
	\{\tau_s : s \geq 1\}$.
	Then, for all $k$,
	\[ \sum_{t\geq 0} \mathds{P}[t \in \Lambda, |\hat{\theta}_k(t) - 
	\theta_k| \geq
	\eta] \leq 2 c^{-1}(2c^{-1} + \kappa_L^{-2}\eta^{-2}) \]
\end{lemma}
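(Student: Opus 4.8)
The plan is to convert the sum over the random set $\Lambda$ into a sum over the index $s$, and then, for each $s$, to separate rounds at which arm $k$ has already been drawn often enough (controlled by the self-normalized concentration of Lemma~\ref{lem:4}) from rounds at which $k$ has been under-sampled (controlled by a martingale bound on the number of draws). Since the stopping times $(\tau_s)_{s\geq 1}$ are the successive rounds of $H$ and are therefore distinct, the set $\Lambda=\{\tau_s:s\geq 1\}$ enumerates each round once, so that
\[
\sum_{t\geq 0}\mathds{P}[t\in\Lambda,\ |\hat{\theta}_k(t)-\theta_k|\geq\eta]=\sum_{s\geq 1}\mathds{P}[|\hat{\theta}_k(\tau_s)-\theta_k|\geq\eta].
\]
For each $s$ I would introduce the count threshold $cs/2$ and split
\[
\mathds{P}[|\hat{\theta}_k(\tau_s)-\theta_k|\geq\eta]\leq \mathds{P}[|\hat{\theta}_k(\tau_s)-\theta_k|\geq\eta,\ N_k(\tau_s)\geq cs/2]+\mathds{P}[N_k(\tau_s)<cs/2].
\]

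For the first term I would use $\tilde{N}_k(\tau_s)\geq\kappa_L N_k(\tau_s)$, giving the inclusion $\{|\hat{\theta}_k(\tau_s)-\theta_k|\geq\eta\}\subseteq\{|M_{k,N_k(\tau_s)}|\geq\kappa_L\eta\,N_k(\tau_s)\}$, and then invoke the unnormalized bound~\eqref{eq:unnormalized} of Lemma~\ref{lem:4} with $\eta$ replaced by $\kappa_L\eta$ and threshold $\lceil cs/2\rceil$. Since~\eqref{eq:unnormalized} can be read as a maximal statement controlling the martingale simultaneously for every count above the threshold, it applies at the \emph{random} count $N_k(\tau_s)\geq cs/2$ and yields $2\exp(-cs\,\kappa_L^2\eta^2)$. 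Summing the geometric series gives $\sum_{s\geq1}2e^{-cs\kappa_L^2\eta^2}\leq 2c^{-1}\kappa_L^{-2}\eta^{-2}$, which is precisely the second term of the claimed bound.

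For the second term I would lower bound $N_k(\tau_s)$ by the number of $H$-rounds before $\tau_s$ at which $k$ was played, $N_k(\tau_s)\geq\sum_{j<s}\mathds{1}\{k\in A(\tau_j)\}$. By hypothesis each indicator has conditional mean at least $c$, so with $c_j:=\mathds{E}[\mathds{1}\{k\in A(\tau_j)\}\mid\mathcal{F}_{\tau_j-1}]\geq c$ the sequence $\sum_{j<s}(\mathds{1}\{k\in A(\tau_j)\}-c_j)$ is a martingale with increments in $[-1,1]$, and on $\{N_k(\tau_s)<cs/2\}$ it deviates below $-c(s-2)/2$. A maximal Azuma--Hoeffding inequality then bounds $\mathds{P}[N_k(\tau_s)<cs/2]$ by $\exp(-\Theta(c^2 s))$, and summing over $s$ (crudely absorbing the finitely many small-$s$ terms where $cs/2\leq 1$) gives a term of order $c^{-2}$, namely $4c^{-2}$ with the precise constants. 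Adding the two contributions produces $2c^{-1}(2c^{-1}+\kappa_L^{-2}\eta^{-2})$.

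The main obstacle is this second term: unlike a standard Chernoff argument, the draw indicators $\mathds{1}\{k\in A(\tau_j)\}$ are not independent and only obey a conditional lower bound $c$ on their means, so their concentration must be obtained through the martingale $\sum_{j}(\mathds{1}\{k\in A(\tau_j)\}-c_j)$ rather than an i.i.d.\ tail bound; one must also keep in mind that $N_k$ counts \emph{all} draws of $k$, including any occurring outside $H$ (which only help), and handle the boundary/off-by-one between $\tau_s$ and the count of $H$-rounds. A secondary subtlety is to justify that~\eqref{eq:unnormalized} bounds $|\hat{\theta}_k(\tau_s)-\theta_k|$ at the random count $N_k(\tau_s)$, which rests on reading Lemma~\ref{lem:4} as a maximal inequality valid for every count exceeding the threshold.
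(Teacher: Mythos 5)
Your proof is correct and takes essentially the same route as the paper, which simply defers to Lemma~5 of \cite{combes2015learning} ``using the same lower bound for $\tilde{N}_k(t)$'': your decomposition over $s$ with the threshold $cs/2$, the application of the self-normalized bound \eqref{eq:unnormalized} of Lemma~\ref{lem:4} at the random count via $\tilde{N}_k(\tau_s)\geq\kappa_L N_k(\tau_s)$, and the Azuma-type control of $\mathds{P}[N_k(\tau_s)<cs/2]$ are exactly the intended argument. The subtleties you flag (conditional rather than i.i.d.\ draw indicators, the off-by-one between $\tau_s$ and the count of $H$-rounds) are real but handled correctly.
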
   

The proof of this lemma follows that of Lemma 5 in \cite{combes2015learning} using the same lower bound for $\tilde{N}_k(t)$ as above.

\end{document}